\newcommand{\argmax}{\mathop{\rm arg~max}\limits}
\newcommand{\argmin}{\mathop{\rm arg~min}\limits}
\def\Ndelta{N_{\scalebox{0.5}{$\Delta$}}}
\def\Td#1{T_{\scalebox{0.5}{$#1$}}}
\def\Tdd#1{T'_{\scalebox{0.5}{$#1$}}}
\def\E{\mathbb{E}}
\def\P{\mathbb{P}}
\def\Ev{\mathcal{E}}
\def\UB#1#2{\overline{\mu}_{#1}(#2)}
\def\LB#1#2{\underline{\mu}_{#1}(#2)}
\def\UBd#1#2{\overline{\mu}'_{#1}(#2)}
\def\LBd#1#2{\underline{\mu}'_{#1}(#2)}
\def\e{\mathrm{e}}
\def\d{\mathrm{d}}
\def\ndelta{n_{\scalebox{0.5}{$\Delta,\delta$}}}
\def\UD{\underline{\Delta}}
\def\baec#1#2#3{\mathrm{BAEC}[#1,#2,#3]}
\begin{document}

\title{A Bad Arm Existence Checking Problem
}
\subtitle{How to Utilize Asymmetric Problem Structure?}


\author{Koji Tabata         \and
  Atsuyoshi Nakamura \and
  Junya Honda \and
  Tamiki Komatsuzaki
}


\institute{Koji Tabata \at
              Research Center of Mathematics for Social Creativity, Research Institute for Electronic Science,
              Hokkaido University, Kita 20 Nishi 10, Kita-ku, Sapporo 001-0020, Japan \\
              \email{ktabata@es.hokudai.ac.jp}           
              \and
              Atsuyoshi Nakamura \at
              Graduate School of Information Science and Technology,
              Hokkaido University, Kita 14, Nishi 9, Kita-ku, Sapporo 060-0814, Hokkaido, Japan
              \at 
              Research Center of Mathematics for Social Creativity, Research Institute for Electronic Science,
              Hokkaido University, Kita 20 Nishi 10, Kita-ku, Sapporo 001-0020, Japan \\
              \email{atsu@ist.hokudai.ac.jp}           
           \and
           Junya Honda \at
           University of Tokyo, 5-1-5 Kashiwanoha, Kashiwa-shi, Chiba 277-8561, Japan\\
           \email{jhonda@k.u-tokyo.ac.jp}           
                         \and
              Tamiki Komatsuzaki \at
              Research Center of Mathematics for Social Creativity, Research Institute for Electronic Science,
              Hokkaido University, Kita 20 Nishi 10, Kita-ku, Sapporo 001-0020, Japan 
              \at
              Institute for Chemical Reaction Design and Discovery (WPI-ICReDD), 
              Hokkaido University, Kita 21 Nishi 10, Kita-ku, Sapporo, Hokkaido 001-0021, Japan
              \at
              École Normale Supérieure de Lyon, 46 allée d'Italie, 69007 Lyon, France\\
              \email{tamiki@es.hokudai.ac.jp}           
}

\date{Received: date / Accepted: date}

\maketitle

\begin{abstract}
  We study a \emph{bad arm existing checking problem} in which a player's task is to judge whether a \emph{positive} arm exists or not
  among given $K$ arms by drawing as small number of arms as possible. Here, an arm is positive if its expected loss suffered by
  drawing the arm is at least a given threshold. This problem is a formalization of diagnosis of disease or machine failure.
  An interesting structure of this problem is the asymmetry of \emph{positive} and \emph{negative} (non-positive) arms' roles; finding one positive arm is enough to judge existence while all the arms must be discriminated as negative to judge non-existence.
  We propose an algorithms with \emph{arm selection policy} (policy to determine the next arm to draw) and \emph{stopping condition} (condition to stop drawing arms) utilizing this asymmetric problem structure
  and prove its effectiveness theoretically and empirically.
  
\keywords{online learning \and bandit problem \and best arm identification}
\end{abstract}
\section{Introduction}

In the diagnosis of disease or machine failure,
the test object is judged as ``positive''
if some anomaly is detected in at least one of many parts.
In the case that the purpose of the diagnosis is the classification into two classes,
``positive'' and ``negative'', then the diagnosis can be terminated right after
the first anomaly part has been detected.
Thus, fast diagnosis will be realized if one of anomaly parts can be detected as fast as possible
in positive case.

The fast diagnosis of anomaly detection is particularly important in the case that the judgment is done based on
measurements using a costly or slow device. For example, a Raman spectral image has been known to be useful for cancer diagnosis \citep{HV2009}, but its acquisition time is 1--10 seconds per point (pixel)\footnote{http://www.horiba.com/en\_en/raman-imaging-and-spectroscopy-recording-spectral-images-profiles/} resulting in an order of hours or days per one image (typically 10,000--40,000 pixels), so it is critical to measure only the points necessary for cancer diagnosis in order to achieve fast measurement. A Raman spectrum of each point is believed to be converted to a \emph{cancer index}, which indicates how likely the point is inside a cancer cell, and we can judge the existence of cancer cells from the existence of area with a high cancer index.

The above cancer cell existence checking problem can be formulated as the problem of checking the existence of a grid
with a high cancer index for a given area that is divided into grids.
By regarding each grid as an arm, we formalize this problem as a loss-version of a stochastic $K$-armed bandit problem in which the existence of \emph{positive} arms is checked by drawing arms and suffering losses for the drawn arms.
In our formulation, given an acceptable error rate $0<\delta<1/2$ and two thresholds $\theta_L$ and $\theta_U$ with $0<\theta_L<\theta_U<1$ and $\Delta=\theta_U-\theta_L$,
a player is required to, with probability at least $1-\delta$,  answer ``positive'' if positive arms exist and
``negative'' if all the arms are \emph{negative}. Here,  
an arm is defined to be positive if its loss mean is at least $\theta_U$,
and defined to be negative if its loss mean is less than $\theta_L$.
We call player algorithms for this problem as \emph{$(\Delta,\delta)$-BAEC (Bad Arm Existence Checking) algorithms}.
The objective of this research is to design a $(\Delta,\delta)$-BAEC algorithm that minimizes the number of arm draws,
that is, an algorithm with the lowest sample complexity.
The problem of this objective is said to be a \emph{Bad Arm Existence Checking Problem}.

The bad arm existence checking problem is closely related to the 
\emph{thresholding bandit problem} \citep{LGC2016}, which is a kind of pure-exploration problem such as the \emph{best arm identification problem} \citep{EMM06,ABM2010}.
In the thresholding bandit problem, provided a threshold $\theta$ and a required precision $\epsilon>0$, the player's task is to classify each arm into positive (its loss mean is at least $\theta+\epsilon$) or negative (its loss mean is less than $\theta-\epsilon$) by drawing a fixed number of samples,
and his/her objective is to minimize the error probability, that is, the probability that positive (resp. negative) arms  are wrongly classified into negative (resp. positive).
Apart from whether fixed confidence (constraint on error probability to achieve) or fixed budget (constraint on the allowable number of draws), positive and negative arms are treated symmetrically in the thresholding bandit problem while
they are dealt with asymmetrically in our problem setting; judgment of one positive arm existence is enough for positive conclusion though
all the arms must be judged as negative for negative conclusion.
This asymmetry has also been considered in the \emph{good arm identification problem} \citep{KHSMNS2017},
and our problem can be seen as its specialized version.
In their setting, the player's task is to output all the arms of above-threshold means with probability at least $1-\delta$, and his/her objective is to minimize the number of drawn samples until
$\lambda$ arms are outputted as arms with above-threshold means for a given $\lambda$.
In the case with $\lambda=1$, algorithms for their problem can be used to solve our existence checking problem.
Their proposed algorithm, however, does not utilize the asymmetric problem structure.
In this paper, we address the issue of how to utilize the structure.

We consider algorithms that are mainly composed of an \emph{arm-selection policy} and a \emph{stopping condition}. The arm-selection policy decides which arm is drawn at each time based on loss samples obtained so far. The stopping condition is used to judge whether the number of loss samples of each arm is enough to discriminate between positive and negative arms. If the currently drawn arm is judged as a positive arm, then the algorithms stop immediately by returning ``positive''. In the case that the arm is judged as a negative arm, the arm is removed from the set of positive-arm candidates, which is composed of all the arms initially,  and will not be drawn any more. If there remains no positive-arm candidate,
then the algorithms stop by returning ``negative''.

To utilize our asymmetric problem structure, we propose a stopping condition that uses $\Delta$-dependent \emph{asymmetric} confidence bounds of estimated loss means.
Here, asymmetric bounds mean that the width of the upper confidence interval is narrower than the width of the lower confidence interval,
and the algorithm using our stopping condition stops drawing each arm $i$ if its lower confidence bound of the estimated loss is at least $\theta_L$ or
its upper confidence bound is less than $\theta_U$.
As an arm selection policy, we propose policy $\mathrm{APT}_\mathrm{P}$ that is derived by modifying policy APT \citep{LGC2016} so as to favor arms with sample means larger than a single threshold $\theta$ (rather than arms with sample means closer to $\theta$ as the original APT does). Here, as the single threshold $\theta$ used by policy $\mathrm{APT}_\mathrm{P}$,
we use not the center between $\theta_L$ and $\theta_U$ but the value closer to $\theta_U$ by utilizing the asymmetric structure of our problem.

By using $\Delta$-dependent asymmetric confidence bounds as the stopping condition,
the worst-case bound on the number of samples for each arm is shown to be improved by $\Omega\left(\frac{1}{\Delta^2}\ln\frac{\sqrt{K}}{\Delta^2}\right)$ compared to the case using the conventional stopping condition of the successive elimination algorithm \citep{EMM06}.
Regarding the asymptotic behavior as $\delta\rightarrow 0$, the upper bound on the expected number of samples for our algorithm with arm selection policy $\mathrm{APT}_\mathrm{P}$ is proved to be almost optimal
when all the positive arms have the same loss mean,
which is the case that HDoC \citep{KHSMNS2017} does not perform well.
Note that HDoC is an algorithm for good arm identification that uses $\mathrm{UCB}$ \citep{ACF2002} as the arm selection policy. 
Our upper bound for $\mathrm{APT}_\mathrm{P}$ does not depend on the existence of near-optimal arms unlike that for $\mathrm{UCB}$.

The effectiveness of our stopping condition using the $\Delta$-dependent asymmetric confidence bounds is demonstrated in simulation experiments. The algorithm using our stopping condition stopped drawing an arm about two times faster than the algorithm using the conventional stopping condition when its loss mean is around the center of the thresholds.
Our algorithm with arm selection policy $\mathrm{APT}_\mathrm{P}$ always stopped faster than the algorithm using arm selection policy UCB \citep{ACF2002} like HDoC \citep{KHSMNS2017}, and our algorithm's stopping time was faster or comparable to the stopping time of the algorithm using arm selection policy LUCB \citep{KTAS2012} in our simulations
using Bernoulli loss distribution with synthetically generated means and means generated from a real-world dataset.

\section{Preliminary}
For given thresholds $0<\theta_L<\theta_U<1$, consider a following bandit problem.
Let $K(\geq 2)$ be the number of arms, and at each time $t=1,2,\dots$, a player draws arm $i_t\in \{1,\dots,K\}$.
For $i\in \{1,\dots,K\}$, $X_i(n)\in [0,1]$ denotes the loss for the $n$th draw of arm $i$, where
$X_i(1),X_i(2),\dots$ are a sequence of i.i.d. random variables generated according to a probability distribution $\nu_i$
with mean $\mu_i\in [0,1]$. We assume independence between $\{X_i(t)\}_{t=1}^{\infty}$ and $\{X_j(t)\}_{t=1}^{\infty}$ for any $i,j\in \{1,\dots,K\}$ with $i\neq j$.
For a distribution set ${\bm \nu}=\{\nu_i\}$ of $K$ arms, $\mathbb{E}_{\bm \nu}$ and $\mathbb{P}_{\bm \nu}$
denote the expectation and the probability under $\bm \nu$, respectively, and we omit
the subscript $\bm \nu$ if it is trivial from the context.
Without loss of generality, we can assume that $\mu_1\geq \cdots\geq \mu_K$ and the player does not know this ordering.
Let $n_i(t)$ denote the number of draws of arm $i$ right before the beginning of the round at time $t$. 
After the player observed the loss $X_{i_t}(n_{i_t}(t)+1)$, he/she can choose stopping or continuing to play at time $t+1$.
Let $T$ denote the stopping time.

The player's objective is to check the existence of some \emph{positive} arm(s) with as small a stopping time $T$ as possible.
Here, arm $i$ is said to be \emph{positive} if $\mu_i\geq\theta_U$, \emph{negative} if $\mu_i<\theta_L$, and \emph{neutral}
otherwise.
We consider a \emph{bad arm existence checking problem}, which is a problem of developing
algorithms that satisfy the following definition with as small number of arm draws as possible.

\begin{definition}\label{def:delta-delta-alg}
  Given\footnote{Thresholds $\theta_L$ and $\theta_U$ correspond to $\theta-\epsilon$ and $\theta+\epsilon$, respectively, in thresholding bandit problem \citep{LGC2016} with one threshold $\theta$ and precision $\epsilon$, but we use the two thresholds due to convenience for our asymmetric problem structure.} $0<\theta_L<\theta_U<1$ with $\Delta=\theta_U-\theta_L$ and $\delta\in (0,1/2)$,
    consider a game that repeats choosing one of $K$ arms and observing its loss at each time $t$.
    A player algorithm for this game is said to be a {\it $(\Delta,\delta)$-BAEC (Bad Arm Existence Checking) algorithm}
    if it stops in a finite time outputting ``positive'' with probability at least $1-\delta$ if at least one arm is positive, and ``negative'' with probability at least $1-\delta$ if all the arms are negative.
  \end{definition}

Note that the definition of BAEC algorithms requires nothing when arm $1$ is neutral.
Table~\ref{notationlist} is the table of notations used throughout this paper.

\begin{table}[tbh]
\caption{Notation List}\label{notationlist}
\hrule
  \begin{tabular}{@{}r@{}c@{}lr@{}c@{}l}
    $K$&: & \multicolumn{4}{@{}l}{Number of arms.}\\
    $\theta_U$, $\theta_L$ &: & \multicolumn{4}{@{}l}{Upper and lower thresholds. ($0< \theta_L< \theta_U<1$)}\\
    $\Delta$ &: & \multicolumn{4}{@{}l}{Gray zone width $(\Delta=\theta_U-\theta_L)$.}\\
    $\alpha$ & $=$ & \multicolumn{4}{@{}l}{$\sqrt{1+\frac{\ln K}{\ln\frac{\Ndelta}{\delta}}}$}\\
    $\theta$&$=$& \multicolumn{4}{@{}l}{$\theta_U-\frac{1}{1+\alpha}\Delta=\theta_L+\frac{\alpha}{1+\alpha}\Delta$}\\
      $\delta$&: & \multicolumn{4}{@{}l}{Acceptable error rate. ($\delta\in (0,1/2)$)}\\
      $\nu_i$&: & \multicolumn{4}{@{}l}{Loss distribution of arm $i$.}\\
      $\bm \nu$&: & \multicolumn{4}{@{}l}{Set $\{\nu_i\}$ of loss distributions of $K$ arms.}\\
    $\mu_i$&: & \multicolumn{4}{@{}l}{Loss mean (expected loss) of arm $i$. ($\mu_i\in [0,1]$)}\\
      && \multicolumn{4}{@{}l}{Arm $i$ is $\begin{cases}
                         \text{positive} & \text{if }\mu_i\geq \theta_U,\\ 
                         \text{neutral} & \text{if }\theta_L \leq  \mu_i< \theta_U,\\
                         \text{negative} & \text{if }\mu_i< \theta_L.
      \end{cases}$}\\
      $\E_{\bm \nu}$&: & \multicolumn{4}{@{}l}{Expectation of some random variable w.r.t. $\bm \nu$.}\\
      $\P_{\bm \nu}$&: & \multicolumn{4}{@{}l}{Probability of some event w.r.t. $\bm \nu$.}\\
       &&  \multicolumn{4}{@{}l}{($\bm \nu$ is omitted when it is trivial from the context.)}\\
      $i_t$&: & \multicolumn{4}{@{}l}{Drawn arm at time $t$.}\\
      $X_i(n)$&: & \multicolumn{4}{@{}l}{Loss suffered by the $n$th draw of arm $i$.}\\
      $n_i(t)$&: & \multicolumn{4}{@{}l}{Number of draws of arm $i$ at the beginning of the round at time $t$.}\\
      $T$&: & \multicolumn{4}{@{}l}{Stopping time.}\\

    $\hat{\mu}_i(n)$ & $=$ & $\frac{1}{n}\sum_{s=1}^nX_i(s)$ &
    $\Ndelta$ & $=$ & $\left\lceil\frac{2\e}{(\e-1)\Delta^2}\ln\frac{2\sqrt{K}}{\Delta^2\delta}\right\rceil$\\
  $\LB{i}{n}$ & $=$ &  $\hat{\mu}_i(n) - \sqrt{\frac{1}{2 n} \ln{\frac{K\Ndelta}{\delta}} }$ &
  $\UB{i}{n}$ & $=$ &  $\hat{\mu}_i(n) + \sqrt{\frac{1}{2 n} \ln{\frac{\Ndelta}{\delta}} }$\\
  $\Td{\Delta}$ & $=$ & $\left\lceil\frac{2}{\Delta^2}\ln\frac{\sqrt{K}\Ndelta}{\delta}\right\rceil$ &
  $\Delta_i$ &$=$& $\begin{cases}
    \mu_i-\theta_L & (\mu_i\geq \theta)\\
    \theta_U-\mu_i & (\mu_i<\theta)
    \end{cases}$\\
  $\Td{\Delta_i}$&$=$&\multicolumn{4}{@{}l}{$\left\lceil\frac{2}{\Delta_i^2}\ln\frac{\sqrt{K}\Ndelta}{\delta}\right\rceil$}\\
    $\tau_i$ & : & \multicolumn{4}{p{8.5cm}}{Number $n$ of draws of arm $i$ until
  algorithm~$\baec{\ast}{\underline{\mu}}{\overline{\mu}}$'s stopping condition ($\LB{i}{n}\geq \theta_L$ or $\UB{i}{n}<\theta_U$) is satisfied.}\\
  $\hat{i}_1$ & : & \multicolumn{4}{@{}l}{First arm that is drawn $\tau_i$ times by algorithm~$\baec{\mathrm{APT}_\mathrm{P}}{\underline{\mu}}{\overline{\mu}}$}\\
  $\Ev^+$ & $=$ &$\bigcup_{i:\mu_i\geq \theta_U}\bigcap_{n=1}^{\Td{\Delta}}\Bigl\{\UB{i}{n}\geq \mu_i\Bigr\}$ &
  $\Ev^-$ & $=$ &$\bigcap_{i=1}^K\bigcap_{n=1}^{\Td{\Delta}}\Bigl\{\LB{i}{n}< \mu_i\Bigr\}$\\
  $\UD_i$&$=$&\multicolumn{4}{@{}l}{$|\mu_i-\theta|$}\\
        $m$ & : &\multicolumn{4}{@{}l}{Number of arms $i$ with $\mu_i\geq \theta$.}\\
  $\Ev_i^\mathrm{POS}$ & : &\multicolumn{4}{@{}l}{Event that arm $i$ is judged as positive.}\\
  $\Delta_{1i}$ & $=$ &\multicolumn{4}{@{}l}{$\mu_1-\mu_i$}
  \end{tabular}  
  \hrule
  \end{table}

\section{Sample Complexity Lower Bound}

In this section, we derive a lower bound on the expected number of samples needed for a $(\Delta,\delta)$-BAEC algorithm.
The derived lower bound is used to evaluate algorithm's sample complexity upper bound in Sec.~\ref{sec:APT-BCbound} and Sec.~\ref{sec:CompWithUCB}.

We let $\mathrm{KL}(\nu,\nu')$ denote Kullback-Leibler divergence
from distribution $\nu'$ to $\nu$
and define $d(x,y)$ as
\[
d(x,y)=x\ln\frac{x}{y}+(1-x)\ln\frac{1-x}{1-y}.
\]
Note that $\mathrm{KL}(\nu,\nu')=d(\mu_i,\mu'_i)$ holds if $\nu$ and $\nu'$ are Bernoulli distributions with means $\mu_i$ and $\mu'_i$, respectively.

\begin{theorem}\label{thlowerboud}
  Let $\{\nu_i\}$ be a set of Bernoulli distributions with means $\{\mu_i\}$.
Then, the stopping time $T$ of any $(\Delta,\delta)$-BAEC algorithm with $\theta_U$ and $\theta_L$ is bounded as
  \begin{align}
    \E(T)>\frac{1-2\delta}{d(\mu_1,\theta_L)}\ln\frac{1-\delta}{\delta} \label{lower-bound-positive}
  \end{align}
 if some arm is positive,
and 
  \begin{align}
    \E(T)> \sum_{i=1}^K\frac{1-2\delta}{d(\mu_i,\theta_U)}\ln\frac{1-\delta}{\delta}\label{lower-bound-negative}
  \end{align}
if all the arms are negative.
\end{theorem}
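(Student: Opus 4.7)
The plan is to apply the standard change-of-measure / transportation lemma (e.g., Kaufmann--Cappé--Garivier) that for any two bandit models $\bm\nu,\bm\nu'$ and any event $\mathcal{A}$ measurable with respect to the history up to the stopping time,
\[
\sum_{i=1}^K \mathbb{E}_{\bm\nu}[n_i(T)]\,\mathrm{KL}(\nu_i,\nu_i') \;\geq\; d\bigl(\mathbb{P}_{\bm\nu}(\mathcal{A}),\mathbb{P}_{\bm\nu'}(\mathcal{A})\bigr),
\]
combined with the elementary bound $d(1-\delta,\delta)=(1-2\delta)\ln\frac{1-\delta}{\delta}$. The BAEC guarantee lets us pin down the probability of the event ``algorithm outputs positive'' from below on positive instances and from above on negative instances, which is exactly what the transportation inequality needs.

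For the first bound I would fix a $\bm\nu$ with arm $1$ positive ($\mu_1\geq\theta_U$) and build an alternative $\bm\nu'$ that differs only in arm $1$, whose new Bernoulli mean $\mu_1'$ is strictly less than $\theta_L$. Under $\bm\nu'$ every arm is negative, so the BAEC property forces $\mathbb{P}_{\bm\nu'}(\text{output ``positive''})\leq\delta$, while $\mathbb{P}_{\bm\nu}(\text{output ``positive''})\geq 1-\delta$. Since $\bm\nu$ and $\bm\nu'$ agree on every arm except arm $1$, the transportation inequality reduces to
\[
\mathbb{E}_{\bm\nu}[n_1(T)]\cdot d(\mu_1,\mu_1') \;\geq\; (1-2\delta)\ln\tfrac{1-\delta}{\delta}.
\]
Taking $\mu_1'\uparrow\theta_L$ gives $d(\mu_1,\mu_1')\to d(\mu_1,\theta_L)$, and using $\mathbb{E}(T)\geq\mathbb{E}[n_1(T)]$ yields \eqref{lower-bound-positive}. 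The strict inequality comes from $\mu_1'<\theta_L$ being required (hence $d(\mu_1,\mu_1')>d(\mu_1,\theta_L)$ for every feasible alternative), together with the fact that $\mathbb{E}(T)\geq \mathbb{E}[n_1(T)]$ in itself is generally strict because other arms must be sampled at least once for the algorithm to be non-trivial.

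For the second bound I would start from $\bm\nu$ with all arms negative and, for each $i\in\{1,\dots,K\}$, construct an alternative $\bm\nu^{(i)}$ that perturbs only arm $i$ to a Bernoulli mean $\mu_i'\geq\theta_U$. In $\bm\nu^{(i)}$ arm $i$ is positive, so $\mathbb{P}_{\bm\nu^{(i)}}(\text{output ``positive''})\geq 1-\delta$, while under $\bm\nu$ the same probability is $\leq\delta$. The transportation inequality therefore gives $\mathbb{E}_{\bm\nu}[n_i(T)]\cdot d(\mu_i,\mu_i')\geq(1-2\delta)\ln\frac{1-\delta}{\delta}$, and letting $\mu_i'\downarrow\theta_U$ produces $\mathbb{E}_{\bm\nu}[n_i(T)]>\frac{(1-2\delta)}{d(\mu_i,\theta_U)}\ln\frac{1-\delta}{\delta}$. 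Summing over $i$ and using $\mathbb{E}(T)=\sum_i\mathbb{E}[n_i(T)]$ gives \eqref{lower-bound-negative}.

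The calculations are routine once the transportation inequality is invoked, so the only subtle point I anticipate is handling the strict inequality cleanly: a legitimate alternative requires $\mu_1'<\theta_L$ (respectively $\mu_i'>\theta_U$), not equality, so I would carry the inequality $d(\mu_1,\mu_1')>d(\mu_1,\theta_L)$ through and then pass to the limit, or equivalently note that by continuity the strict inequality is preserved at the limit because of the $(1-2\delta)\ln\frac{1-\delta}{\delta}$ right-hand side being strictly positive for $\delta<1/2$. Everything else is direct arithmetic: KCG gives the master inequality, the binary KL value $d(1-\delta,\delta)=(1-2\delta)\ln\frac{1-\delta}{\delta}$ is a one-line computation, and the Bernoulli assumption ensures $\mathrm{KL}(\nu_i,\nu_i')=d(\mu_i,\mu_i')$.
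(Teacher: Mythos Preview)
Your argument for the negative case \eqref{lower-bound-negative} is correct and matches the paper: perturb one arm at a time above $\theta_U$, apply the transportation lemma, sum over $i$.

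Your argument for the positive case \eqref{lower-bound-positive} has a genuine gap. You write ``Under $\bm\nu'$ every arm is negative'', but this is not guaranteed by your construction. The hypothesis is only that \emph{some} arm is positive; there may be several positive arms, and there may be neutral arms ($\theta_L\le\mu_i<\theta_U$). If you modify only arm~$1$, the resulting $\bm\nu'$ can still contain positive arms (in which case $\mathbb{P}_{\bm\nu'}(\text{output ``positive''})\ge 1-\delta$, the wrong direction) or neutral arms (in which case the BAEC definition imposes no constraint on the output at all). Either way you cannot conclude $\mathbb{P}_{\bm\nu'}(\text{output ``positive''})\le\delta$, and the transportation inequality yields nothing useful.

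The paper repairs this by changing \emph{all} non-negative arms simultaneously: let $k$ be the number of arms with $\mu_i\ge\theta_L$ and set $\mu_i'=\theta_L-\epsilon$ for every $i\le k$, leaving the already-negative arms untouched. Now $\bm\nu'$ is genuinely all-negative, the BAEC guarantee applies, and the transportation lemma gives
\[
\sum_{i=1}^{k}\E_{\bm\nu}[n_i(T)]\,d(\mu_i,\theta_L-\epsilon)\;>\;d(1-\delta,\delta).
\]
Since $d(\mu_i,\theta_L-\epsilon)\le d(\mu_1,\theta_L-\epsilon)$ for $i\le k$ (arm~$1$ has the largest mean), one factors out $d(\mu_1,\theta_L-\epsilon)$ and bounds $\sum_{i\le k}\E[n_i(T)]\le\E[T]$, then lets $\epsilon\to 0$. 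Your single-arm perturbation would only recover this in the special case where arm~$1$ is the unique non-negative arm.
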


\begin{proof}
See Appendix~\ref{appendix:theorem1}.
\hspace*{\fill}$\Box$\end{proof}

\begin{remark}
Identification is not needed for checking existence, however, in terms of asymptotic behavior as $\delta\rightarrow +0$,
the shown expected sample complexity lower bounds of both the tasks are the same;
$\lim_{\delta\rightarrow +0}\E(T)/\ln (1/\delta)\geq 1/d(\mu_1,\theta_L)$ for both the tasks in the case with some positive arms.
The bounds are tight considering the shown upper bounds, 
so the bad arm existence checking is not more difficult than the good arm identification \citep{KHSMNS2017} with respect to asymptotic behavior as $\delta\rightarrow +0$. 
\end{remark}

\section{Algorithm}

\begin{algorithm}[tb]
\caption{$\baec{\mathrm{ASP}}{\mathrm{LB}}{\mathrm{UB}}$}
\label{alg:BAEC}
\footnotesize
  \textbf{Parameter Function:}\\
$\phantom{\text{\textbf{Input:}}}$  \begin{minipage}[t]{11cm}
    $\mathrm{ASP}(t,i)$: index value of arm $i$ at time $t$ for arm selection\\
    $\mathrm{LB}(t)$, $\mathrm{UB}(t)$: lower and upper confidence bounds of arm $i_t$'s estimated loss mean
  \end{minipage}\\
  \textbf{Input:}
  \begin{minipage}[t]{7cm}
    $K$: the number of arms\\
    $0<\theta_L<\theta_L<1$: thresholds with $\Delta=\theta_U-\theta_L$\\
    $\delta\in (0,1/2)$: acceptable error rate
  \end{minipage}
\begin{algorithmic}[1]
\State $A_1 \gets \{1, 2, \ldots, K\}$, $\Ndelta \gets \left\lceil\frac{2e}{(e-1) \Delta^2} \ln{\frac{2\sqrt{K}}{\delta \Delta^2}}\right\rceil $
\For{$i \in A_1$}
	\State $n_i(1) \gets 0$, $\hat{\mu}_i(0) \gets \theta$
\EndFor
\State $t \gets 1$
\While{$A_t\ne \emptyset$}
	\State $i_t \gets \argmax_{i\in A_t}\mathrm{ASP}(t,i)$ \label{alg:arm-selection}
	\State $n_i(t+1) \gets \begin{cases}
          n_i(t) + 1 & (i=i_t)\\
          n_i(t) & (i\neq i_t)
        \end{cases}$
       	\State Draw $i_t$ and suffer a loss $X_{i_t}(n_{i_t}(t+1))$. \label{alg:loss}
	\State $\hat{\mu}_{i_t}(n_{i_t}(t+1)) \gets \frac{\hat{\mu}_{i_t}(n_{i_t}(t)) \times n_{i_t}(t) + X_{i_t}(n_{i_t}(t+1))}{n_{i_t}(t+1)}$
	\If{$\mathrm{LB}(t)\geq \theta_L$} \label{alg:if}
        \State \Return \mbox{``positive''}  \Comment{Arm $i_t$ is judged as pos.} 
	\ElsIf{$\mathrm{UB}(t)<\theta_U$} \label{alg:elseif}
		\State $A_{t+1} \gets A_t \setminus \{i_t\}$ \Comment{Arm $i_t$ is judged as neg.} 
	\EndIf
        \State $t \gets t + 1$        
\EndWhile
\State \Return \mbox{``negative''}
\end{algorithmic}
\end{algorithm}

As $(\Delta,\delta)$-BAEC algorithms, we consider algorithm $\baec{\mathrm{ASP}}{\mathrm{LB}}{\mathrm{UB}}$ shown in Algorithm~\ref{alg:BAEC} that, at each time $t$,  chooses an arm $i_t$ from the set $A_t$ of positive-candidate arms by an \emph{arm-selection policy} $\mathrm{ASP}$
\[
i_t \gets \argmax_{i\in A_t} \mathrm{ASP}(t,i) 
\]
using some index value $\mathrm{ASP}(t,i)$ of arm $i$ at time $t$ (Line \ref{alg:arm-selection}), suffers a loss $X_{i_t}(n_{i_t}(t+1))$ (Line~\ref{alg:loss}) and then checks whether a \emph{stopping condition}
\[
\mathrm{LB}(t)\geq\theta_L \ \ \text{or}\ \ \mathrm{UB}(t)<\theta_U
\]
is satisfied (Lines~\ref{alg:if} and \ref{alg:elseif}).
Here, $\mathrm{LB}(t)$ and $\mathrm{UB}(t)$ are lower and upper confidence bounds of an estimated loss mean of the current drawn arm $i_t$,
and condition $\mathrm{LB}(t)\geq \theta_L$ is the condition for stopping drawing any arm and outputting ``positive'',
and condition $\mathrm{UB}(t)<\theta_U$ is the condition for stopping drawing arm $i_t$ concluding its negativity
and removing $i_t$ from the set $A_{t+1}$ of positive-candidate arms of time $t+1$.
In addition to the case with outputting ``positive'', algorithm $\baec{\mathrm{ASP}}{\mathrm{LB}}{\mathrm{UB}}$ also stops outputting ``negative''
when $A_t$ becomes empty.

Define sample loss mean $\hat{\mu}_i(n)$ of arm $i$ with $n$ draws as
\[
\hat{\mu}_i(n) = \frac{1}{n}\sum_{s=1}^nX_i(s),
\]
and we use $\hat{\mu}_{i_t}(n_{i_t}(t+1))$ as an estimated loss mean of the current drawn arm $i_t$ at time $t$.
Thus, $\mathrm{LB}(t)$ and $\mathrm{UB}(t)$ are determined by defining lower and upper bounds of a confidence interval of $\hat{\mu}_i(n)$
for $i=i_t$ and $n=n_{i_t}(t+1)$.

As lower and upper confidence bounds of $\hat{\mu}_i(n)$,
\begin{align}
\LBd{i}{n}=\hat{\mu}_i(n)-\sqrt{\frac{1}{2n}\ln\frac{2Kn^2}{\delta}}  \text{ and }
\UBd{i}{n}=\hat{\mu}_i(n)+\sqrt{\frac{1}{2n}\ln\frac{2Kn^2}{\delta}},
\label{lbd} 
\end{align}
respectively, are generally used\footnote{Precisely speaking, $\hat{\mu}_i(n)\pm\sqrt{\frac{1}{2n}\ln\frac{4Kn^2}{\delta}}$ is used in successive elimination algorithms for best arm identification problem.
  A narrower confidence interval is enough to judge whether expected loss is larger than a \emph{fixed} threshold.} in successive elimination algorithms \citep{EMM06}.
Define $\underline{\mu}'(t)$ and $\overline{\mu}'(t)$ as $\underline{\mu}'(t)=\LBd{i_t}{n_{i_t}(t+1)}$ and $\overline{\mu}'(t)=\UBd{i_t}{n_{i_t}(t+1)}$ for use as $\mathrm{LB}(t)$ and $\mathrm{UB}(t)$. 

In this paper, we propose asymmetric bounds $\LB{i}{n}$ and $\UB{i}{n}$ defined using a gray zone width $\Delta=\theta_U-\theta_L$
as follows:
\begin{align}
  \LB{i}{n}\ =  \hat{\mu}_i(n) - \sqrt{\frac{1}{2 n} \ln{\frac{K\Ndelta}{\delta}} }  \text{ and }
  \UB{i}{n}\ =  \hat{\mu}_i(n) + \sqrt{\frac{1}{2 n} \ln{\frac{\Ndelta}{\delta}} },
\label{lb}
\end{align}
where 
\[
\Ndelta=\left\lceil\frac{2\e}{(\e-1)\Delta^2}\ln\frac{2\sqrt{K}}{\Delta^2\delta}\right\rceil.
\]
We also let $\underline{\mu}(t)$ and $\overline{\mu}(t)$ denote $\mathrm{LB}(t)$ and $\mathrm{UB}(t)$ using these bounds,
that is, $\underline{\mu}(t)=\LB{i_t}{n_{i_t}(t+1)}$ and $\overline{\mu}(t)=\UB{i_t}{n_{i_t}(t+1)}$.

The idea of our bounds are derived as follows.
By using lower bound $\hat{\mu}_i(n)-\sqrt{\frac{1}{2n}\ln \frac{1}{\delta a_n}}$,
$\P\left[\bigcup_{n=1}^\infty\{\hat{\mu}_i(n)-\sqrt{\frac{1}{2n}\ln \frac{1}{\delta a_n}}>\mu_i\}\right]$ is upper bounded by $\delta\sum_{n=1}^\infty a_n$.
This can be proved using Hoeffding's Inequality and the union bound.
The conventional bound $\LBd{i}{n}$ uses decreasing sequence $a_n=\frac{1}{2Kn^2}$
while our bound $\LB{i}{n}$ uses a constant sequence $a_n=\frac{1}{K\Ndelta}$.
Even though $\sum_{n=1}^\infty a_n=\infty$ for such constant sequence $a_n=\frac{1}{K\Ndelta}$,
$\P\left[\bigcup_{t}\{\LB{i}{n_i(t)}>\mu_i\}\right]$ can be upper bounded by $\delta/K$
because stopping condition is satisfied for any arm $i$ and any $n_i(t)>\Ndelta$, which is derived from Lemma~\ref{lemUtBjbW9v} and Proposition~\ref{prop:2}. Note that $\Ndelta$ depends on gray zone width $\Delta$, and the larger the $\Delta$ is, the smaller the $\Ndelta$ is. 
Our upper bound $\UB{i}{n}$ is closer to $\hat{\mu}_i$ than $\LB{i}{n}$, that is, the positions of $\LB{i}{n}$ and $\UB{i}{n}$ are
not symmetric with respect to the position of $\hat{\mu}_i$. This is a reflection of our asymmetric problem setting.
In the case with $\mu_1<\theta_L$, any arm must not be judged as positive ($\LB{i}{n}\geq \theta_L$ for some $n$) for correct conclusion,
so the probability of wrongly judged as positive for each arm must be at most $\delta/K$ for the union bound.
On the other hand, in the case with $\mu_1\geq \theta_U$, correct judgment for arm $1$ is enough for correct conclusion,
so the probability of wrongly judged as negative ($\UB{i}{n}< \theta_U$ for some $n$) for each positive arm $i$ can be at most $\delta$.

Note that $\UB{i}{n}-\LB{i}{n}<\UBd{i}{n}-\LBd{i}{n}$ holds for $n\geq \sqrt{\Ndelta/2}$.
Both $\UB{i}{n}-\LB{i}{n}$ and $\UBd{i}{n}-\LBd{i}{n}$ decrease as $n$ increases, and
$\mathrm{LB}(t)\geq \theta_L$ or $\mathrm{UB}(t)<\theta_U$
is satisfied for $\baec{\ast}{\underline{\mu}}{\overline{\mu}}$ and $\baec{\ast}{\underline{\mu}'}{\overline{\mu}'}$ when they become at most $\Delta$ for $n=n_i(t+1)$, where $\mathrm{ASP}=\ast$ means that any index function $\mathrm{ASP}(t,i)$ can be assumed.

\begin{remark}
  Condition $\underline{\mu}(t)\geq \theta_L$ essentially identifies non-negative arm $i_t$.
  Is there real-valued function $\mathrm{LB}$ that can check existence of a non-negative arm without identifying it?
  The answer is yes. Consider a virtual arm at each time $t$ whose mean loss $\mu^t$ is a weighted average over the mean losses $\mu_i$ of all the arms $i$ ($i=1,\dots,K$) defined as $\mu^t=\frac{1}{t}\sum_{i=1}^Kn_i(t+1)\mu_i$.
  If $\mu^t\geq \theta_L$, then at least one arm $i$ must be non-negative.
  Thus, we can check the existence of a non-negative arm by judging whether $\mu^t\geq \theta_L$ or not.
  Since $\underline{\mu}^t(t)$ defined as
  \[
  \underline{\mu}^t(t)=\frac{1}{t}\sum_{i=1}^Kn_i(t+1)\hat{\mu}_i(t+1)-\sqrt{\frac{1}{2t}\ln\frac{2t^2}{\delta}}
  \]
  can be considered to be a lower bound of the estimated value of $\mu^t$,
  $\underline{\mu}^t$ can be used as $\mathrm{LB}$ for checking the existence of a non-negative arm without identifying it.
\end{remark}

The ratio of the width of our upper confidence interval $\left[\hat{\mu}_i(n),\UB{i}{n}\right]$ to  the width of our lower confidence interval $\left[\LB{i}{n},\hat{\mu}_i(n)\right]$ is $\sqrt{\ln\frac{\Ndelta}{\delta}}:\sqrt{\ln\frac{K\Ndelta}{\delta}}=1:\sqrt{1+\frac{\ln K}{\ln\frac{\Ndelta}{\delta}}}$.
Thus, we define $\theta$ as
\[
\theta=\theta_U-\frac{1}{1+\alpha}\Delta \ \ \text{ where } \alpha=\sqrt{1+\frac{\ln K}{\ln\frac{\Ndelta}{\delta}}}.
\]
This $\theta$ can be considered to be the balanced center between the thresholds $\theta_L$ and $\theta_U$ for our asymmetric confidence bounds.

As arm selection policy $\mathrm{ASP}$, we consider policy $\mathrm{APT}_{\mathrm{P}}$ that uses index function
\begin{align}
\mathrm{APT}_\mathrm{P}(t,i)=\sqrt{n_i(t)} \left(\hat{\mu}_i(n_i(t)) - \theta\right). \label{apt-bc-index}
\end{align}
This arm-selection policy is a modification of the policy of $\mathrm{APT}$ {\footnotesize (Anytime Parameter-free Thresholding algorithm)} \citep{LGC2016}, in which an arm
\[
\argmin_i \sqrt{n_i(t)} \left(\left| \hat{\mu}_i(n_i(t)) - \theta\right|+\epsilon\right)
\]
is chosen for given threshold $\theta$ and accuracy $\epsilon$.
In the original APT, arm $i$ with the sample mean $\hat{\mu}_i(n_i(t))$ closest to $\theta$ is preferred to be chosen no matter whether $\hat{\mu}_i(n_i(t))$ is larger or smaller than $\theta$.
In $\mathrm{APT}_{\mathrm{P}}$,  there is at most one arm $i$ whose sample mean $\hat{\mu}_i(n_i(t))$ is larger than $\theta$ at any time $t$ due to the initialization that $\hat{\mu}_j(0)=\theta$ for all arms $j$, and such unique arm $i$ is always chosen as long as $\hat{\mu}_i(n_i(t))>\theta$.

\section{Sample Complexity Upper Bounds}

In this section, we first analyze sample complexity of algorithm $\baec{\ast}{\underline{\mu}}{\overline{\mu}}$, then analyze sample complexity of algorithm $\baec{\mathrm{APT}_\mathrm{P}}{\underline{\mu}}{\overline{\mu}}$.

We let $\tau_i$ denote the smallest number $n$ of draws of arm $i$ for which either $\LB{i}{n}\geq\theta_L$ or $\UB{i}{n}<\theta_U$ holds.
We define $\Delta_i$ as
\[
\Delta_i=\begin{cases}
\mu_i-\theta_L & (\mu_i\geq \theta)\\
\theta_U-\mu_i & (\mu_i<\theta)
\end{cases}
\]
and let $T_x$ denote $\left\lceil\frac{2}{x^2}\ln\frac{\sqrt{K}\Ndelta}{\delta}\right\rceil$ for $x=\Delta, \Delta_i$.
We define event $\Ev^+$ and $\Ev^-$as 
  \begin{align*}
    \Ev^+=\bigcup_{i:\mu_i\geq \theta_U}\bigcap_{n=1}^{\Td{\Delta}}\Bigl\{\UB{i}{n}\geq \mu_i\Bigr\}, \text{\ }
    \Ev^-=\bigcap_{i=1}^K\bigcap_{n=1}^{\Td{\Delta}}\Bigl\{\LB{i}{n}< \mu_i\Bigr\}.
 \end{align*}
Note that algorithm $\baec{\ast}{\underline{\mu}}{\overline{\mu}}$ returns ``positive'' under the event $\Ev^+$ and
returns ``negative'' under the event $\Ev^-$.
For any event $\Ev$, we let $\mathbb{1}\{\Ev\}$ denote an indicator function of $\Ev$, that is, $\mathbb{1}\{\Ev\}=1$ if $\Ev$ occurs and $\mathbb{1}\{\Ev\}=0$ otherwise.
  
\subsection{Sample Complexity of Algorithm $\baec{\ast}{\underline{\mu}}{\overline{\mu}}$}

In this subsection, we prove that
algorithm $\baec{\ast}{\underline{\mu}}{\overline{\mu}}$ is a $(\Delta, \delta)$-BAEC algorithm.
We also show three upper bounds of the number of samples needed for algorithm $\baec{\ast}{\underline{\mu}}{\overline{\mu}}$:
a worst-case bound, a high-probability bound and an average-case bound.

A worst-case upper bound $K\Td{\Delta}$ on the number of samples is directly derived from the following theorem, which says,
the number of draws for each arm $i$ can be upper bounded by constant number $\Td{\Delta}=\left\lceil\frac{2}{\Delta^2}\ln\frac{\sqrt{K}\Ndelta}{\delta}\right\rceil$ depending on $\Delta$ and $\delta$ due to gray zone width $\Delta>0$.

\begin{theorem}\label{lemUtBjbW9v}
Inequality $\tau_i\le \Td{\Delta}$ holds for $i=1,\dots,K$.
\end{theorem}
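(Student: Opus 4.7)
The plan is to show that the width of the confidence interval $\UB{i}{n} - \LB{i}{n}$ shrinks to at most $\Delta$ by the time $n$ reaches $\Td{\Delta}$, which forces at least one of the two stopping inequalities to hold.

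First I would simply subtract the two bounds to get
\begin{align*}
\UB{i}{n} - \LB{i}{n} = \sqrt{\tfrac{1}{2n}\ln\tfrac{\Ndelta}{\delta}} + \sqrt{\tfrac{1}{2n}\ln\tfrac{K\Ndelta}{\delta}}.
\end{align*}
The next step is to bound this sum by a single square root using the elementary inequality $\sqrt{a}+\sqrt{b}\le \sqrt{2(a+b)}$ (equivalently Cauchy--Schwarz applied to the vector $(1,1)$ against $(\sqrt{a},\sqrt{b})$). Setting $a=\ln\frac{\Ndelta}{\delta}$ and $b=\ln\frac{K\Ndelta}{\delta}$, the exponents of the two logarithms multiply to $\frac{\Ndelta^2 K}{\delta^2}$, so $a+b = 2\ln\frac{\sqrt{K}\Ndelta}{\delta}$ and thus
\begin{align*}
\UB{i}{n}-\LB{i}{n} \le \sqrt{\tfrac{2}{n}\ln\tfrac{\sqrt{K}\Ndelta}{\delta}}.
\end{align*}

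Now I substitute $n = \Td{\Delta} = \lceil \frac{2}{\Delta^2}\ln\frac{\sqrt{K}\Ndelta}{\delta}\rceil$. Since $n\ge \frac{2}{\Delta^2}\ln\frac{\sqrt{K}\Ndelta}{\delta}$, the right-hand side above is at most $\Delta$. Therefore for any $n \ge \Td{\Delta}$,
\begin{align*}
\UB{i}{n} - \LB{i}{n} \le \Delta = \theta_U - \theta_L.
\end{align*}
This forces the stopping condition: if $\LB{i}{n}<\theta_L$ then $\UB{i}{n} < \theta_L + \Delta = \theta_U$, so at least one of ``$\LB{i}{n}\ge \theta_L$'' or ``$\UB{i}{n}<\theta_U$'' must hold, giving $\tau_i \le \Td{\Delta}$.

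There is no real obstacle here; the proof is essentially an algebraic check that the chosen definition of $\Td{\Delta}$ matches the width-bound produced by the Cauchy--Schwarz step. The only place that requires minor care is verifying that the factor $\sqrt{K}$ inside the log of $\Td{\Delta}$ (rather than $K$) is exactly what the inequality $\sqrt{a}+\sqrt{b}\le\sqrt{2(a+b)}$ yields once one notes $a+b = 2\ln\frac{\sqrt{K}\Ndelta}{\delta}$, which is precisely why the definitions were tuned this way.
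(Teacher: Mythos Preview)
Your proof is correct and is essentially the same as the paper's: both show that $\UB{i}{\Td{\Delta}}-\LB{i}{\Td{\Delta}}\le\Delta$ via the same square-root inequality, only the paper writes $\sqrt{a}+\sqrt{b}\le\sqrt{2(a+b)}$ in the normalized form $\sqrt{1+x}+1\le\sqrt{4+2x}$ (its Proposition~\ref{prop:1}) and wraps the argument in a contradiction. Your direct formulation is slightly cleaner but there is no substantive difference in approach.
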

\begin{proof}
See Appendix~\ref{appendix:lemma1}.
\hspace*{\fill}$\Box$\end{proof}

How good is the worst case bound $\Td{\Delta}$ on the number of samples for each arm
comparing to the case with $\mathrm{LB}=\underline{\mu}'$ and $\mathrm{UB}=\overline{\mu}'$?
We know from the following theorem that, in $\baec{\ast}{\underline{\mu}'}{\overline{\mu}'}$,
the number of arm draws $\tau'_i$ for some arm $i$ can be larger than $\Tdd{\Delta}=\lfloor\frac{2}{\Delta^2}\ln\frac{448K}{\Delta^4\delta}\rfloor$, which means $\tau'_i-\tau_i=\Omega\left(\frac{1}{\Delta^2}\ln\frac{\sqrt{K}}{\Delta^2}\right)$ if $\frac{1}{\delta}=o\left(\e^{\sqrt{K}/\Delta^2}\right)$.

\begin{theorem}\label{th:remark1}
  Consider algorithm $\baec{\ast}{\underline{\mu}'}{\overline{\mu}'}$
and define $\tau'_i=\min\{n\mid \LBd{i}{n}\geq \theta_L \text{ or } \UBd{i}{n}<\theta_U\}$ for $i=1,\dots,K$.
Then, event $\tau'_i > \Tdd{\Delta}$ can happen for $i=1,\dots,K$,
where $\Tdd{\Delta}$ is defined as $\Tdd{\Delta}=\lfloor\frac{2}{\Delta^2}\ln\frac{448K}{\Delta^4\delta}\rfloor$.
Furthermore, the difference between the worst case stopping times $\tau'_i-\tau_i$ is lower-bounded as
\[
\tau'_i-\tau_i> \Tdd{\Delta}-\Td{\Delta}> \frac{2}{\Delta^2}\left(\ln\frac{52\sqrt{K}}{\Delta^2}-\ln\ln\frac{3\sqrt{K}}{\Delta^2\delta}\right).
\]
\end{theorem}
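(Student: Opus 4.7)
The plan is to prove the theorem in two stages: first, exhibit an explicit loss sequence on which the conventional bounds fail to trigger for all $n \leq \Tdd{\Delta}$, so that $\tau'_i > \Tdd{\Delta}$; and second, combine this with Theorem~\ref{lemUtBjbW9v} to deduce $\tau'_i - \tau_i > \Tdd{\Delta} - \Td{\Delta}$, after which the explicit bound reduces to algebraic manipulation using the definition of $\Ndelta$.

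For the sample path, I fix any arm $i$ and take $X_i(s) = (\theta_L + \theta_U)/2$ for every $s$, which is admissible since $(\theta_L + \theta_U)/2 \in [0,1]$. Then $\hat{\mu}_i(n) = (\theta_L + \theta_U)/2$ and both $\LBd{i}{n} \geq \theta_L$ and $\UBd{i}{n} < \theta_U$ reduce to the single inequality $\sqrt{\frac{1}{2n}\ln\frac{2Kn^2}{\delta}} \leq \Delta/2$, i.e., $\ln(2Kn^2/\delta) \leq n\Delta^2/2$. I need to verify this fails for $n = \Tdd{\Delta}$. Since $\Tdd{\Delta}\Delta^2/2 \leq \ln\frac{448K}{\Delta^4\delta}$ by the floor definition, a sufficient condition is $2K\Tdd{\Delta}^2/\delta > 448K/(\Delta^4\delta)$, i.e., $\Tdd{\Delta} > \sqrt{224}/\Delta^2$; the constant $448$ in the definition of $\Tdd{\Delta}$ is chosen precisely so that this holds uniformly under $K \geq 2$, $\Delta < 1$, $\delta < 1/2$ (and a direct comparison of $\ln(2Kn^2/\delta)$ with $n\Delta^2/2$ at $n=\Tdd{\Delta}$ can sharpen the verification where needed). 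Since the inequality persists for all smaller $n$ (the function $n \mapsto \ln(2Kn^2/\delta) - n\Delta^2/2$ is decreasing once $n \gtrsim 1/\Delta^2$), no conventional bound triggers through $\Tdd{\Delta}$, giving $\tau'_i > \Tdd{\Delta}$.

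Combining this with Theorem~\ref{lemUtBjbW9v}, which forces $\tau_i \leq \Td{\Delta}$ deterministically, yields $\tau'_i - \tau_i \geq \Tdd{\Delta}+1 - \Td{\Delta} > \Tdd{\Delta} - \Td{\Delta}$. Substituting the floor/ceiling definitions and cancelling the common factors of $\sqrt{K}/\delta$ inside the logarithms gives
\[
\Tdd{\Delta} - \Td{\Delta} > \frac{2}{\Delta^2}\ln\frac{448\sqrt{K}}{\Delta^4 \Ndelta} - 2.
\]
The stated bound then follows by bounding $\Ndelta$ above through its definition, $\Ndelta \leq \frac{2e}{(e-1)\Delta^2}\ln\frac{2\sqrt{K}}{\Delta^2\delta} + 1$, which converts the $\ln \Ndelta$ term into a $\ln\ln$ term and produces the constants $52$ and $3$ after absorbing the $-2$ and the residual $\Delta^{-2}$ factors. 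The main obstacle is not conceptual but arithmetic: propagating the $\pm 1$ terms arising from the floor, ceiling, and the slack in the $\Ndelta$ bound while still landing on exactly the constants $52$ and $3$ rather than merely nearby values. The $\ln\ln$ structure in the negative term is unavoidable since $\Ndelta$ is itself logarithmic in $\sqrt{K}/(\Delta^2\delta)$.
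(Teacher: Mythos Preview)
Your approach is essentially the same as the paper's. Both arguments reduce the first claim to showing that the confidence width $2\sqrt{\frac{1}{2n}\ln\frac{2Kn^2}{\delta}}$ exceeds $\Delta$ at $n=\Tdd{\Delta}$ (your explicit sample path $X_i(s)=(\theta_L+\theta_U)/2$ and the paper's observation that $\hat\mu_i(n)$ can sit anywhere in the middle of $[\theta_L,\theta_U]$ are equivalent formulations), and both then chain with Theorem~\ref{lemUtBjbW9v} and carry out the same algebraic reduction of $\Tdd{\Delta}-\Td{\Delta}$ via the upper bound on $\Ndelta$.

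The one place where the paper is cleaner is the floor handling in the first step. Your sufficient condition $\Tdd{\Delta}>\sqrt{224}/\Delta^2$ does not hold uniformly: for $K=2$, $\delta$ near $1/2$, and $\Delta$ near $1$ one computes $\Tdd{\Delta}=15$ while $\sqrt{224}/\Delta^2>15$, so your parenthetical hedge ``a direct comparison\dots can sharpen'' is doing real work there. The paper avoids this by noting that $n\mapsto \frac{1}{n}\ln\frac{2Kn^2}{\delta}$ is decreasing for the relevant range, so replacing the floored $\Tdd{\Delta}$ by the unfloored $n^*=\frac{2}{\Delta^2}\ln\frac{448K}{\Delta^4\delta}$ only weakens the desired inequality; the verification then reduces cleanly to $\bigl(\ln\frac{448K}{\Delta^4\delta}\bigr)^2>56$, which holds since $\frac{448K}{\Delta^4\delta}>1792$ under the standing assumptions. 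Inserting this monotonicity step would close the gap in your argument without otherwise changing it.
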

\begin{proof}
See Appendix~\ref{appendix:remark1}.
\hspace*{\fill}$\Box$\end{proof}

\begin{remark}\label{remark:1}
In the experimental setting of Sec.~\ref{exp:stopcond}, in which parameters $K=100$, $\Delta=0.2$ and $\delta=0.01, 0.001$ are used, the lower bounds of the difference between the worst case stopping times $\tau'_i$ and $\tau_i$ calculated using the above inequality are $352.7$ and $343.4$, respectively, which seem relatively large compared to corresponding $\Td{\Delta}=684$ and $808$.
\end{remark}

The following theorem states that algorithm $\baec{\ast}{\underline{\mu}}{\overline{\mu}}$ is a $(\Delta, \delta)$-BAEC algorithm which needs at most $K\Td{\Delta}$ samples in the worst case.

\begin{theorem}\label{thwelldef}
Algorithm $\baec{\ast}{\underline{\mu}}{\overline{\mu}}$ is a $(\Delta, \delta)$-BAEC algorithm that stops after at most $K\Td{\Delta}$ arm draws. 
\end{theorem}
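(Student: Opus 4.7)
The plan is to verify three properties of algorithm~$\baec{\ast}{\underline{\mu}}{\overline{\mu}}$ separately: the deterministic worst-case bound of $K\Td{\Delta}$ on the number of arm draws, a $(1-\delta)$-probability bound on outputting ``positive'' whenever a positive arm exists, and a $(1-\delta)$-probability bound on outputting ``negative'' whenever all arms are negative. The draw-count bound follows immediately from Theorem~\ref{lemUtBjbW9v}: each arm $i$ is drawn at most $\tau_i\leq \Td{\Delta}$ times before either the positive condition $\LB{i}{n}\geq\theta_L$ fires (halting the algorithm) or the negative condition $\UB{i}{n}<\theta_U$ fires (removing $i$ from $A_{t+1}$), so summing over the $K$ arms gives $K\Td{\Delta}$.

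For the positive-existence case I would first argue the deterministic implication that on the event $\Ev^+$ the algorithm must return ``positive''. Indeed, $\Ev^+$ singles out some positive arm $i^*$ with $\UB{i^*}{n}\geq \mu_{i^*}\geq \theta_U$ for every $n\in\{1,\ldots,\Td{\Delta}\}$, so the negative-removal condition can never trigger for $i^*$; hence $i^*$ never leaves $A_t$, the algorithm cannot exit via the ``$A_t=\emptyset$'' branch, and by the worst-case draw bound it must eventually halt by outputting ``positive''. To control $\P(\Ev^{+c})$, I would fix a single positive arm $i^*$, apply Hoeffding's inequality to get $\P(\UB{i^*}{n}<\mu_{i^*})\leq \delta/\Ndelta$ for each $n$, and then apply a union bound over $n\in\{1,\ldots,\Td{\Delta}\}$. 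This yields $\P(\Ev^{+c})\leq \Td{\Delta}\,\delta/\Ndelta$, which is at most $\delta$ provided $\Td{\Delta}\leq \Ndelta$.

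The all-negative case is handled symmetrically. On $\Ev^-$, $\LB{i}{n}<\mu_i<\theta_L$ for every $i$ and every $n\leq \Td{\Delta}$, so the positive-halt condition is never triggered; combined with Theorem~\ref{lemUtBjbW9v}, every arm gets removed after at most $\tau_i$ draws and the algorithm outputs ``negative'' when $A_t$ empties. For $\P(\Ev^{-c})$ I would apply Hoeffding to the lower-confidence failure event, noting $\P(\LB{i}{n}\geq\mu_i)\leq \delta/(K\Ndelta)$ by the definition of $\LB{i}{n}$ in Eq.~(\ref{lb}), and then apply a union bound over both $i\in\{1,\ldots,K\}$ and $n\in\{1,\ldots,\Td{\Delta}\}$, obtaining $K\cdot\Td{\Delta}\cdot\delta/(K\Ndelta)=\Td{\Delta}\,\delta/\Ndelta \leq \delta$.

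The main technical obstacle is the auxiliary inequality $\Td{\Delta}\leq \Ndelta$ that the union bounds in both cases rely on. This is precisely what the factor $\frac{e}{e-1}$ inside $\Ndelta$ was engineered to buy, and is also the reason a constant sequence $a_n=1/(K\Ndelta)$ suffices in the heuristic derivation given after Eq.~(\ref{lb}); explicitly, one must check that $\frac{2}{\Delta^2}\ln\frac{\sqrt{K}\Ndelta}{\delta}\leq \Ndelta$, which reduces, after substituting the definition of $\Ndelta$, to a routine but slightly delicate inequality of the form $\ln\frac{\sqrt{K}\Ndelta}{\delta}\leq \frac{e}{e-1}\ln\frac{2\sqrt{K}}{\Delta^2\delta}$ in which the extra $\ln\Ndelta$ on the left is absorbed by the multiplicative slack $\frac{e}{e-1}>1$ on the right. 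Once that inequality is in hand, the three parts combine directly into the theorem.
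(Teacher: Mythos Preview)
Your proposal is correct and follows essentially the same approach as the paper. The paper likewise derives the $K\Td{\Delta}$ bound from Theorem~\ref{lemUtBjbW9v}, argues deterministically that $\Ev^+$ (resp.\ $\Ev^-$) forces the ``positive'' (resp.\ ``negative'') output, and bounds $\P(\overline{\Ev^+})$ and $\P(\overline{\Ev^-})$ via Hoeffding plus a union bound over $n\le\Td{\Delta}$ (and over $i$ in the negative case); the key inequality $\Td{\Delta}\leq \Ndelta$ that you isolate is stated separately as Proposition~\ref{prop:2} and proved through the auxiliary fact that $at\geq\ln t$ whenever $t\geq \frac{\e}{(\e-1)a}\ln\frac{1}{a}$ (Proposition~\ref{ineqZNBcoVb1}), which is exactly the ``absorption of $\ln\Ndelta$ by the $\frac{\e}{\e-1}$ slack'' you describe.
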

\begin{proof}
See Appendix~\ref{app:thwelldef}. 
\hspace*{\fill}$\Box$\end{proof}

A high-probability upper bound of the number of samples needed for algorithm $\baec{\ast}{\underline{\mu}}{\overline{\mu}}$ is shown in the next theorem. Compared to worst case bound, $K\Td{\Delta}$ can be improved to $\sum_{i=1}^K\Td{\Delta_i}$ in the case with $\mu_1<\theta_L$, however, only one $\Td{\Delta}$ is guaranteed to be improved to the maximum $\Td{\Delta_i}$ among those of positive arms $i$ in the case with $\mu_1\geq \theta_U$.

\begin{theorem}\label{theorem3}
In algorithm $\baec{\ast}{\underline{\mu}}{\overline{\mu}}$, inequality $\tau_i \leq \Td{\Delta_i}$ holds for at least one positive arm $i$ 
with probability at least $1-\delta$ when $\mu_1\geq \theta_U$.
Inequality $\tau_i \leq \Td{\Delta_i}$ holds for all the arm $i=1,\dots,K$ 
with probability at least $1-\delta$ when $\mu_1< \theta_L$.
As a result, with probability at least $1-\delta$, the stopping time $T$ of algorithm $\baec{\ast}{\underline{\mu}}{\overline{\mu}}$ is upper bounded as $T\leq \max_{i:\mu_i\geq \theta_U}\Td{\Delta_i}+(K-1)\Td{\Delta}$ when $\mu_1\geq \theta_U$ and $T\leq\sum_{i=1}^K\Td{\Delta_i}$ when $\mu_1<\theta_L$.
\end{theorem}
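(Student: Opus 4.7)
My plan is to prove Theorem~\ref{theorem3} in three steps: (i) deterministically derive $\tau_i \le \Td{\Delta_i}$ from the concentration events $\Ev^+$ and $\Ev^-$; (ii) lower-bound $\P(\Ev^+)$ and $\P(\Ev^-)$ by $1-\delta$ via Hoeffding and a union bound; and (iii) convert these per-arm statements into the claimed bound on the total stopping time $T$ with the help of Theorem~\ref{lemUtBjbW9v}.

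For (i), let $i^*$ be a positive witness of $\Ev^+$, so that $\hat\mu_{i^*}(n) \ge \mu_{i^*} - \sqrt{\tfrac{1}{2n}\ln(\Ndelta/\delta)}$ for every $n \le \Td{\Delta}$. Rearranging the stopping condition $\LB{i^*}{n} \ge \theta_L$ and using $\Delta_{i^*} = \mu_{i^*}-\theta_L$, it reduces to requiring $\sqrt{\tfrac{1}{2n}}\bigl(\sqrt{\ln(\Ndelta/\delta)} + \sqrt{\ln(K\Ndelta/\delta)}\bigr) \le \Delta_{i^*}$. Writing $\sqrt{\ln(K\Ndelta/\delta)} = \alpha\sqrt{\ln(\Ndelta/\delta)}$ and $\ln K = (\alpha^2-1)\ln(\Ndelta/\delta)$, evaluating at $n=\Td{\Delta_{i^*}}$ further reduces the requirement to $(1+\alpha)^2 \le 2(\alpha^2+1)$, i.e.\ $(\alpha-1)^2 \ge 0$, which holds automatically — hence $\tau_{i^*} \le \Td{\Delta_{i^*}}$. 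The $\Ev^-$ case is symmetric, using the stopping condition $\UB{i}{n} < \theta_U$ together with $\Delta_i = \theta_U - \mu_i$ for every arm.

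For (ii), Hoeffding gives $\P(\UB{i}{n} < \mu_i) \le \delta/\Ndelta$ and $\P(\LB{i}{n} \ge \mu_i) \le \delta/(K\Ndelta)$ at each fixed $n$. Union-bounding over $n=1,\dots,\Td{\Delta}$ — which I expect to rely on the inequality $\Td{\Delta} \le \Ndelta$ directly checkable from the definition of $\Ndelta$ — yields $\P\bigl(\bigcap_n \{\UB{i}{n} \ge \mu_i\}\bigr) \ge 1-\delta$ for any single positive arm $i$. Since $\Ev^+$ is a union over positive arms this already gives $\P(\Ev^+) \ge 1-\delta$, and a further union over the $K$ arms gives $\P(\Ev^-) \ge 1-\delta$.

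For (iii), when $\mu_1 \ge \theta_U$, on $\Ev^+$ the witness arm $i^*$ is never eliminated because $\UB{i^*}{n} \ge \mu_{i^*} \ge \theta_U$ throughout, so the algorithm must terminate with ``positive'' — either via some other arm earlier or once $i^*$ has been drawn $\tau_{i^*} \le \Td{\Delta_{i^*}}$ times — and combining with Theorem~\ref{lemUtBjbW9v} to bound the draws of each of the remaining $K-1$ arms by $\Td{\Delta}$ yields $T \le \max_{i:\mu_i \ge \theta_U}\Td{\Delta_i} + (K-1)\Td{\Delta}$. When $\mu_1 < \theta_L$, on $\Ev^-$ no arm ever triggers ``positive'' (since $\LB{i}{n} < \mu_i < \theta_L$), and each arm is eliminated after at most $\tau_i \le \Td{\Delta_i}$ draws, giving $T \le \sum_i \Td{\Delta_i}$. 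I expect the main technical obstacle to be the tight algebraic bookkeeping among the three log terms $\ln(\Ndelta/\delta)$, $\ln(K\Ndelta/\delta)$, and $\ln(\sqrt{K}\Ndelta/\delta)$: the parameter $\alpha$, and hence the balanced center $\theta$, is reverse-engineered precisely so that the slack at $n = \Td{\Delta_i}$ collapses to the non-negative quantity $(\alpha-1)^2$, with a smaller ancillary step being the verification of $\Td{\Delta} \le \Ndelta$ from the $\tfrac{2\e}{\e-1}$ constant appearing in the definition of $\Ndelta$.
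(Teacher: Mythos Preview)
Your proposal is correct and follows essentially the same route as the paper's own proof. The paper also conditions on $\Ev^+$ (resp.\ $\Ev^-$), argues by contradiction that $\tau_i\le \Td{\Delta_i}$ via the same two-radius estimate $\hat\mu_i(n)\ge \mu_i-\sqrt{\tfrac{1}{2n}\ln(\Ndelta/\delta)}$, and invokes the bound $\Td{\Delta}\le \Ndelta$ (its Proposition~\ref{prop:2}) together with Hoeffding and union bounds to get $\P(\Ev^\pm)\ge 1-\delta$ (its Lemma~\ref{lem:E*}); the only cosmetic difference is that the paper packages the key inequality as $\sqrt{1+x}+1\le\sqrt{4+2x}$ (Proposition~\ref{prop:1}), which after the substitution $\alpha=\sqrt{1+x}$ is exactly your $(1+\alpha)^2\le 2(1+\alpha^2)\Leftrightarrow(\alpha-1)^2\ge 0$.
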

\begin{proof}
See Appendix~\ref{appendix:theorem3}.
\hspace*{\fill}$\Box$\end{proof}

The last sample complexity upper bound for algorithm $\baec{\ast}{\underline{\mu}}{\overline{\mu}}$ is an upper bound on the expected number of samples. Compared to the high-probability bound, $\Td{\Delta_i}=\left\lceil\frac{2}{\Delta_i^2}\ln\frac{\sqrt{K}\Ndelta}{\delta}\right\rceil$ is improved to $\frac{1}{2\Delta_i^2}\ln\frac{K\Ndelta}{\delta}$ or $\frac{1}{2\Delta_i^2}\ln\frac{\Ndelta}{\delta}$.

\begin{theorem}
For algorithm $\baec{\ast}{\underline{\mu}}{\overline{\mu}}$, the expected value of $\tau_i$ of each arm $i$ is upper bounded as follows. 
\[
\E\left[\tau_i\right]\leq
\begin{cases}
\frac{1}{2 \Delta_i^2} \ln{\frac{K\Ndelta}{\delta}}+O\left(\left(\ln{\frac{K\Ndelta}{\delta}}\right)^{2/3}\right) & (\mu_i\geq \theta)\\
\frac{1}{2 \Delta_i^2} \ln{\frac{\Ndelta}{\delta}}+O\left(\left(\ln{\frac{\Ndelta}{\delta}}\right)^{2/3}\right) & (\mu_i< \theta)
\end{cases}
\]
As a result, the expected stopping time $\E[T]$ of algorithm $\baec{\ast}{\underline{\mu}}{\overline{\mu}}$ is upper bounded as
\begin{align}
  \E[T] \leq  \frac{1}{2} \ln{\frac{\Ndelta}{\delta}}\sum_{i=1}^K\frac{1}{\Delta_i^2}+\frac{\ln K}{2}\sum_{i:\mu_i\geq \theta}\frac{1}{\Delta_i^2}
  +O\left(K\left(\ln{\frac{K\Ndelta}{\delta}}\right)^{2/3}\right).\label{th:est}
\end{align}
\end{theorem}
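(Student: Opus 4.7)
The plan is to obtain per-arm expected stopping time bounds $\E[\tau_i]$ and then aggregate them using $T\le\sum_{i=1}^K\tau_i$. This aggregation inequality holds in $\baec{\ast}{\underline{\mu}}{\overline{\mu}}$ because arm $i$ is drawn at most $\tau_i$ times: immediately after its $\tau_i$th draw the stopping condition for $i$ is satisfied, so either the algorithm halts returning ``positive'' or $i$ is removed from $A_{t+1}$. Hence $\E[T]\le\sum_i\E[\tau_i]$, and the theorem reduces to a per-arm tail estimate.

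For the per-arm bound I would write $\E[\tau_i]=\sum_{n\ge 0}\P(\tau_i>n)$, a finite sum by Theorem~\ref{lemUtBjbW9v}. The event $\{\tau_i>n\}$ forces both stopping criteria to have failed at step $n$. For arm $i$ with $\mu_i\ge\theta$ this gives in particular $\LB{i}{n}<\theta_L$, equivalently $\hat\mu_i(n)<\mu_i-(\Delta_i-\sqrt{L/(2n)})$ with $L=\ln(K\Ndelta/\delta)$. Fix $\epsilon>0$ and set $n_1=\lceil L/(2(\Delta_i-\epsilon)^2)\rceil$; then the gap $\Delta_i-\sqrt{L/(2n)}$ exceeds $\epsilon$ for $n\ge n_1$, so Hoeffding's inequality yields $\P(\tau_i>n)\le e^{-2n\epsilon^2}$. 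Splitting the sum at $n_1$,
\[
\E[\tau_i]\le n_1+\sum_{n>n_1}e^{-2n\epsilon^2}\le \frac{L}{2(\Delta_i-\epsilon)^2}+1+\frac{1}{2\epsilon^2}.
\]
Expanding $(\Delta_i-\epsilon)^{-2}=\Delta_i^{-2}(1+2\epsilon/\Delta_i+O(\epsilon^2/\Delta_i^2))$ and balancing via $\epsilon=(\Delta_i^3/L)^{1/3}$ makes both $L\epsilon/\Delta_i^3$ and $1/\epsilon^2$ of order $L^{2/3}/\Delta_i^2$, giving $\E[\tau_i]\le\frac{L}{2\Delta_i^2}+O(L^{2/3})$, where the big-$O$ absorbs $\Delta_i$-dependent constants. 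The $\mu_i<\theta$ case is symmetric, using instead $\UB{i}{n}\ge\theta_U$ and $L'=\ln(\Ndelta/\delta)$; the factor $K$ drops because by construction $\UB{i}{n}$ has the tighter radius.

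Aggregating the per-arm bounds gives (\ref{th:est}): every arm contributes $\frac{1}{2\Delta_i^2}\ln(\Ndelta/\delta)$ to the leading term, the extra $\frac{\ln K}{2\Delta_i^2}$ appears only for arms with $\mu_i\ge\theta$ whose stopping triggers the wider lower-bound interval, and the $K$ per-arm corrections accumulate into $O(K(\ln(K\Ndelta/\delta))^{2/3})$. The main technical obstacle is the $n_1$-versus-$\epsilon$ trade-off: a choice of $n_1$ too close to the ``natural'' threshold $L/(2\Delta_i^2)$ leaves a Hoeffding-tail contribution that blows up through $1/\epsilon^2$, while a too-conservative $n_1$ inflates the leading constant away from $\frac{1}{2\Delta_i^2}$; the $\epsilon=(\Delta_i^3/L)^{1/3}$ tuning is exactly what keeps the leading constant intact while pushing the correction down to $L^{2/3}$.
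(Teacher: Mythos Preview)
Your proposal is correct and follows essentially the same approach as the paper: the paper derives the theorem from its Lemma~\ref{lem:expectedtau} (specialized to the certain event $\Ev$), whose proof uses exactly your tail-sum representation $\E[\tau_i]=\sum_n\P[\tau_i\ge n]$, the same reduction of $\{\tau_i>n\}$ to the single condition $\LB{i}{n}<\theta_L$ (resp.\ $\UB{i}{n}\ge\theta_U$), the same split at $n_i=\frac{1}{2(\Delta_i-\epsilon)^2}\ln\frac{K\Ndelta}{\delta}$ with Hoeffding on the tail, and the same choice $\epsilon=O(L^{-1/3})$ to balance $\frac{\epsilon L}{\Delta_i^3}$ against $\frac{1}{2\epsilon^2}$. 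The only cosmetic difference is that the paper proves the slightly more general version carrying an indicator $\mathbb{1}\{\Ev\}$ for later reuse in Theorems~\ref{th:apt-bc} and~\ref{th:ucb-bc}.
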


The above theorem can be easily derived from the following lemma by setting event $\Ev$ to a \emph{certain event} (an event that occurs with probability $1$).

\begin{lemma}\label{lem:expectedtau}
  For any event $\Ev$, in algorithm $\baec{\ast}{\underline{\mu}}{\overline{\mu}}$, inequality 
  \begin{align}
  \E[\tau_i\mathbb{1}\{\Ev\}]\leq \frac{\P[\Ev]}{2\Delta_i^2}\ln\frac{K\Ndelta}{\delta}+O\left(\left(\ln\frac{K\Ndelta}{\delta}\right)^{\frac{2}{3}}\right). \label{eq:lem1}
  \end{align}  
  holds for any arm $i$ with $\mu_i\geq \theta$ and 
  \begin{align}
  \E[\tau_i\mathbb{1}\{\Ev\}]\leq \frac{\P[\Ev]}{2\Delta_i^2}\ln\frac{\Ndelta}{\delta}+O\left(\left(\ln\frac{\Ndelta}{\delta}\right)^{\frac{2}{3}}\right). \label{eq:lem1(2)}
  \end{align}  
  holds for any arm $i$ with $\mu_i< \theta$.  
\end{lemma}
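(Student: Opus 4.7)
The plan is to decompose $\E[\tau_i\mathbb{1}\{\Ev\}]$ at a cut-off $n_\ast$ placed just past the ``natural'' concentration point $n_0:=\lceil L/(2\Delta_i^2)\rceil$, where $L=\ln(K\Ndelta/\delta)$ in the case $\mu_i\geq\theta$ and $L=\ln(\Ndelta/\delta)$ in the case $\mu_i<\theta$. For the mass of $\tau_i$ at or below $n_\ast$ I keep the indicator, bounding its contribution by $n_\ast\P[\Ev]$; for the mass above $n_\ast$ I drop $\mathbb{1}\{\Ev\}\leq 1$ and use the deterministic envelope $\tau_i\leq\Td{\Delta}$ from Theorem~\ref{lemUtBjbW9v}, giving at most $\Td{\Delta}\,\P[\tau_i>n_\ast]$.

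To bound the tail probability I exploit the asymmetry of the stopping rule. For $\mu_i\geq\theta$, the event $\{\tau_i>n\}$ is contained in $\{\LB{i}{n}<\theta_L\}$, which by the definition of $\LB{i}{n}$ rewrites as $\{\hat\mu_i(n)-\mu_i<-(\Delta_i-\sqrt{L/(2n)})\}$. Hoeffding's inequality then yields $\P[\tau_i>n]\leq \exp(-2n(\Delta_i-\sqrt{L/(2n)})^2)$ whenever $n>L/(2\Delta_i^2)$. The case $\mu_i<\theta$ is entirely symmetric: $\{\tau_i>n\}\subseteq\{\UB{i}{n}\geq\theta_U\}$ and the same Hoeffding argument goes through, except that the relevant log-factor in $\UB{i}{n}$ is $\ln(\Ndelta/\delta)$ rather than $\ln(K\Ndelta/\delta)$; this is precisely what produces the different leading constants in \eqref{eq:lem1} versus \eqref{eq:lem1(2)}.

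Setting $n_\ast=n_0+\lceil CL^{2/3}\rceil$ for a constant $C>0$ to be fixed, the proof reduces to a Taylor expansion: writing $n_\ast=n_0(1+z)$ with $z=\Theta(\Delta_i^2 L^{-1/3})$ and using $2n(\Delta_i-\sqrt{L/(2n)})^2=L(\sqrt{2n\Delta_i^2/L}-1)^2$ collapses the Hoeffding exponent to $L(\sqrt{1+z}-1)^2$. For small $z$ this is $\Theta(Lz^2)=\Theta(\Delta_i^4 L^{1/3})$, so $\P[\tau_i>n_\ast]\leq\exp(-\Omega(\Delta_i^4 L^{1/3}))$ decays faster than any inverse polynomial in $L$. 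Combined with $\Td{\Delta}=O(L/\Delta^2)$ and the lower bound $\Delta_i\geq \Delta/(1+\alpha)$ that falls out of the definition of $\theta$, the tail contribution $\Td{\Delta}\,\P[\tau_i>n_\ast]$ is $o(1)$ for $C$ large enough, and the pre-cutoff bound becomes $n_\ast\P[\Ev]\leq n_0\P[\Ev]+O(L^{2/3})$, which is exactly the claimed estimate in each of \eqref{eq:lem1} and \eqref{eq:lem1(2)}.

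The main obstacle is the calibration of the cutoff: the offset $n_\ast-n_0$ must be pushed far enough that the Hoeffding exponent $\Omega(\Delta_i^4 L^{1/3})$ dominates $\ln\Td{\Delta}=O(\ln L)$, yet not so far that the excess $n_\ast-n_0$ exceeds the $O(L^{2/3})$ slack promised by the lemma. The cube-root scaling $L^{2/3}$ is exactly what balances these two requirements. A secondary subtlety is to track the $\Delta_i$-dependence of the hidden constants in the $O(L^{2/3})$ term, using $\Delta_i\geq\Delta/(1+\alpha)$ so that only $K$, $\Delta$ and $\delta$ appear in those constants.
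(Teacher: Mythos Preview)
Your argument is correct and reaches the same conclusion, but the route differs from the paper's in two concrete ways. First, the paper does not use an additive offset $n_\ast=n_0+CL^{2/3}$; instead it builds the slack multiplicatively into the cutoff itself by setting $n_i=\frac{1}{2(\Delta_i-\epsilon)^2}L$ for a free parameter $\epsilon\in(0,\Delta/2(1+\alpha))$. Second, the paper never invokes the deterministic envelope $\tau_i\le\Td{\Delta}$: it writes $\E[\tau_i\mathbb{1}\{\Ev\}]=\sum_{n\ge 1}\P[\tau_i\ge n,\Ev]$, keeps $\P[\Ev]$ on the first $\lfloor n_i\rfloor$ terms, and for $n>n_i$ observes that $\sqrt{L/(2n)}\le\Delta_i-\epsilon$, so $\{\LB{i}{n}<\theta_L\}\subseteq\{\hat\mu_i(n)<\mu_i-\epsilon\}$ and the tail becomes the geometric sum $\sum_{n>n_i}e^{-2n\epsilon^2}\le 1/(2\epsilon^2)$. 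The final bound $\frac{\P[\Ev]}{2(\Delta_i-\epsilon)^2}L+\frac{1}{2\epsilon^2}+1$ is then optimized via the elementary inequality $\frac{1}{(\Delta_i-\epsilon)^2}\le\frac{1}{\Delta_i^2}+\frac{6\epsilon}{\Delta_i^3}$ and $\epsilon=\Theta(L^{-1/3})$.

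What each buys: the paper's geometric-tail argument is self-contained (it does not rely on Theorem~\ref{lemUtBjbW9v}) and makes the $\Delta_i$-dependence of the $O(L^{2/3})$ constant completely explicit. Your approach is arguably more transparent about \emph{why} $L^{2/3}$ is the right scale, since the Taylor expansion of $L(\sqrt{1+z}-1)^2$ exposes the balance directly; on the other hand it leans on the worst-case bound $\Td{\Delta}$ and leaves the non-asymptotic control of $\Td{\Delta}\,\P[\tau_i>n_\ast]$ somewhat implicit (you phrase it as $o(1)$, whereas to sit inside $O(L^{2/3})$ uniformly one should note that $L\,e^{-c\Delta_i^4 L^{1/3}}$ is bounded by a constant depending only on $\Delta_i$, hence on $\Delta$ via $\Delta_i\ge\Delta/(1+\alpha)$).
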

\begin{proof}
  See Appendix~\ref{app:lem:expectedtau}. 
\hspace*{\fill}$\Box$\end{proof}

\begin{remark}
  When all the arms have Bernoulli loss distributions with means less than $\theta_L$,
  by Pinsker's Inequality $d(x,y)\geq 2(x-y)^2$,
  the right-hand side of Ineq.~(\ref{lower-bound-negative}) in Theorem~\ref{thlowerboud} can be upper bounded as
\begin{align*}
   \sum_{i=1}^K\frac{1-2\delta}{d(\mu_i,\theta_U)}\ln\frac{1-\delta}{\delta}
  \le   \sum_{i=1}^K\frac{1-2\delta}{2\Delta_i^2}\ln\frac{1-\delta}{\delta}.
\end{align*}
Since Pinsker's Inequality is tight in the worst case,
algorithm~$\baec{\ast}{\underline{\mu}}{\overline{\mu}}$
is almost asymptotically optimal as $\delta\rightarrow +0$.
\end{remark}

\subsection{Sample Complexity of $\baec{\mathrm{APT}_\mathrm{P}}{\underline{\mu}}{\overline{\mu}}$}\label{sec:APT-BCbound}

If all the arms are judged as negative in algorithm $\baec{\mathrm{ASP}}{\mathrm{LB}}{\mathrm{UB}}$, that is, drawing arm $i$ is stopped by the stopping condition of $\mathrm{UB}(t)<\theta_U$ for all $i=1,\dots,K$, arm-selection policy $\mathrm{ASP}$ does not affect the stopping time.
In the case that some positive arms exist, however, the stopping time depends on how fast the $(\Delta,\delta)$-BAEC algorithm can find one of positive arms.

In this subsection, we prove upper bounds on the expected number of samples needed for algorithm $\baec{\mathrm{APT}_\mathrm{P}}{\underline{\mu}}{\overline{\mu}}$, an instance of algorithm $\baec{\ast}{\underline{\mu}}{\overline{\mu}}$ with specific arm-selection policy $\mathrm{APT}_\mathrm{P}$.

Let arm $\hat{i}_1$ denote the first arm that is drawn $\tau_i$ times in algorithm $\baec{\mathrm{APT}_\mathrm{P}}{\underline{\mu}}{\overline{\mu}}$.
In addition to $\Delta_i$, we also use $\UD_i=|\mu_i-\theta|$ in the following analysis.
We let $m$ denote the number of arms $i$ with $\mu_i\geq \theta$.
The event that arm $i$ is judged as positive is denoted as $\Ev_i^\mathrm{POS}$.

From the following theorem and corollary, we know that, when $\delta$ is small, the dominant terms of our upper bound on the expected stopping time of 
algorithm $\baec{\mathrm{APT}_\mathrm{P}}{\underline{\mu}}{\overline{\mu}}$, are
$\frac{\P\left[\hat{i}_1=i,\Ev_i^\mathrm{POS}\right]}{2\Delta_i^2}\ln\frac{1}{\delta}$ ($i=1,...,m$),
whose sum is between $\frac{1}{2\Delta_1^2}\ln\frac{1}{\delta}$ and $\frac{1}{2\Delta_m^2}\ln\frac{1}{\delta}$.

\begin{theorem}\label{th:apt-bc}
If $m\geq 1$ (or $\mu_1\geq \theta$), then the expected stopping time $\E[T]$ of algorithm $\baec{\mathrm{APT}_\mathrm{P}}{\underline{\mu}}{\overline{\mu}}$ is upper bounded as 
  \begin{align*}
\E[T] \leq &
    \sum_{i=1}^m\left(\frac{\P\left[\hat{i}_1=i,\Ev_i^\mathrm{POS}\right]}{2\Delta_i^2}\ln\frac{K\Ndelta}{\delta}
    +\frac{2(m-1)}{\UD_i^4}+\left(\frac{1}{\UD_i^2}+4\right)\sum_{j=m+1}^K \frac{1}{\UD_j^2}\right)\\
  & +m(K-m)+O\left(m\left(\ln\frac{K\Ndelta}{\delta}\right)^{\frac{2}{3}}\right)\\
  & +K\Td{\Delta}\Biggl(\begin{aligned}[t]&
    \frac{\e^{2\UD_i^2}}{2\UD_i^2}\sum_{i=1}^m\left(\frac{\delta}{\Ndelta}\right)^{\left(\frac{\UD_i}{\max\{\theta_U,1-\theta_L\}}\right)^2}\\
    &+\left(1+\frac{1}{2\UD_1^2}\right)\sum_{i=m+1}^K\left(\frac{\delta}{\Ndelta}\right)^{\frac{1}{4}\left(\frac{\UD_i}{\max\{\theta_U,1-\theta_L\}}\right)^2}\Biggr)
    \end{aligned}
    \end{align*}
\end{theorem}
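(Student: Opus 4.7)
The plan is to write $\E[T] = \sum_{j=1}^K \E[n_j(T+1)]$ and condition on the identity of $\hat{i}_1$ together with whether the algorithm halts correctly at that moment. Define $\mathcal{G}_i := \{\hat{i}_1 = i\} \cap \Ev_i^\mathrm{POS}$ for $i \in \{1,\dots,m\}$ and let $\mathcal{B}$ be the complement. Then
\begin{equation*}
\E[T] = \sum_{i=1}^m \sum_{j=1}^K \E\bigl[n_j(T+1) \mathbb{1}\{\mathcal{G}_i\}\bigr] + \E\bigl[T \mathbb{1}\{\mathcal{B}\}\bigr],
\end{equation*}
splitting the analysis into a ``main'' and an ``error'' contribution.

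For the main contribution, under $\mathcal{G}_i$ the algorithm halts right at arm $i$'s $\tau_i$-th draw, so arm $i$ itself contributes $\E[\tau_i \mathbb{1}\{\mathcal{G}_i\}]$; Lemma~\ref{lem:expectedtau} applied with $\Ev=\mathcal{G}_i$ (using $\mu_i \geq \theta$ since $i \leq m$) delivers the leading $\frac{\P[\mathcal{G}_i]}{2\Delta_i^2}\ln\frac{K\Ndelta}{\delta}$ term and contributes to the $O(m(\ln\frac{K\Ndelta}{\delta})^{2/3})$ remainder. For each other arm $j \neq i$, I bound $\E[n_j(T+1) \mathbb{1}\{\mathcal{G}_i\}]$ by exploiting the $\mathrm{APT}_\mathrm{P}$ selection rule: whenever arm $j$ is chosen, its index dominates that of arm $i$, i.e.\ $\sqrt{n_j(t)}(\hat{\mu}_j(n_j(t))-\theta) \geq \sqrt{n_i(t)}(\hat{\mu}_i(n_i(t))-\theta)$. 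For $j > m$ (so $\mu_j < \theta$), such an inversion forces $\hat{\mu}_j(n_j(t))$ to overshoot $\mu_j$ by at least $\UD_j$ plus a term depending on $\hat{\mu}_i$; a Hoeffding bound together with a geometric sum over $n_j$, after also accounting for coupled deviations of $\hat{\mu}_i$, yields the $(1/\UD_i^2 + 4)/\UD_j^2$ contribution. For $j \leq m$ with $j \neq i$, a refined argument delivers the $2/\UD_i^4$ per-arm bound, while the additive $m(K-m)$ absorbs the initial ``round-robin'' rounds in which each arm may be sampled once before a positive leader emerges (since unsampled arms have index $0 = \sqrt{0}(\theta - \theta)$).

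For the error contribution, Theorem~\ref{thwelldef} gives $T \leq K\Td{\Delta}$ always, hence $\E[T \mathbb{1}\{\mathcal{B}\}] \leq K\Td{\Delta}\, \P[\mathcal{B}]$. The event $\mathcal{B}$ decomposes into (i) $\hat{i}_1 = i \leq m$ but $\Ev_i^\mathrm{POS}$ fails (an arm with $\mu_i \geq \theta$ is wrongly classified negative), and (ii) $\hat{i}_1 = j > m$ (a non-positive arm reaches its stopping condition first). Each case probability is bounded by a Hoeffding estimate applied at a scale no larger than $\Td{\Delta}$; the prefactors $\e^{2\UD_i^2}/(2\UD_i^2)$ and $1 + 1/(2\UD_1^2)$ arise from geometric tails of the form $\sum_n \e^{-2n\UD^2}$, while the exponent $(\UD/\max\{\theta_U, 1-\theta_L\})^2$ comes from rescaling Hoeffding's deviation to the admissible range $[\theta_L,\theta_U]$ of the sample means.

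The principal obstacle is bounding the draws of \emph{other positive} arms under $\mathcal{G}_i$: when both the leader $i$ and a competitor $j$ have means above $\theta$, neither sample mean is decisively below $\theta$ in typical runs, so index inversions cannot be excluded by a single marginal Hoeffding inequality. Obtaining the $\UD_i^{-4}$ scaling (rather than the naive $\UD_i^{-2}\UD_j^{-2}$) requires a joint analysis of the paths $(\hat{\mu}_i(n_i), \hat{\mu}_j(n_j))$ and a careful geometric accounting over all possible inversion moments; this is the most delicate step of the proof.
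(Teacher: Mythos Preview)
Your high-level decomposition into the ``good'' events $\mathcal{G}_i=\{\hat{i}_1=i,\Ev_i^\mathrm{POS}\}$ and the residual $\mathcal{B}$, together with the use of Lemma~\ref{lem:expectedtau} for the leader and the worst-case bound $K\Td{\Delta}$ on the error part, matches the paper. What is missing is the concrete decoupling device the paper uses for the ``other arms'' under $\mathcal{G}_i$: it introduces $Y_i=\min_{1\le n\le\tau_i}\sqrt{n}(\hat{\mu}_i(n)-\theta)$ and, for each $j\neq i$, the stopping count $N_j(a)=\min\bigl(\{n<\tau_j:\sqrt{n}(\hat{\mu}_j(n)-\theta)<a\}\cup\{\tau_j\}\bigr)$, and observes that on $\{\hat{i}_1=i\}$ one has $n_j(T{+}1)\le N_j(Y_i)$ (and $N_j(0)$ when $Y_i>0$). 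This reduces the problem to \emph{marginal} bounds: $\E[N_j(a)]$ depends only on arm $j$'s samples, and the distribution of $Y_i$ only on arm $i$'s samples, so no genuine ``joint path analysis'' is needed.

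In particular, you have the difficulty of the two cases reversed. For a competitor $j\le m$ (so $\mu_j\ge\theta$) and $a\le 0$, the event $\{N_j(a)=t\}$ implies $\hat{\mu}_j(t)<\theta=\mu_j-\UD_j$, and a single Hoeffding bound followed by a \emph{double} geometric sum $\sum_{n}\sum_{t\ge n}e^{-2t\UD_j^2}$ already gives $\E[N_j(a)]\le 2/\UD_j^4$; no information about arm $i$ enters, and the bound is in $\UD_j$, not $\UD_i$ (the $2(m-1)/\UD_i^4$ in the theorem comes from swapping the order of the double sum $\sum_i\sum_{j\neq i}$). The case $j>m$ is the more involved one: there $\E[N_j(a)]\le 4a^2/\UD_j^2+4/\UD_j^2+1$, and to get the $1/(\UD_i^2\UD_j^2)$ term one integrates the $4a^2/\UD_j^2$ piece against the tail bound $\P[Y_i\le a]\le e^{-2a^2}/(2\UD_i^2)$. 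Finally, the exponent $(\UD_i/\max\{\theta_U,1-\theta_L\})^2$ in the error probabilities is not a rescaling of Hoeffding to $[\theta_L,\theta_U]$: it comes from the deterministic lower bound $\tau_i\ge\ndelta=\lceil\frac{1}{2(\max\{\theta_U,1-\theta_L\})^2}\ln\frac{\Ndelta}{\delta}\rceil$, which lets one apply Hoeffding at sample size $\ge\ndelta$ when bounding $\P[\overline{\Ev_i^\mathrm{POS}}]$ and $\P[\hat{i}_1=j]$ for $j>m$.
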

\begin{proof}
See Appendix~\ref{app:th:apt-bc}. 
\hspace*{\fill}$\Box$\end{proof}

The next corollary is easily derived from Theorem~\ref{th:apt-bc}.

\begin{corollary}
  If $m\geq 1$, then 
  \begin{align*}
\lim_{\delta\rightarrow +0}\frac{\E[T]}{\ln\frac{1}{\delta}}\leq\sum_{i=1}^m\frac{\lim_{\delta\rightarrow +0}\P\left[\hat{i}_1=i,\Ev_i^\mathrm{POS}\right]}{2\Delta_i^2}\leq\frac{1}{2\Delta_m^2}
  \end{align*}
holds for the expected stopping time $\E[T]$ of algorithm $\baec{\mathrm{APT}_\mathrm{P}}{\underline{\mu}}{\overline{\mu}}$.
\end{corollary}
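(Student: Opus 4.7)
The plan is to divide the upper bound on $\E[T]$ given by Theorem~\ref{th:apt-bc} by $\ln(1/\delta)$, pass to the limit as $\delta\to+0$, and verify that only the leading term survives. Since $\Ndelta=\Theta(\Delta^{-2}\ln(1/\delta))$ by its definition, one has $\ln(K\Ndelta/\delta)=\ln(1/\delta)+\ln\ln(1/\delta)+O(1)$, so the dominant part of the bound,
\[
\sum_{i=1}^m\frac{\P[\hat{i}_1=i,\Ev_i^{\mathrm{POS}}]}{2\Delta_i^2}\ln\frac{K\Ndelta}{\delta},
\]
contributes exactly $\sum_{i=1}^m\lim_{\delta\to+0}\P[\hat{i}_1=i,\Ev_i^{\mathrm{POS}}]/(2\Delta_i^2)$ after division by $\ln(1/\delta)$ and passage to the limit.

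Next, I would check that every other summand in the bound of Theorem~\ref{th:apt-bc} is $o(\ln(1/\delta))$. The terms $2(m-1)/\UD_i^4$, $(1/\UD_i^2+4)\sum_{j=m+1}^K 1/\UD_j^2$, and $m(K-m)$ are $\delta$-independent constants, and the remainder $O(m(\ln(K\Ndelta/\delta))^{2/3})$ is $O((\ln(1/\delta))^{2/3})$, both of which vanish after division by $\ln(1/\delta)$. Each summand inside the large bracket multiplied by $K\Td{\Delta}$ has the form $C_i(\delta/\Ndelta)^{c_i}$ with $c_i>0$ and $C_i$ independent of $\delta$; since $K\Td{\Delta}=O(\Delta^{-2}\ln(1/\delta))$, $(\delta/\Ndelta)^{c_i}\leq \delta^{c_i}$ for small $\delta$, and $\delta^{c_i}\ln(1/\delta)\to 0$, these contributions vanish as well. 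This establishes the first inequality of the corollary.

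For the second inequality, I would exploit the ordering $\mu_1\geq\cdots\geq\mu_m$: since $\Delta_i=\mu_i-\theta_L$ for $i\leq m$ (because $\mu_i\geq\theta>\theta_L$), the sequence $\Delta_1\geq\cdots\geq\Delta_m$ is decreasing, so $1/\Delta_i^2\leq 1/\Delta_m^2$. The events $\{\hat{i}_1=i\}$ are pairwise disjoint because $\hat{i}_1$ takes a unique value, so $\sum_{i=1}^m\P[\hat{i}_1=i,\Ev_i^{\mathrm{POS}}]\leq 1$, and combining these bounds yields $\sum_{i=1}^m p_i/(2\Delta_i^2)\leq 1/(2\Delta_m^2)$, where $p_i=\lim_{\delta\to+0}\P[\hat{i}_1=i,\Ev_i^{\mathrm{POS}}]$. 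The main obstacle is verifying that the $K\Td{\Delta}$-multiplied bracket really vanishes after division by $\ln(1/\delta)$: polynomial decay $\delta^{c_i}$ must dominate the $\ln(1/\delta)$ growth of $\Td{\Delta}$, which it does because both the $\e^{2\UD_i^2}/(2\UD_i^2)$ prefactors and the exponents $(\UD_i/\max\{\theta_U,1-\theta_L\})^2$ (or one quarter thereof) are strictly positive $\delta$-independent constants. A subtler point is the interpretation of $\lim_{\delta\to+0}\P[\hat{i}_1=i,\Ev_i^{\mathrm{POS}}]$: should these limits fail to exist, the statement should be read with $\limsup$ in place of $\lim$, and the argument carries through unchanged since the sum is over finitely many uniformly bounded probabilities.
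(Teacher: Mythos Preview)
Your proposal is correct and follows exactly the approach the paper intends; the paper offers no proof beyond stating that the corollary ``is easily derived from Theorem~\ref{th:apt-bc}.'' Your argument---divide by $\ln(1/\delta)$, let the constant and sub-linear terms vanish, then bound the surviving sum via disjointness of $\{\hat{i}_1=i\}$ and monotonicity of $\Delta_i$---is precisely that derivation.

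One minor technical point: you call terms such as $2(m-1)/\UD_i^4$ and the exponents $(\UD_i/\max\{\theta_U,1-\theta_L\})^2$ ``$\delta$-independent constants,'' but $\UD_i=|\mu_i-\theta|$ depends on $\delta$ through $\theta=\theta_U-\frac{1}{1+\alpha}\Delta$ with $\alpha=\sqrt{1+\ln K/\ln(\Ndelta/\delta)}$. As $\delta\to 0$ one has $\alpha\to 1$ and $\theta\to(\theta_L+\theta_U)/2$, so each $\UD_i$ converges to $|\mu_i-(\theta_L+\theta_U)/2|$, which is strictly positive provided no $\mu_i$ sits exactly at the midpoint. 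Hence these quantities are \emph{bounded} (above and away from zero) rather than literally constant, which is all your argument actually needs. The same caveat applies to $m$ itself, which stabilizes for all sufficiently small $\delta$ under the same proviso. This does not affect the validity of your reasoning, only the phrasing.
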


\subsection{Comparison with $\baec{\mathrm{UCB}}{\underline{\mu}}{\overline{\mu}}$}\label{sec:CompWithUCB}

HDoC {\footnotesize (Hybrid algorithm for the Dilemma of Confidence)}\citep{KHSMNS2017} for good arm identification problem
uses arm selection policy UCB (Upper Confidence Bound) \citep{ACF2002}, in which 
\[
\mathrm{UCB}(t,i)=\begin{cases}
\infty & (n_i(t)=0)\\
\hat{\mu}_i(n_i(t)) + \sqrt{\frac{1}{2 n_i(t)} \ln{t}} & (n_i(t)>0)
\end{cases}
\]
is used as $\mathrm{ASP}(t,i)$.
In this section, we analyze a sample complexity upper bound of algorithm\footnote{This is not completely the same algorithm as HDoC because,
  in the HDoC's stopping condition, bounds $\hat{\mu}_{i}(n_i(t))\pm \sqrt{\frac{1}{2n_i(t)} \ln{\frac{4Kn_i(t)^2}{\delta}}}$ are used.} $\baec{\mathrm{UCB}}{\underline{\mu}}{\overline{\mu}}$ and compare it with that of $\baec{\mathrm{APT}_\mathrm{P}}{\underline{\mu}}{\overline{\mu}}$.

  Define $\Delta_{1i}$ as $\Delta_{1i}=\mu_1-\mu_i$. Then, we can obtain the following theorem and corollary, from which, 
we know that, when $\delta$ is small, the dominant terms of our upper bound on the expected stopping time of 
algorithm $\baec{\mathrm{UCB}}{\underline{\mu}}{\overline{\mu}}$, are
$\frac{1}{2\Delta_i^2}\ln\frac{1}{\delta}$ ($i:\mu_i=\mu_1$),
whose sum is $\frac{|\{i\mid \mu_i=\mu_1\}|}{2\Delta_1^2}\ln\frac{1}{\delta}$.

\begin{theorem}\label{th:ucb-bc}
    If $m\geq 1$, then expected stopping time $\E[T]$ of algorithm $\baec{\mathrm{UCB}}{\underline{\mu}}{\overline{\mu}}$ is upper bounded as 
  \begin{align*}
    \E[T]\leq &
    \sum_{i:\mu_i=\mu_1}
    \left(\frac{1}{2\Delta_i^2}\ln\frac{K\Ndelta}{\delta}+O\left(\left(\ln\frac{K\Ndelta}{\delta}\right)^{\frac{2}{3}}\right)\right)\\
    & + \sum_{i:\mu_i<\mu_1}\left(\frac{\ln K\Td{\Delta}}{2\Delta_{1i}^2}+O((\ln K\Td{\Delta})^{\frac{2}{3}}) \right)\\&+O((\ln K\Td{\Delta})^{\frac{2}{3}}\ln\ln K\Td{\Delta})+\frac{\e^{2\UD_1^2}K\Td{\Delta}}{2\UD_1^2}\left(\frac{\delta}{\Ndelta}\right)^{\left(\frac{\UD_1}{\max\{\theta_U,1-\theta_L\}}\right)^2}.
  \end{align*}
\end{theorem}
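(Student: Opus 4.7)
The plan is to decompose the stopping time as $\E[T] = \sum_{i=1}^K \E[n_i(T+1)]$ and to bound each summand separately according to whether arm $i$ ties for the maximum mean or is strictly suboptimal, handling a residual bad event for arm $1$ on its own. The argument mirrors the classical UCB regret analysis but must be adapted to the adaptive candidate set $A_t$ and to the asymmetric stopping rule.

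First, for any arm $i$ with $\mu_i = \mu_1$, the hypothesis $m\geq 1$ gives $\mu_i \geq \theta$, and the deterministic bound $n_i(T+1) \leq \tau_i$ holds because after $\tau_i$ pulls either $\LB{i}{\tau_i} \geq \theta_L$ (and the algorithm halts with ``positive'') or $\UB{i}{\tau_i} < \theta_U$ (and arm $i$ is removed from $A_{t+1}$). Applying Lemma~\ref{lem:expectedtau} with $\Ev$ the certain event then yields the first sum in the statement, including its $O((\ln K\Ndelta/\delta)^{2/3})$ remainder.

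Second, for each strictly suboptimal arm $i$ with $\mu_i < \mu_1$, I would carry out the standard UCB-style analysis: as long as $1 \in A_t$, pulling arm $i$ at time $t$ forces $\mathrm{UCB}(t,i) \geq \mathrm{UCB}(t,1)$, and combining Hoeffding concentration on $\hat{\mu}_i$ and $\hat{\mu}_1$ with a peeling argument over the possible values of $n_1(t)$ gives $\E[n_i(T+1)] \leq \frac{\ln T}{2\Delta_{1i}^2} + O((\ln T)^{2/3})$. Substituting the deterministic horizon $T \leq K\Td{\Delta}$ from Theorem~\ref{thwelldef} produces the second sum; the additive $O((\ln K\Td{\Delta})^{2/3}\ln\ln K\Td{\Delta})$ term arises from summing the correction contributions across the union bound over sample sizes of arm $1$.

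The residual event that still needs control is that arm $1$ is erroneously removed from $A_t$ before it can trigger the ``positive'' stopping rule: this requires $\UB{1}{n} < \theta_U$ for some $n \leq \Td{\Delta}$, equivalently a lower deviation of $\hat{\mu}_1(n)$ from $\mu_1$ of scale $\UD_1$. A Hoeffding bound gives per-$n$ probability at most $(\delta/\Ndelta)^{(\UD_1/\max\{\theta_U,1-\theta_L\})^2}$, and multiplying by the worst-case contribution $K\Td{\Delta}$ from Theorem~\ref{thwelldef} together with the geometric-series factor $\e^{2\UD_1^2}/(2\UD_1^2)$ produces the final term of the bound. The main obstacle is precisely this coupling: standard UCB analysis presumes a fixed arm set, while here $A_t$ shrinks and arm $1$ itself can drop out, so cleanly separating the ``good'' event on which arm $1$ survives long enough for the UCB comparison to dominate every suboptimal arm from the exponentially small bad event of arm $1$ being wrongly removed is the central technical difficulty of the proof.
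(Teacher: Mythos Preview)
Your high-level decomposition matches the paper's: best arms via $n_i(T+1)\le\tau_i$ and Lemma~\ref{lem:expectedtau}, strictly suboptimal arms via a UCB comparison on the deterministic horizon $K\Td{\Delta}$, and the bad event that arm~$1$ is judged negative handled by Hoeffding plus the worst-case bound $T\le K\Td{\Delta}$. The technical route for the suboptimal arms differs, however. The paper does \emph{not} peel over $n_1(t)$. Instead it introduces a free parameter $\epsilon>0$ and splits every round with $\mu_{i_t}<\mu_1$ into either (i) an overestimation round for $i_t$, namely $\mathrm{UCB}(t,i_t)\ge\mu_1-\epsilon$, bounded per arm $i$ by $\frac{\ln K\Td{\Delta}}{2(\Delta_{1i}-2\epsilon)^2}+O(1/\epsilon^2)$ (Lemma~\ref{lem:UcbOverEst}), or (ii) an underestimation round for arm~$1$, namely $\mathrm{UCB}(t,1)<\mu_1-\epsilon$, bounded once and for all by $O\!\bigl((1/\epsilon^2)\ln(1/\epsilon^2)\bigr)$ (Lemma~\ref{lem:UcbUnderEst}). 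Choosing $\epsilon=O((\ln K\Td{\Delta})^{-1/3})$ then produces both the $O((\ln K\Td{\Delta})^{2/3})$ correction inside the sum over $i$ and the stand-alone $O((\ln K\Td{\Delta})^{2/3}\ln\ln K\Td{\Delta})$ term; the latter comes entirely from (ii), not from a union bound over sample sizes of arm~$1$ as you suggest. The $\epsilon$-trick has the advantage of delivering the sharp leading constant $\tfrac{1}{2\Delta_{1i}^2}$ directly, which a naive peeling over $n_1(t)$ typically loses. For the residual bad event the paper invokes Lemma~\ref{lem:CompEventProb}: $\overline{\Ev_1^{\mathrm{POS}}}$ forces $\hat{\mu}_1(\tau_1)<\theta$, and the deterministic lower bound $\tau_1\ge\ndelta$ (Lemma~\ref{lemma:taulb}) is what converts the per-$n$ Hoeffding tail $e^{-2n\UD_1^2}$ into the stated $(\delta/\Ndelta)$-power after a geometric sum---this step is implicit in your sketch but essential.
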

\begin{proof}
  See Appendix~\ref{proof:ucb-bc}.
\hspace*{\fill}$\Box$\end{proof}

\begin{corollary}
  If $m\geq 1$, then 
  \begin{align*}
\lim_{\delta\rightarrow +0}\frac{\E[T]}{\ln\frac{1}{\delta}}\leq\frac{|\{i\mid \mu_i=\mu_1\}|}{2\Delta_i^2}
  \end{align*}
holds for the expected stopping time $\E[T]$ of algorithm $\baec{\mathrm{UCB}}{\underline{\mu}}{\overline{\mu}}$.
\end{corollary}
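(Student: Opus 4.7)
The plan is to derive the corollary as a direct asymptotic consequence of the finite-$\delta$ upper bound in Theorem~\ref{th:ucb-bc}: divide both sides by $\ln(1/\delta)$, let $\delta\to +0$, and verify that only the first sum in the bound contributes in the limit.

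First I would record the $\delta$-scaling of the parameters appearing in the bound. From $\Ndelta=\bigl\lceil\frac{2\e}{(\e-1)\Delta^2}\ln\frac{2\sqrt{K}}{\Delta^2\delta}\bigr\rceil$ one has $\Ndelta=\Theta(\ln(1/\delta))$, hence
\[
\ln\frac{K\Ndelta}{\delta}=\ln\frac{1}{\delta}+O\bigl(\ln\ln(1/\delta)\bigr),
\]
and likewise $\Td{\Delta}=\Theta(\ln(1/\delta))$ with $\ln(K\Td{\Delta})=O(\ln\ln(1/\delta))$. In particular $\ln(K\Ndelta/\delta)/\ln(1/\delta)\to 1$.

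Next I would walk through the four groups of terms in the theorem's upper bound, dividing each by $\ln(1/\delta)$. The first sum, over $i$ with $\mu_i=\mu_1$, equals $\sum_{i:\mu_i=\mu_1}\frac{\ln(K\Ndelta/\delta)}{2\Delta_i^2}$, so by the expansion above it tends to $\sum_{i:\mu_i=\mu_1}\frac{1}{2\Delta_i^2}$; the $O((\ln(K\Ndelta/\delta))^{2/3})$ remainders are $o(\ln(1/\delta))$ and vanish. The sum over $i$ with $\mu_i<\mu_1$ is $O(\ln\ln(1/\delta))$ and is negligible, and the $O((\ln K\Td{\Delta})^{2/3}\ln\ln K\Td{\Delta})$ term is $O((\ln\ln(1/\delta))^{5/3})$ and also vanishes. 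Using $\Delta_i=\Delta_1$ whenever $\mu_i=\mu_1$, the surviving limit is $|\{i\mid \mu_i=\mu_1\}|/(2\Delta_1^2)$, which matches the stated right-hand side (the $\Delta_i$ there being any representative with $\mu_i=\mu_1$).

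The only step that wants a little care is the last term
\[
\frac{\e^{2\UD_1^2}K\Td{\Delta}}{2\UD_1^2}\bigl(\delta/\Ndelta\bigr)^{(\UD_1/\max\{\theta_U,1-\theta_L\})^2}.
\]
Here $K\Td{\Delta}=\Theta(\ln(1/\delta))$ grows, but with $c:=(\UD_1/\max\{\theta_U,1-\theta_L\})^2>0$ (strict positivity following from $m\geq 1$ together with the implicit assumption $\UD_1>0$ already used in Theorem~\ref{th:ucb-bc}), the factor $(\delta/\Ndelta)^c$ decays like $\delta^c/\mathrm{polylog}(1/\delta)$, which easily dominates the logarithmic growth, so the whole term tends to $0$ as $\delta\to +0$. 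That is the main---and only mild---obstacle; no further ingredients are required.
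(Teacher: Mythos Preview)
Your proposal is correct and follows exactly the approach the paper intends: the corollary is stated immediately after Theorem~\ref{th:ucb-bc} with no separate proof, so it is meant to be read off from that bound by dividing through by $\ln(1/\delta)$ and sending $\delta\to+0$, just as you do. Your term-by-term asymptotic analysis (in particular the observation that $\ln(K\Ndelta/\delta)/\ln(1/\delta)\to 1$, that all $\ln(K\Td{\Delta})$-type terms are $O(\ln\ln(1/\delta))$, and that the final $(\delta/\Ndelta)^{c}$ factor kills the logarithmic growth of $K\Td{\Delta}$) is precisely what is needed; the minor overestimate $(\ln\ln(1/\delta))^{5/3}$ for the third group is harmless since the correct order $(\ln\ln(1/\delta))^{2/3}\ln\ln\ln(1/\delta)$ is even smaller.
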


\begin{remark}
  From the upper bound shown by Ineq.~(\ref{th:est}), inequality
  \[
\lim_{\delta\rightarrow +0}\frac{\E[T]}{\ln\frac{1}{\delta}}\leq\sum_{i=1}^K\frac{1}{2\Delta_i^2}
  \]
  is derived. This means that the expected stopping time upper bounds for algorithm $\baec{\mathrm{APT}_\mathrm{P}}{\underline{\mu}}{\overline{\mu}}$ and $\baec{\mathrm{UCB}}{\underline{\mu}}{\overline{\mu}}$ shown in Theorem~\ref{th:apt-bc} and \ref{th:ucb-bc} are asymptotically smaller than
  that of algorithm $\baec{\ast}{\underline{\mu}}{\overline{\mu}}$ as $\delta\rightarrow +0$.
 \end{remark} 

\begin{remark}
  When all the arms have Bernoulli loss distributions,
  the right-hand side of Ineq.~(\ref{lower-bound-positive}) in Theorem~\ref{thlowerboud} can be upper bounded as
\begin{align*}
  \frac{1-2\delta}{d(\mu_1,\theta_L)}\ln\frac{1-\delta}{\delta}
  \le
  \frac{1-2\delta}{2\Delta_1^2}\ln\frac{1-\delta}{\delta}
\end{align*}
by Pinsker's Inequality.
Considering tightness of Pinsker's Inequality,
$\frac{1}{2\Delta_1^2}$ is considered to be a tight upper bound of $\lim_{\delta\rightarrow +0}\frac{\E[T]}{\ln\frac{1}{\delta}}$ if Ineq.~(\ref{lower-bound-positive}) is tight.
There is a large gap between $\sum_{i=1}^m\frac{\lim_{\delta\rightarrow +0}\P\left[\hat{i}_1=i,\Ev_i^\mathrm{POS}\right]}{2\Delta_i^2}$
and $\frac{1}{2\Delta_1^2}$, and improvement of the upper bound on the number of samples for $\mathrm{APT}_\mathrm{P}$ seems 
difficult, so the algorithm BAEC with arm selection policy $\mathrm{APT}_\mathrm{P}$ does not seem asymptotically optimal
unless $\lim_{\delta\rightarrow +0}\P\left[\hat{i}_1=1,\Ev_i^\mathrm{POS}\right]=1$.
On the other hand, $\lim_{\delta\rightarrow +0}\frac{\E[T]}{\ln\frac{1}{\delta}}$ for $\mathrm{UCB}$ is upper bounded by $\frac{1}{2\Delta_1^2}$, that is, asymptotically optimal when $\mu_i<\mu_1$ for all arm $i \neq 1$. In the case with $\mu_i=\mu_1$ for all $i=1,\dots,m$, however,
$\lim_{\delta\rightarrow +0}\frac{\E[T]}{\ln\frac{1}{\delta}}\leq\frac{m}{2\Delta_1^2}$ holds for $\mathrm{UCB}$ while the corresponding bound for $\mathrm{APT}_\mathrm{P}$ is asymptotically optimal, that is, $\lim_{\delta\rightarrow +0}\frac{\E[T]}{\ln\frac{1}{\delta}}\leq\frac{1}{2\Delta_1^2}$ holds.
\end{remark}

\begin{remark}
  Comparing non-dominant terms of $\baec{\mathrm{APT}_\mathrm{P}}{\underline{\mu}}{\overline{\mu}}$ and $\baec{\mathrm{UCB}}{\underline{\mu}}{\overline{\mu}}$, a cause for the large upper bound of the expected stopping time can be the existence of arms $i$ whose loss mean $\mu_i$ is close to $\mu_1$ in $\baec{\mathrm{UCB}}{\underline{\mu}}{\overline{\mu}}$ while it can be the existence of arms $i$ whose loss mean $\mu_i$ is close to $\theta$ in $\baec{\mathrm{APT}_\mathrm{P}}{\underline{\mu}}{\overline{\mu}}$.
\end{remark}

\section{Experiments}

In this section, we report the results of our experiments
that were conducted in order to demonstrate the effectiveness of our stopping condition
and arm selection policy on the stopping time.

In all the tables of experimental results, the smallest averaged stopping time in each parameter setting is bolded or italic, and bolded ones mean statistically significant difference.

\subsection{Effectiveness of $\Delta$-Dependent Asymmetric Confidence Bounds}\label{exp:stopcond}

As upper and lower confidence bounds $\mathrm{LB}$ and $\mathrm{UB}$,
we proposed $\underline{\mu}$ and $\overline{\mu}$ based on $\Delta$-dependent asymmetric bounds $\UB{i}{n}$ and $\LB{i}{n}$ defined by Eq.~(\ref{lb}), instead of $\underline{\mu}'$ and $\overline{\mu}'$ based on conventional non-$\Delta$-dependent symmetric bounds $\UBd{i}{n}$ and $\LBd{i}{n}$ defined by Eq.~(\ref{lbd}).
In this subsection, we empirically compare the number of draws for an arm with mean $\mu_i$ to satisfy the stopping condition using those bounds. 

In the experiment, an i.i.d. loss sequence $X_i(1),\cdots$ was generated according to a Bernoulli distribution with mean $\mu_i$
and we measured the stopping time $\tau_i$ which is the smallest $n$ that satisfies the stopping condition ($\LB{i}{n}\geq \theta_L$ or $\UB{i}{n}< \theta_U$).
The stopping times were averaged over 100 runs for each combination of parameters $\delta=0.001, 0.01$, $\mu_i = 0.2, 0.4, 0.6, 0.8$ and $(\theta_L,\theta_U) = (0.1,0.3), (0.3,0.5), (0.5,0.7), (0.7,0.9)$. Note that $\Delta=\theta_U-\theta_L=0.2$ for all the setting.
We used $K=100$ so as to make the bounds asymmetric. As a result, $\alpha=1.154,1.186$ for $\delta=0.001, 0.01$, respectively.
So, $\theta$ is $(\theta_L+\theta_U)/2+0.007$ for $\delta=0.001$ and $(\theta_L+\theta_U)/2+0.009$ for $\delta=0.01$.

\begin{table*}[t]
\begin{center}
  \caption{Number of draws of arm $i$ with loss mean $\mu_i$ until stopping condition is satisfied.
    The numbers are averaged over 100 runs and the intervals determined by `$\pm$' with its following numbers are their 99\% confidence intervals.
  }
\label{tbl:3}
\scriptsize
\begin{tabular}{@{\hspace*{1mm}}c@{\hspace*{1mm}}|@{}c@{\hspace*{1mm}}|@{\hspace*{1mm}}c@{\hspace*{1mm}}|@{\hspace*{1mm}}c@{\hspace*{1mm}}c@{\hspace*{1mm}}c@{\hspace*{1mm}}c@{\hspace*{1mm}}c@{\hspace*{1mm}}} \hline
  $\begin{array}{c}\delta\\ (\Td{\Delta})\end{array}$  & $(\theta_L,\theta_U)$ & $\mathrm{LB},\mathrm{UB}$ & $\mu_i = 0.2$ & $\mu_i=0.4$ & $\mu_i=0.6$ & $\mu_i=0.8$ \\ \hline
\multirow{8}{*}{$\begin{array}{c}0.01\\(684)\end{array}$} &\multirow{2}{*}{\hspace*{1mm}$(0.1,0.3)$}    &$\underline{\mu},\overline{\mu}$ & $\mathbf{497.64 \pm 28.47}$& $\mathit{88.87 \pm 8.61}$& $\mathit{35.16 \pm 2.88}$& $16.96 \pm 1.23$\\
    &    &$\underline{\mu}',\overline{\mu}'$ & $957.81 \pm 37.56$& $104.95 \pm 10.33$& $37.04 \pm 3.50$& $\mathit{16.82 \pm 1.28}$\\ \cline{2-7}
    &\multirow{2}{*}{\hspace*{1mm}$(0.3,0.5)$}    &$\underline{\mu},\overline{\mu}$ & $\mathbf{63.24 \pm 4.88}$& $\mathbf{427.10 \pm 32.90}$& $\mathit{86.50 \pm 8.18}$& $\mathit{30.79 \pm 1.91}$\\
    &    &$\underline{\mu}',\overline{\mu}'$ & $106.67 \pm 7.83$& $889.36 \pm 51.33$& $103.44 \pm 9.95$& $32.26 \pm 2.35$\\ \cline{2-7}
&\multirow{2}{*}{\hspace*{1mm}$(0.5,0.7)$}    &$\underline{\mu},\overline{\mu}$ & $\mathbf{23.52 \pm 1.87}$& $\mathbf{63.79 \pm 6.54}$& $\mathbf{435.91 \pm 37.37}$& $\mathbf{91.56 \pm 6.99}$\\
    &    &$\underline{\mu}',\overline{\mu}'$ & $35.01 \pm 2.46$& $105.13 \pm 9.73$& $885.55 \pm 47.61$& $109.35 \pm 7.93$\\ \cline{2-7}
    &\multirow{2}{*}{\hspace*{1mm}$(0.7,0.9)$}    &$\underline{\mu},\overline{\mu}$ & $\mathbf{13.90 \pm 0.95}$& $\mathbf{24.85 \pm 2.32}$& $\mathbf{65.07 \pm 6.50}$& $\mathbf{500.93 \pm 29.47}$\\
&    &$\underline{\mu}',\overline{\mu}'$ & $17.60 \pm 1.36$& $34.86 \pm 3.10$& $106.25 \pm 10.03$& $963.05 \pm 34.02$\\ \hline
\multirow{8}{*}{$\begin{array}{c}0.001\\(808)\end{array}$} &\multirow{2}{*}{\hspace*{1mm}$(0.1,0.3)$}    &$\underline{\mu},\overline{\mu}$ & $\mathbf{595.24 \pm 31.77}$& $\mathbf{102.05 \pm 8.26}$& $\mathit{37.26 \pm 3.11}$& $18.07 \pm 1.19$\\
    &    &$\underline{\mu}',\overline{\mu}'$ & $1072.65 \pm 43.99$& $123.16 \pm 10.42$& $39.68 \pm 3.77$& $\mathit{17.37 \pm 1.31}$\\ \cline{2-7}
    &\multirow{2}{*}{\hspace*{1mm}$(0.3,0.5)$}    &$\underline{\mu},\overline{\mu}$ & $\mathbf{75.92 \pm 5.17}$& $\mathbf{560.31 \pm 34.91}$& $\mathit{100.66 \pm 9.37}$& $\mathit{38.76 \pm 2.50}$\\
    &    &$\underline{\mu}',\overline{\mu}'$ & $123.73 \pm 8.64$& $980.23 \pm 49.95$& $119.85 \pm 11.64$& $41.10 \pm 2.91$\\ \cline{2-7}
    &\multirow{2}{*}{\hspace*{1mm}$(0.5,0.7)$}    &$\underline{\mu},\overline{\mu}$ & $\mathbf{29.43 \pm 2.14}$& $\mathbf{73.87 \pm 6.71}$& $\mathbf{546.24 \pm 37.43}$& $\mathbf{107.93 \pm 7.50}$\\
    &    &$\underline{\mu}',\overline{\mu}'$ & $41.32 \pm 2.51$& $116.51 \pm 9.38$& $969.24 \pm 53.78$& $126.04 \pm 8.71$\\ \cline{2-7}
    &\multirow{2}{*}{\hspace*{1mm}$(0.7,0.9)$}    &$\underline{\mu},\overline{\mu}$ & $\mathbf{15.50 \pm 1.05}$& $\mathbf{29.33 \pm 2.50}$& $\mathbf{76.96 \pm 7.08}$& $\mathbf{599.91 \pm 29.82}$\\
    &    &$\underline{\mu}',\overline{\mu}'$ & $19.62 \pm 1.33$& $40.21 \pm 3.31$& $117.49 \pm 10.16$& $1075.36 \pm 39.92$\\ \hline
\end{tabular}
\end{center}
\end{table*}

The result is shown in Table \ref{tbl:3}.
As we can see from the table, the stopping condition using $\Delta$-dependent asymmetric bounds makes the stopping time fast compared to the stopping condition using conventional bounds. The effect of the proposed stopping condition becomes significant when the arm is neutral or negative, notably, 
1.74$\sim$2.08 times faster when $\mu_i\approx\theta$. The reason why effect for negative arms is larger than that for positive arms, is due to the asymmetry of the upper and lower confidence intervals:
the upper confidence interval is smaller than the lower confidence interval.

\subsection{Effectiveness of Arm Selection Policy $\mathrm{APT}_\mathrm{P}$}

\subsubsection{Simulation Using Synthetic Distribution Parameters}

In this experiment, we first generated distribution means $\mu_1,\dots,\mu_{100}$ of 100 arms,
and then ran algorithm $\baec{\mathrm{APT}_\mathrm{P}}{\underline{\mu}}{\overline{\mu}}$ simulating an arm-$i$ draw by generating a loss according to
a Bernoulli distribution with mean $\mu_i$.

For given natural number $m$ and a threshold pair $(\theta_L,\theta_U)$, $m$ distribution means were generated according to a uniform distribution over $[\theta, 1]$ and $100-m$ distribution means were generated according to a uniform distribution over $[0,\theta)$, where $\theta=\theta_U-\frac{1}{1+\alpha}\Delta$.

  For each set of 100 distribution means, we also ran algorithms $\baec{\mathrm{LUCB}}{\underline{\mu}}{\overline{\mu}}$ and
$\baec{\mathrm{UCB}}{\underline{\mu}}{\overline{\mu}}$ in addition to $\baec{\mathrm{APT}_\mathrm{P}}{\underline{\mu}}{\overline{\mu}}$ by generating the same
i.i.d. loss sequence for the same arm, which can be realized by feeding a same seed to a random number generator for the same arm.
Here, arm selection policy LUCB uses
\[
\mathrm{LUCB}(t,i)=\begin{cases}
\infty & (n_i(t)=0)\\
\hat{\mu}_i(n_i(t)) & (n_i(t)>0, t \text{ is odd})\\
\hat{\mu}_i(n_i(t)) + \sqrt{\frac{1}{2 n_i(t)} \ln{\frac{5 K t^4}{4 \delta}}} & (n_i(t)>0, t \text{ is even}).
\end{cases}
\]
Note that LUCB\footnote{LUCB means that both of LCB(lower confidence bound) and UCB(upper confidence bound) are used in the algorithm. In fact,
  it chooses the arm $i$ with the smallest LCB among the arms with the largest $m$ sample means when $m\geq 2$.} \citep{KTAS2012} is an algorithm for the best $k$ arm identification problem, and the above policy is exactly the same arm-selection policy as original LUCB for $k=1$.

For each $m=0, 1, 25, 50, 100$,
we generated 100 sets of 100 distribution means, and ran the three algorithms for each set and
for each combination of parameters $\delta=0.01, 0.001$ and $(\theta_L,\theta_U) = (0.19,0.21), (0.49,0.51), (0.79,0.81),$ $(0.1,0.3), (0.4,0.6),(0.7,0.9)$.
As for threshold pairs $(\theta_L,\theta_U)$, $\Delta=0.02$ for the first three and $\Delta=0.2$ for the last three. 
Stopping times were averaged over 100 runs.

\begin{table*}[t]
\begin{center}
  \caption{The average stopping times $\times 10^{-3}$ of three algorithms, and their 99\% confidence intervals in the simulations using synthetic distribution parameters. 
  }\label{exp:syn}
\scriptsize
\begin{tabular}{@{}c@{\hspace*{1mm}}|@{\hspace*{1mm}}l@{\hspace*{1mm}}|@{\hspace*{1mm}}r@{\hspace*{1mm}}r@{\hspace*{1mm}}r@{\hspace*{1mm}}r@{\hspace*{1mm}}r@{\hspace*{1mm}}} \hline
 $\begin{array}{c}(\theta_L,\theta_U)\\\theta\end{array}$ & Policy & $m = 0$ & $m = 1$ & $m = 25$ & $m = 50$ & $m = 100$ \\ \hline
 \multicolumn{7}{l}{$\Delta=0.2, \delta=0.01 \ (K\Td{\Delta}=68.4)$}\\ \hline
 \multirow{3}{*}{$\begin{array}{c}\hspace*{1mm}(0.1,0.3)\\0.2085\end{array}$}    & $\mathrm{APT}_\mathrm{P}$        & $\mathit{11.65 \pm 1.69}$& $\mathit{0.92 \pm 0.49}$& $\mathbf{0.05 \pm 0.02}$& $\mathbf{0.04 \pm 0.02}$& $\mathbf{0.04 \pm 0.02}$\\
 & LUCB       & $12.16 \pm 1.55$& $0.99 \pm 0.56$& $0.13 \pm 0.00$& $0.14 \pm 0.01$& $0.14 \pm 0.01$\\
 & UCB        & $15.44 \pm 0.59$& $2.43 \pm 1.12$& $0.28 \pm 0.00$& $0.34 \pm 0.00$& $0.46 \pm 0.01$\\ \hline
 \multirow{3}{*}{$\begin{array}{c}\hspace*{1mm}(0.4,0.6)\\0.5085\end{array}$}    & $\mathrm{APT}_\mathrm{P}$      & $\mathit{5.88 \pm 0.81}$& $\mathit{0.98 \pm 0.48}$& $\mathbf{0.06 \pm 0.01}$& $\mathbf{0.07 \pm 0.02}$& $\mathbf{0.05 \pm 0.01}$\\
       & LUCB       & $6.30 \pm 0.72$& $1.13 \pm 0.51$& $0.21 \pm 0.02$& $0.21 \pm 0.02$& $0.21 \pm 0.02$\\
       & UCB        & $7.16 \pm 0.45$& $1.58 \pm 0.57$& $0.50 \pm 0.01$& $0.66 \pm 0.01$& $1.06 \pm 0.02$\\ \hline
\multirow{3}{*}{$\begin{array}{c}\hspace*{1mm}(0.7,0.9)\\0.8085\end{array}$}    & $\mathrm{APT}_\mathrm{P}$ & $\mathit{5.10 \pm 0.39}$& $\mathit{1.17 \pm 0.43}$& $\mathbf{0.19 \pm 0.02}$& $\mathbf{0.16 \pm 0.02}$& $\mathbf{0.18 \pm 0.03}$\\
       & LUCB       & $5.13 \pm 0.36$& $1.40 \pm 0.43$& $0.53 \pm 0.05$& $0.60 \pm 0.07$& $0.62 \pm 0.09$\\
       & UCB        & $5.40 \pm 0.27$& $2.04 \pm 0.43$& $1.82 \pm 0.04$& $2.76 \pm 0.05$& $4.58 \pm 0.07$\\ \hline
 \multicolumn{7}{l}{$\Delta=0.2, \delta=0.001 \ (K\Td{\Delta}=80.8)$}\\ \hline
\multirow{3}{*}{$\begin{array}{c}\hspace*{1mm}(0.1,0.3)\\0.2072\end{array}$}    & $\mathrm{APT}_\mathrm{P}$        & $\mathit{13.44 \pm 2.09}$& $\mathit{1.16 \pm 0.67}$& $\mathbf{0.04 \pm 0.01}$& $\mathbf{0.05 \pm 0.02}$& $\mathbf{0.04 \pm 0.01}$\\
& LUCB       & $14.12 \pm 1.95$& $1.36 \pm 0.81$& $0.15 \pm 0.01$& $0.15 \pm 0.01$& $0.16 \pm 0.01$\\
& UCB        & $17.89 \pm 0.93$& $2.65 \pm 1.27$& $0.30 \pm 0.01$& $0.37 \pm 0.01$& $0.51 \pm 0.01$\\ \hline
\multirow{3}{*}{$\begin{array}{c}\hspace*{1mm}(0.4,0.6)\\0.5072\end{array}$}    & $\mathrm{APT}_\mathrm{P}$        & $\mathit{7.81 \pm 0.92}$& $\mathit{1.09 \pm 0.57}$& $\mathbf{0.09 \pm 0.02}$& $\mathbf{0.09 \pm 0.02}$& $\mathbf{0.08 \pm 0.02}$\\
       & LUCB       & $8.31 \pm 0.80$& $1.25 \pm 0.65$& $0.22 \pm 0.02$& $0.24 \pm 0.02$& $0.24 \pm 0.02$\\
       & UCB        & $8.90 \pm 0.59$& $1.71 \pm 0.64$& $0.54 \pm 0.02$& $0.74 \pm 0.01$& $1.16 \pm 0.02$\\ \hline
\multirow{3}{*}{$\begin{array}{c}\hspace*{1mm}(0.7,0.9)\\0.8072\end{array}$}    & $\mathrm{APT}_\mathrm{P}$        & $\mathit{6.15 \pm 0.58}$& $\mathit{1.33 \pm 0.55}$& $\mathbf{0.21 \pm 0.03}$& $\mathbf{0.22 \pm 0.03}$& $\mathbf{0.19 \pm 0.03}$\\
       & LUCB       & $6.34 \pm 0.50$& $1.70 \pm 0.58$& $0.60 \pm 0.05$& $0.66 \pm 0.08$& $0.80 \pm 0.11$\\
       & UCB        & $6.63 \pm 0.38$& $2.36 \pm 0.54$& $1.99 \pm 0.05$& $3.01 \pm 0.05$& $5.19 \pm 0.08$\\ \hline
 \multicolumn{7}{l}{$\Delta=0.02, \delta=0.01 \ (K\Td{\Delta}=9309.9)$}\\ \hline
 \multirow{3}{*}{$\begin{array}{c}\hspace*{1mm}(0.19,0.21)\\0.2006\end{array}$}    & $\mathrm{APT}_\mathrm{P}$        & $\mathit{338.20 \pm 25.53}$& $8.52 \pm 6.39$& $\mathit{0.13 \pm 0.13}$& $\mathit{0.12 \pm 0.06}$& $\mathbf{0.06 \pm 0.02}$\\
 & LUCB       & $341.02 \pm 24.09$& $\mathit{6.30 \pm 4.30}$& $0.17 \pm 0.01$& $0.18 \pm 0.01$& $0.17 \pm 0.01$\\
 & UCB        & $344.31 \pm 23.44$& $9.35 \pm 5.64$& $0.33 \pm 0.01$& $0.42 \pm 0.01$& $0.60 \pm 0.01$\\ \hline
 \multirow{3}{*}{$\begin{array}{c}\hspace*{1mm}(0.49,0.51)\\0.5006\end{array}$}    & $\mathrm{APT}_\mathrm{P}$        & $\mathit{133.93 \pm 11.34}$& $9.29 \pm 6.92$& $\mathit{0.27 \pm 0.21}$& $\mathit{0.18 \pm 0.11}$& $\mathit{0.27 \pm 0.32}$\\
       & LUCB       & $135.27 \pm 11.25$& $\mathit{8.99 \pm 5.87}$& $0.28 \pm 0.02$& $0.31 \pm 0.03$& $0.34 \pm 0.04$\\
 & UCB        & $135.04 \pm 11.23$& $9.48 \pm 6.03$& $0.69 \pm 0.02$& $0.98 \pm 0.02$& $1.56 \pm 0.04$\\ \hline
 \multirow{3}{*}{$\begin{array}{c}\hspace*{1mm}(0.79,0.81)\\0.8006\end{array}$}    & $\mathrm{APT}_\mathrm{P}$        & $\mathit{84.85 \pm 8.80}$& $\mathit{12.97 \pm 7.09}$& $1.27 \pm 0.91$& $\mathbf{0.71 \pm 0.15}$& $\mathit{1.13 \pm 0.44}$\\
 & LUCB       & $85.55 \pm 8.80$& $17.98 \pm 10.60$& $\mathit{1.05 \pm 0.08}$& $1.29 \pm 0.13$& $1.57 \pm 0.20$\\
        & UCB        & $85.26 \pm 8.78$& $17.14 \pm 9.47$& $3.62 \pm 0.11$& $5.58 \pm 0.11$& $9.91 \pm 0.17$\\ \hline
 \multicolumn{7}{l}{$\Delta=0.02, \delta=0.001 \ (K\Td{\Delta}=10530.7)$}\\ \hline
\multirow{3}{*}{$\begin{array}{c}\hspace*{1mm}(0.19,0.21)\\0.2005\end{array}$}    & $\mathrm{APT}_\mathrm{P}$        & $\mathit{393.86 \pm 26.92}$& $8.83 \pm 6.18$& $\mathbf{0.09 \pm 0.04}$& $0.20 \pm 0.35$& $\mathbf{0.07 \pm 0.02}$\\
       & LUCB       & $397.49 \pm 26.08$& $\mathit{5.17 \pm 3.57}$& $0.17 \pm 0.01$& $\mathit{0.18 \pm 0.01}$& $0.19 \pm 0.02$\\
       & UCB        & $395.85 \pm 26.05$& $9.47 \pm 5.99$& $0.34 \pm 0.01$& $0.44 \pm 0.01$& $0.63 \pm 0.02$\\ \hline
\multirow{3}{*}{$\begin{array}{c}\hspace*{1mm}(0.49,0.51)\\0.5005\end{array}$}    & $\mathrm{APT}_\mathrm{P}$        & $\mathit{154.09 \pm 13.34}$& $\mathit{10.12 \pm 6.72}$& $\mathit{0.28 \pm 0.13}$& $\mathit{0.26 \pm 0.17}$& $\mathit{0.24 \pm 0.19}$\\
       & LUCB       & $158.83 \pm 13.18$& $10.86 \pm 7.43$& $0.30 \pm 0.02$& $0.33 \pm 0.03$& $0.37 \pm 0.04$\\
& UCB        & $158.06 \pm 13.03$& $10.31 \pm 6.52$& $0.77 \pm 0.02$& $1.05 \pm 0.02$& $1.72 \pm 0.04$\\ \hline
\multirow{3}{*}{$\begin{array}{c}\hspace*{1mm}(0.79,0.81)\\0.8005\end{array}$}    & $\mathrm{APT}_\mathrm{P}$        & $\mathit{96.72 \pm 9.47}$& $\mathit{14.99 \pm 7.66}$& $1.23 \pm 0.57$& $1.84 \pm 1.45$& $\mathit{1.23 \pm 0.60}$\\
       & LUCB       & $97.94 \pm 9.69$& $21.21 \pm 12.38$& $\mathit{1.21 \pm 0.09}$& $\mathit{1.46 \pm 0.13}$& $1.80 \pm 0.22$\\
       & UCB & $97.45 \pm 9.53$& $17.80 \pm 9.44$& $3.86 \pm 0.11$& $6.10 \pm 0.13$& $10.61 \pm 0.20$\\ \hline
\end{tabular}
\end{center}
\end{table*}

The result is shown in Table~\ref{exp:syn}.
In the case with large $\Delta(=0.2)$, the stopping time for $\mathrm{APT}_\mathrm{P}$ is the smallest for almost all combinations of parameters in this experiment.
In the case with small $\Delta(=0.02)$, $\baec{\mathrm{APT}_\mathrm{P}}{\underline{\mu}}{\overline{\mu}}$ also stopped first, on average, for most combinations of parameters,
but the stopping time for $\mathrm{LUCB}$ is the smallest for some combinations of parameters when $1\leq m\leq 50$.
Difference between stopping times for $\mathrm{APT}_\mathrm{P}$ and for the other two policies, becomes larger as $m$ increases.
$\baec{\mathrm{APT}_\mathrm{P}}{\underline{\mu}}{\overline{\mu}}$ stopped first even when $m=0$, that is, in the case that all the loss means are below $\theta$.
In such case, some gray zone arms can be judged as positive and make the algorithm stop.
$\baec{\mathrm{APT}_\mathrm{P}}{\underline{\mu}}{\overline{\mu}}$ is considered to have found such gray zone arms faster.

\subsubsection{Simulation Based on Real Dataset}

In this experiment, as loss distribution means, we used estimated ad click rates by users in the same category calculated from Real-Time Bidding dataset provided by iPinYou \citep{zhang2014real}. 
From the training dataset of the second season of iPinYou dataset, we chose 20 most frequently appeared user categories (sets of user profile ids) and calculated the click rate by the users in the category for each of them using the impression and click logs. 
Since the click rates are smaller than 0.001, we used the values multiplied by 100 as loss means. 
The loss means $\mu_1,\dots,\mu_{20}$ used in the experiment are followings:
{\scriptsize
  \begin{center}
\begin{tabular}{@{}l@{ }l@{ }l@{ }l@{ }l@{}} 
  $\mu_1$:$0.06232$, & $\mu_5$:$0.04124$, & $\mu_9$:$0.03792$, & $\mu_{13}$:$0.02535$, & $\mu_{17}$:$0.02183$,\\
  $\mu_2$:$0.05549$, & $\mu_6$:$0.04060$, & $\mu_{10}$:$0.03764$, & $\mu_{14}$:$0.02498$, & $\mu_{18}$:$0.02055$,\\
  $\mu_3$:$0.05011$, & $\mu_7$:$0.04031$, & $\mu_{11}$:$0.03054$, & $\mu_{15}$:$0.02203$, & $\mu_{19}$:$0.01255$,\\
  $\mu_4$:$0.04587$, & $\mu_8$:$0.03907$, & $\mu_{12}$:$0.02594$, & $\mu_{16}$:$0.02197$, & $\mu_{20}$:$0.01033$.
\end{tabular}
\end{center}
}
In this experiment, 5 thresholds $(\theta_L,\theta_U)=(\theta_{m'}-0.01,\theta_{m'}+0.01)$ for $m'=0,1,5,10,19$ are used
so as to let the loss means of about $m'$ arms be at least $\theta$, where
$\theta_0=\mu_1+\frac{\mu_1-\mu_2}{2}$, $\theta_{m'}=\frac{\mu_{m'}+\mu_{m'+1}}{2}$ for $m'=1,5,10,19$.
For these $(\theta_L,\theta_U)$s, $\theta=0.06649,0.05966,0.04168,0.03485,0.01220$ when $\delta=0.001$, and $\theta=0.06659,0.05976,0.04178,0.03495,0.01230$ when $\delta=0.01$. For these $\theta$s, the number of arms whose loss mean is at least $\theta$ is $0, 1, 4, 10, 19$. 
For each combination of parameters $\delta = 0.01, 0.001$, $(\theta_L,\theta_U)=(\theta_{m'}-0.01,\theta_{m'}+0.01)$ ($m'=0,1,5,10,19$),
ran algorithm $\baec{\mathrm{ASP}}{\underline{\mu}}{\overline{\mu}}$ with three arm selection policies $\mathrm{ASP}=\mathrm{APT}_\mathrm{P}$, $\mathrm{LUCB}$ and $\mathrm{UCB}$ 100 times and calculated their stopping times averaged over the 100 runs.

\begin{table*}[t]
\begin{center}
  \caption{The average stopping times $\times 10^{-3}$ of the three algorithms and their 99\% confidence intervals in the simulations based on real dataset. Note that $(\theta_L,\theta_U)=(\theta_{m'}-0.01,\theta_{m'}+0.01)$ for $m'=0,1,5,10,19$. 
  }
\label{tbl:click}\scriptsize
\begin{tabular}{@{\hspace*{1mm}}c@{\hspace*{1mm}}|@{\hspace*{1mm}}l@{\hspace*{1mm}}|@{\hspace*{1mm}}c@{\hspace*{1mm}}c@{\hspace*{1mm}}c@{\hspace*{1mm}}c@{\hspace*{1mm}}c@{\hspace*{1mm}}c@{\hspace*{1mm}}} \hline
  \multirow{2}{*}{$\begin{array}{c}\delta\\(K\Td{\Delta})\end{array}$} & \multirow{2}{*}{Policy} & $\theta_0 = 0.06573$ & $\theta_1 = 0.05890$ & $\theta_5 = 0.04092$ & $\theta_{10} = 0.03409$ & $\theta_{19} = 0.01144$ \\
    & &  $(m=0)$ & $(m=1)$ & $(m=4)$ & $(m=10)$ & $(m=19)$\\ \hline
  \multirow{3}{*}{$\begin{array}{c}0.01\\(1776.1)\end{array}$}   & $\mathrm{APT}_\mathrm{P}$& $\mathit{150.78 \pm 2.68}$ & $\mathbf{62.15 \pm 3.79}$ & $30.83 \pm 6.05$ & $23.85 \pm 3.77$ & $9.26 \pm 1.81$ \\
   & LUCB       & $\mathit{150.78 \pm 2.68}$ & $122.93 \pm 8.07$ & $\mathit{28.89 \pm 3.03}$ & $\mathbf{17.16 \pm 1.26}$ & $\mathit{8.44 \pm 0.83}$ \\
   & UCB        & $\mathit{150.78 \pm 2.68}$ & $149.07 \pm 5.47$ & $51.73 \pm 2.73$ & $38.41 \pm 1.98$ & $20.93 \pm 1.28$ \\ \hline
  \multirow{3}{*}{$\begin{array}{c}0.001\\(1790.8)\end{array}$}   & $\mathrm{APT}_\mathrm{P}$        & $\mathit{174.78 \pm 2.41}$ & $\mathbf{66.11 \pm 3.56}$ & $33.28 \pm 7.51$ & $23.61 \pm 4.24$ & $9.78 \pm 1.84$ \\
  & LUCB       & $\mathit{174.78 \pm 2.41}$ & $129.42 \pm 6.07$ & $\mathit{29.40 \pm 2.43}$ & $\mathit{21.22 \pm 1.55}$ & $\mathit{9.36 \pm 0.81}$ \\
  & UCB& $\mathit{174.78 \pm 2.41}$ & $159.19 \pm 5.76$ & $57.13 \pm 2.68$ & $44.66 \pm 2.16$ & $22.90 \pm 1.23$ \\ \hline
\end{tabular}
\end{center}
\end{table*}

The result is shown in Table~\ref{tbl:click}.
For $m=1$, the stopping times for $\mathrm{APT}_\mathrm{P}$ are significantly small compared with the other two arm selection policies.
For $m=4,10,19$, $\baec{\mathrm{LUCB}}{\underline{\mu}}{\overline{\mu}}$ always stops first though $\baec{\mathrm{APT}_\mathrm{P}}{\underline{\mu}}{\overline{\mu}}$'s stopping times are comparable to those of $\baec{\mathrm{LUCB}}{\underline{\mu}}{\overline{\mu}}$ except the case with $\delta=0.001, m=10$.
When $m = 0$, the stopping times of the three algorithms are equal, which means that all the arms including the unique neutral arm $\mu_1$ were always judged as negative arms in the experiment. 


\section{Conclusions}

We theoretically and empirically studied sample complexity of a \emph{bad arm existence checking problem} (BAEC problem), 
whose objective is to detect existence of some bad arm (arm with loss mean larger than $\theta_U$)
with probability at least $1-\delta$ for given thresholds $\theta_L$ and $\theta_U$ with $\theta_U-\theta_L=\Delta$.
We proposed algorithm $\baec{\mathrm{APT}_\mathrm{P}}{\underline{\mu}}{\overline{\mu}}$ that utilizes asymmetry of positive and negative arms' roles in this problem;
the algorithm with a \emph{stopping condition} for drawing each arm $i$ with the current number of draws $n$ using $\Delta$-dependent asymmetric confidence bounds $\LB{i}{n}$ and $\UB{i}{n}$, and arm selection policy $\mathrm{APT}_\mathrm{P}$ that uses a single threshold $\theta$ closer to $\theta_U$ instead of the center between $\theta_L$ and $\theta_U$.
Effectiveness of our stopping condition was shown empirically and theoretically.
Algorithm $\baec{\mathrm{APT}_\mathrm{P}}{\underline{\mu}}{\overline{\mu}}$
empirically stopped faster than algorithms $\baec{\mathrm{LUCB}}{\underline{\mu}}{\overline{\mu}}$ and $\baec{\mathrm{UCB}}{\underline{\mu}}{\overline{\mu}}$ using conventional arm selection policies LUCB and UCB,
and we showed an asymptotic upper bound of the expected stopping time for $\baec{\mathrm{APT}_\mathrm{P}}{\underline{\mu}}{\overline{\mu}}$  which is smaller than that for $\baec{\mathrm{UCB}}{\underline{\mu}}{\overline{\mu}}$ in the case that there are multiple positive arms and all the positive arms have the same loss means. 
Current theoretical support for our arm selection policy $\mathrm{APT}_\mathrm{P}$ is very limited,
and further theoretical analysis that explains its empirically observed small stopping times is our future work.

\begin{acknowledgements}
This work was partially supported by JST CREST Grant Number JPMJCR1662, Japan.
\end{acknowledgements}

\bibliographystyle{spbasic}      

\bibliography{ecmlmlj2019}

\appendix

\section{Proof of Theorem~\ref{thlowerboud}}\label{appendix:theorem1}

We use the following lemma to prove our lower bound on the number of samples needed for a $(\Delta,\delta)$-BAEC algorithm.

\begin{lemma}[\citealt{KCG2016}]\label{kaufmann}
  Let ${\bm\nu}$ and ${\bm \nu}'$ be two loss distribution sets of $K$ arms
  such that distributions $\nu_i$ and $\nu'_i$ are mutually absolutely continuous for $i=1,\dots,K$.
  For any almost-surely finite stopping time $T$ and any event $\mathcal{E}$, the following inequality holds.
  \[
\sum_{i=1}^K\E_{\bm \nu}[n_i(T)]\mathrm{KL}(\nu_i,\nu'_i)\geq d(\mathbb{P}_{\bm \nu}(\mathcal{E}),\mathbb{P}_{\bm \nu'}(\mathcal{E})).
  \]
\end{lemma}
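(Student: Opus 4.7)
The plan is to instantiate Lemma~\ref{kaufmann} with Bernoulli alternative distribution sets $\bm\nu'$ chosen so that $\bm\nu$ and $\bm\nu'$ fall into regimes in which any $(\Delta,\delta)$-BAEC algorithm is required to output opposite answers. The KL divergence between two Bernoullis with means $\mu,\mu'$ equals $d(\mu,\mu')$, which slots the lemma's LHS directly into the form appearing on the right-hand sides of (\ref{lower-bound-positive}) and (\ref{lower-bound-negative}). Throughout, I will use that $d(p,q) \geq d(1-\delta,\delta) = (1-2\delta)\ln\frac{1-\delta}{\delta}$ whenever $p \geq 1-\delta \geq \delta \geq q$, which follows from monotonicity of $d$ in each variable away from the diagonal.

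For (\ref{lower-bound-positive}), assume WLOG $\mu_1 \geq \theta_U$. Fix $\epsilon > 0$ and construct $\bm\nu'$ by replacing every $\mu_j$ with $\mu_j \geq \theta_L$ by $\mu'_j = \theta_L - \epsilon$ while keeping $\mu'_j = \mu_j$ for $\mu_j < \theta_L$. Under $\bm\nu'$ all arms are negative, so with event $\mathcal{E} = \{\text{algorithm outputs ``positive''}\}$ the BAEC property gives $\P_{\bm\nu}(\mathcal{E}) \geq 1-\delta$ and $\P_{\bm\nu'}(\mathcal{E}) \leq \delta$. Lemma~\ref{kaufmann} then yields
\[
\sum_{j:\mu_j\geq \theta_L}\E_{\bm\nu}[n_j(T)]\,d(\mu_j,\theta_L-\epsilon)
\;\geq\; (1-2\delta)\ln\frac{1-\delta}{\delta}.
\]
Since $\mu_j \leq \mu_1$ and $d(\cdot,\theta_L-\epsilon)$ is increasing on $(\theta_L-\epsilon,1]$, the LHS is bounded above by $d(\mu_1,\theta_L-\epsilon)\cdot\E_{\bm\nu}[T]$. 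Rearranging and letting $\epsilon\downarrow 0$, using continuity of $d(\mu_1,\cdot)$ at $\theta_L$, yields (\ref{lower-bound-positive}); the strict inequality will follow from the strict positivity of the confidence gap (i.e.\ $\P_{\bm\nu}(\mathcal{E}) > 1-\delta$ or $\P_{\bm\nu'}(\mathcal{E}) < \delta$ whenever the algorithm's error probability is not attained on the boundary), together with the strict monotonicity $d(\mu_1,\theta_L-\epsilon) > d(\mu_1,\theta_L)$ at each $\epsilon>0$.

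For (\ref{lower-bound-negative}), the argument is arm by arm. For each $i$, let $\bm\nu^{(i)}$ be obtained from $\bm\nu$ by raising only $\mu_i$ to $\mu'_i = \theta_U + \epsilon$, so arm $i$ is positive in $\bm\nu^{(i)}$ while every other arm remains negative. With event $\mathcal{E} = \{\text{algorithm outputs ``negative''}\}$ we have $\P_{\bm\nu}(\mathcal{E}) \geq 1-\delta$ and $\P_{\bm\nu^{(i)}}(\mathcal{E}) \leq \delta$. All KL terms for $j\neq i$ vanish, so Lemma~\ref{kaufmann} collapses to
\[
\E_{\bm\nu}[n_i(T)]\,d(\mu_i,\theta_U+\epsilon) \;\geq\; (1-2\delta)\ln\frac{1-\delta}{\delta}.
\]
Taking $\epsilon\downarrow 0$, summing over $i$, and using $\E_{\bm\nu}[T] = \sum_i \E_{\bm\nu}[n_i(T)]$ yields (\ref{lower-bound-negative}).

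The main obstacle lies in the first case when $\bm\nu$ contains \emph{several} arms with mean $\geq \theta_L$ (e.g.\ neutral arms alongside the positive arm~$1$): simply flipping arm~$1$ is no longer enough because the resulting $\bm\nu'$ could still contain a positive or neutral arm, in which case a BAEC algorithm is not required to answer ``negative'' there and the change-of-measure argument breaks. The workaround above alters every arm with $\mu_j \geq \theta_L$ simultaneously; this forces a summation on the LHS of Lemma~\ref{kaufmann}, which is then consolidated into a single bound involving only $\mu_1$ via $d(\mu_j,\theta_L-\epsilon) \leq d(\mu_1,\theta_L-\epsilon)$. The second case is technically easier because negativity of the original $\bm\nu$ makes the single-arm change $\bm\nu^{(i)}$ automatically compatible with the BAEC guarantee.
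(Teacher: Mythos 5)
Your proposal does not prove the statement you were asked to prove. The statement is Lemma~\ref{kaufmann} itself --- the change-of-measure (transportation) inequality $\sum_{i=1}^K\E_{\bm \nu}[n_i(T)]\mathrm{KL}(\nu_i,\nu'_i)\geq d(\mathbb{P}_{\bm \nu}(\mathcal{E}),\mathbb{P}_{\bm \nu'}(\mathcal{E}))$ --- which the paper imports from \citet{KCG2016} without proof. What you have written is instead a proof of Theorem~\ref{thlowerboud} (the sample-complexity lower bounds (\ref{lower-bound-positive}) and (\ref{lower-bound-negative})) that \emph{uses} Lemma~\ref{kaufmann} as a black box. Relative to the assigned statement this is circular: you assume exactly the inequality you were supposed to establish. (As a proof of Theorem~\ref{thlowerboud}, your argument does essentially coincide with the paper's Appendix~A --- the same alternative instances $\theta_L-\epsilon$ for all arms with $\mu_j\geq\theta_L$ in the positive case, and the single-arm perturbation to $\theta_U+\epsilon$ in the negative case --- but that is not what was asked.)

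To actually prove the lemma you need two ingredients, neither of which appears in your proposal. First, a Wald-type identity for the sequential log-likelihood ratio: letting $L_t=\sum_{s=1}^{t}\ln\frac{\d\nu_{i_s}}{\d\nu'_{i_s}}\bigl(X_{i_s}(n_{i_s}(s)+1)\bigr)$, one shows $\E_{\bm\nu}[L_T]=\sum_{i=1}^K\E_{\bm \nu}[n_i(T)]\mathrm{KL}(\nu_i,\nu'_i)$ for any a.s.\ finite stopping time $T$, and $\E_{\bm\nu}[L_T]$ is the KL divergence between the restrictions of $\P_{\bm\nu}$ and $\P_{\bm\nu'}$ to the stopped $\sigma$-algebra $\mathcal{F}_T$. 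Second, the data-processing (contraction) inequality for KL divergence applied to the map $\omega\mapsto\mathbb{1}\{\mathcal{E}\}$ for $\mathcal{E}\in\mathcal{F}_T$, which gives $\mathrm{KL}\bigl(\P_{\bm\nu}|_{\mathcal{F}_T},\P_{\bm\nu'}|_{\mathcal{F}_T}\bigr)\geq d(\mathbb{P}_{\bm \nu}(\mathcal{E}),\mathbb{P}_{\bm \nu'}(\mathcal{E}))$. Chaining these yields the lemma; mutual absolute continuity is what makes the likelihood ratio well defined. None of the Bernoulli-specific or BAEC-specific reasoning in your write-up substitutes for these steps.
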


\noindent
{\it Proof of Theorem~\ref{thlowerboud}.}\ 
  Consider a set ${\bm \nu}$ of Bernoulli distributions $\nu_i$ with mean $\mu_i$ for which some positive arms exist, that is, the case with $\mu_1\geq\theta_U$.
  Let $k$ be the number of arms $i$ with $\mu_i\geq \theta_L$ in $\{\nu_i\}$, that means
  $\mu_1\geq\cdots\geq \mu_k\geq \theta_L > \mu_{k+1}\geq \cdots \geq \mu_K$.
  For an arbitrary fixed $\epsilon>0$, let $\{\nu'_i\}$ be the set of Bernoulli distributions with means $\mu'_i$ defined as
  \[
  \mu'_i=\left\{\begin{array}{ll}
  \theta_L-\epsilon & (i\leq k)\\
  \mu_i & (i>k)
  \end{array}\right.
  \]
For any $(\Delta,\delta)$-BAEC algorithm,
$\Ev_\mathrm{POS}$ denotes the event that its output is ``positive''.
Since some positive arms exist for the distribution set ${\bm \nu}$,
the probability that the event $\Ev_\mathrm{POS}$ occurs must be at least $1-\delta$ by Definition~\ref{def:delta-delta-alg}, that is,
inequality $\mathbb{P}_{\bm \nu}(\Ev_\mathrm{POS})\geq 1-\delta$ holds.
All the arms are negative in the distribution set ${\bm \nu'}=\{\nu'_i\}$,
likewise by Definition~\ref{def:delta-delta-alg}, inequality $\mathbb{P}_{\bm \nu'}(\Ev_\mathrm{POS})<\delta$ holds.
Thus,
\begin{align*}
 \sum_{i=1}^K\E[n_i(T)]KL(\nu_i,\nu'_i)
  =&\sum_{i=1}^k\E[n_i(T)]d(\mu_i,\mu'_i)\ \ (\text{by } d(\mu_i,\mu_i)=0)\\
    =& \sum_{i=1}^k\E[n_i(T)]d(\mu_i,\theta_L-\epsilon)\\
    \geq & d(\mathbb{P}_{\bm \nu}(\Ev_\mathrm{POS}),\mathbb{P}_{\bm \nu'}(\Ev_\mathrm{POS}))\ \ (\text{by Lemma~\ref{kaufmann}})\\
    >&  d(1-\delta,\delta)
  \end{align*}
holds.
From the fact that $\max_{i\in \{1,\dots,k\}}d(\mu_i,\theta_L-\epsilon)=d(\mu_1,\theta_L-\epsilon)$,
  \begin{align*}
    \E[T]=\sum_{i=1}^K\E[n_i(T)]> & \frac{d(1-\delta,\delta)}{d(\mu_1,\theta_L-\epsilon)}
    = \frac{1-2\delta}{d(\mu_1,\theta_L-\epsilon)}\ln\frac{1-\delta}{\delta}
  \end{align*}
  holds, which leads to Ineq. (\ref{lower-bound-positive}) by considering its limit as $\epsilon\rightarrow +0$.
  
Next, consider a set ${\bm \nu}$ of Bernoulli distributions $\nu_i$ with mean $\mu_i$ for which all the arms are negative, that is, the case with $\mu_1<\theta_L$.
Fix  $j\in \{1,\dots,K\}$ arbitrarily.
For arbitrary $\epsilon>0$, let ${\bm \nu'}$ be a set of 
Bernoulli distributions $\nu'_i$ with mean $\mu'_i$ defined as
  \[
  \mu'_i=\left\{\begin{array}{ll}
  \theta_U+\epsilon & (i=j)\\
  \mu_i & (i\neq j)
  \end{array}\right.
  \]

  For any $(\Delta,\delta)$-BAEC algorithm, $\Ev_\mathrm{NEG}$ denotes the event that
  its output is ``negative''.
  Then, inequalities
  $\mathbb{P}_{\bm \nu}(\Ev_\mathrm{NEG})\geq 1-\delta$ and $\mathbb{P}_{\bm \nu'}(\Ev_\mathrm{NEG})<\delta$ hold by Definition~\ref{def:delta-delta-alg} because all the arms are negative in $\bm \nu$ and arm $j$ is positive in $\bm \nu'$.
Thus, by Lemma~\ref{kaufmann},
  \begin{align*}
    \E[n_j(T)]d(\mu_j,\theta_U+\epsilon)\geq & d(\mathbb{P}_{\bm \nu}(\Ev_\mathrm{NEG}),\mathbb{P}_{\bm \nu'}(\Ev_\mathrm{NEG}))
    >  d(1-\delta,\delta)
  \end{align*}
 holds, that is, for each $j=1,\dots,K$,
  \begin{align*}
    \E[n_j(T)]> &\frac{d(1-\delta,\delta)}{d(\mu_j,\theta_U+\epsilon)}
    = \frac{1-2\delta}{d(\mu_j,\theta_U+\epsilon)}\ln\frac{1-\delta}{\delta}
  \end{align*}
  holds. This leads to Ineq.~(\ref{lower-bound-negative}) by considering
  its limit as $\epsilon\rightarrow +0$ and the summation over $j=1,\dots,K$.
\hspace*{\fill}$\Box$

\section{Proof of Theorem~\ref{lemUtBjbW9v}}\label{appendix:lemma1}

We prove Theorem~\ref{lemUtBjbW9v} using the following proposition.

\begin{proposition}\label{prop:1}
For any $x > 0$, the following inequality holds:
\[
\sqrt{4 + x} \le \sqrt{1 + x} + 1 \le \sqrt{4 + 2 x}.
\]
\end{proposition}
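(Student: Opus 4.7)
The plan is to prove each of the two inequalities separately by squaring both sides, which is justified because all quantities involved are non-negative for $x > 0$.

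For the left inequality $\sqrt{4+x} \le \sqrt{1+x} + 1$, I would square both sides to obtain
\[
4 + x \le (1+x) + 2\sqrt{1+x} + 1 = 2 + x + 2\sqrt{1+x},
\]
which simplifies to $1 \le \sqrt{1+x}$. This last inequality is immediate from $x > 0$.

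For the right inequality $\sqrt{1+x} + 1 \le \sqrt{4+2x}$, I would again square both sides, yielding
\[
2 + x + 2\sqrt{1+x} \le 4 + 2x,
\]
i.e., $\sqrt{1+x} \le 1 + x/2$. Squaring once more gives $1 + x \le 1 + x + x^2/4$, which holds trivially.

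There is no real obstacle here; the only thing to be careful about is verifying non-negativity of both sides before squaring (so that the implications are equivalences rather than one-way), and in each squaring step both sides are manifestly non-negative for $x > 0$.
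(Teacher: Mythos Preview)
Your proof is correct and takes essentially the same approach as the paper: both arguments reduce the inequalities to comparing $(\sqrt{1+x}+1)^2 = 2+x+2\sqrt{1+x}$ with $4+x$ and $4+2x$, which amounts to the elementary bounds $1 \le \sqrt{1+x} \le 1+x/2$.
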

\begin{proof}
Since 
\[
\sqrt{1 + x} + 1 = \sqrt{(\sqrt{1 + x} + 1)^2}
= \sqrt{2 + x + 2 \sqrt{1 + x}}
\]
holds,
\[
\sqrt{4 + x}=\sqrt{2 + x + 2} \le \sqrt{1 + x} + 1
\]
and
\[
\sqrt{4 + 2x}=\sqrt{2 + x + 2 \left(1 + \frac{x}{2}\right)} \ge \sqrt{1 + x} + 1
\]
hold for $x>0$.
\hspace*{\fill}$\Box$\end{proof}

\noindent
{\it Proof of Theorem~\ref{lemUtBjbW9v}.}\ 
 We prove this theorem by contradiction.
Assume that $\overline{\mu}_i(\Td{\Delta})\geq \theta_U$ and $\theta_L > \underline{\mu}_i(\Td{\Delta})$.
Then,
  \begin{align}
  \overline{\mu}_i(\Td{\Delta}) - \underline{\mu}_i(\Td{\Delta}) > \theta_U - \theta_L = \Delta \label{uldiff}
\end{align}
holds. 
On the other hand, 
 \begin{align*}
  \Delta =& \sqrt{\frac{2}{\frac{2}{\Delta^2}\ln\frac{\sqrt{K}\Ndelta}{\delta}}\ln\frac{\sqrt{K}\Ndelta}{\delta}} \\
   \geq & \sqrt{\frac{4}{2\Td{\Delta}} \ln{\frac{\sqrt{K} \Ndelta}{\delta}}} \\
   = &  \sqrt{\frac{1}{2\Td{\Delta}} \ln{\frac{\Ndelta}{\delta}}} \sqrt{4 + \frac{2\ln{K}}{\ln{\frac{\Ndelta}{\delta}}}} \\
   \ge & \sqrt{\frac{1}{2\Td{\Delta}} \ln{\frac{\Ndelta}{\delta}}} \left( \sqrt{1 + \frac{\ln{K}}{\ln{\frac{\Ndelta}{\delta}}}} + 1 \right) \ \ \text{ (by Proposition~\ref{prop:1})}\\
   = & \sqrt{\frac{1}{2\Td{\Delta}} \ln{\frac{K \Ndelta}{\delta}}} + \sqrt{\frac{1}{2\Td{\Delta}} \ln{\frac{\Ndelta}{\delta}}}=\overline{\mu}_i(\Td{\Delta}) - \underline{\mu}_i(\Td{\Delta})
 \end{align*} 
 
 holds, which contradicts to Ineq.~(\ref{uldiff}).
\hspace*{\fill}$\Box$

\section{Proof of Theorem\ref{th:remark1}}\label{appendix:remark1}

  If $\UBd{i}{\Tdd{\Delta}}-\LBd{i}{\Tdd{\Delta}}>\Delta$ holds, then $\UBd{i}{n}-\LBd{i}{n}>\Delta$ holds for $n=1,\dots,\Tdd{\Delta}$. In this case, $\LBd{i}{n}<\theta_L$ and  $\UBd{i}{n}\geq \theta_U$ hold for $n=1,\dots,\Tdd{\Delta}$ when $\theta_U- (\UBd{i}{n}-\LBd{i}{n})/2\leq \hat{\mu}_i(n)<\theta_L+ (\UBd{i}{n}-\LBd{i}{n})/2$, which means $\tau'_i>\Tdd{\Delta}$.
In fact, Inequality $\UBd{i}{\Tdd{\Delta}}-\LBd{i}{\Tdd{\Delta}}>\Delta$ holds because
  \begin{align*}
    &\UBd{i}{\Tdd{\Delta}}-\LBd{i}{\Tdd{\Delta}}=2\sqrt{\frac{1}{2\Tdd{\Delta}}\ln\frac{2K{\Tdd{\Delta}}^2}{\delta}}\\
=& 2\sqrt{\frac{1}{2\lfloor\frac{2}{\Delta^2}\ln\frac{448K}{\Delta^4\delta}\rfloor}\ln\frac{2K\lfloor\frac{2}{\Delta^2}\ln\frac{448K}{\Delta^4\delta}\rfloor^2}{\delta}}\\
\geq &2\sqrt{\frac{1}{2\frac{2}{\Delta^2}\ln\frac{448K}{\Delta^4\delta}}\ln\frac{2K\left(\frac{2}{\Delta^2}\ln\frac{448K}{\Delta^4\delta}\right)^2}{\delta}}\ \ \ \left(\text{because } f(x)=\frac{\ln x}{x} \text{ is decreasing for } x\geq \e\right)\\
= &\Delta\sqrt{\frac{1}{\ln\frac{448K}{\Delta^4\delta}}\ln\left(\frac{8K}{\Delta^4\delta}\left(\ln\frac{448K}{\Delta^4\delta}\right)^2\right)}\\
> &\Delta\sqrt{\frac{1}{\ln\frac{448K}{\Delta^4\delta}}\ln\left(\frac{8K}{\Delta^4\delta}\cdot 56\right)}\ \ \ \left(\text{by $\left(\ln\frac{448K}{\Delta^4\delta}\right)^2\!>\! \left(\ln\frac{448\cdot 2}{1^4(\frac{1}{2})}\right)^2\!=\!56.11\cdots>56$}\right)\\
= &\Delta\sqrt{\frac{1}{\ln\frac{448K}{\Delta^4\delta}}\ln\frac{448K}{\Delta^4\delta}}=\Delta.
\end{align*}

The difference between the worst case stopping times $\tau'_i-\tau_i$ is lower-bounded as
\begin{align*}
  \tau'_i-\tau_i> &\Tdd{\Delta}-\Td{\Delta}=\left\lfloor\frac{2}{\Delta^2}\ln\frac{448K}{\Delta^4\delta}\right\rfloor-\left\lceil\frac{2}{\Delta^2}\ln\frac{\sqrt{K}\Ndelta}{\delta}\right\rceil\\
  >&\frac{2}{\Delta^2}\ln\frac{448K}{\Delta^4\delta}-\frac{2}{\Delta^2}\ln\frac{\sqrt{K}\Ndelta}{\delta}-2\\
  =&\frac{2}{\Delta^2}\ln\frac{448\sqrt{K}}{\Delta^4\Ndelta\e^{\Delta^2}}\\
  > & \frac{2}{\Delta^2}\ln\frac{448\sqrt{K}e^{-\Delta^2}}{\Delta^4\left(\frac{2\e}{(\e-1)\Delta^2}\ln\frac{2\sqrt{K}}{\Delta^2\delta}+1\right)}\\
  = & \frac{2}{\Delta^2}\ln\frac{448\sqrt{K}e^{-\Delta^2}\cdot\frac{(\e-1)}{2\e\Delta^2}}{\ln\frac{2\sqrt{K}}{\Delta^2\delta}+\frac{(\e-1)\Delta^2}{2\e}}\\
  = & \frac{2}{\Delta^2}\ln\frac{\frac{224\sqrt{K}e^{-\Delta^2-1}(\e-1)}{\Delta^2}}{\ln\frac{2\sqrt{K}}{\Delta^2\delta}\e^{\frac{(\e-1)\Delta^2}{2\e}}}\\
  >& \frac{2}{\Delta^2}\ln\frac{224\sqrt{K}e^{-2}(\e-1)/\Delta^2}{\ln\frac{2\sqrt{K}}{\Delta^2\delta}\e^{\frac{\e-1}{2\e}}}\ \ \left(\text{by $\Delta< 1$}\right)\\
  >  & \frac{2}{\Delta^2}\ln\frac{52\sqrt{K}/\Delta^2}{\ln\frac{3\sqrt{K}}{\Delta^2\delta}}\ \ \left(\text{by $224\e^{-2}(\e-1)>52$ and $2\e^{\frac{\e-1}{2\e}}<3$}\right)\\
  = & \frac{2}{\Delta^2}\left(\ln\frac{52\sqrt{K}}{\Delta^2}-\ln\ln\frac{3\sqrt{K}}{\Delta^2\delta}\right).
\end{align*}
\hspace*{\fill}$\Box$

\section{Proof of Theorem~\ref{thwelldef}}\label{app:thwelldef}

Proposition~\ref{prop:2} is used in the proof of Lemma~\ref{lem:E*}.
The following proposition is needed to prove Proposition~\ref{prop:2}.

\begin{proposition}\label{ineqZNBcoVb1}
  For $0<a<1$, any $t \geq \frac{\e}{(\e-1)a} \ln\frac{1}{a}$ satisfies the following inequality. 
  \[
  a t \ge \ln t.
  \]
\end{proposition}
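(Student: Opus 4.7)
The plan is to analyze $f(t) = at - \ln t$ on $(0,\infty)$ and show $f(t) \geq 0$ for every $t \geq t_0 := \frac{\e}{(\e-1)a}\ln(1/a)$. Since $f'(t) = a - 1/t$, the function $f$ attains its unique global minimum at $t^\star = 1/a$ with value $f(t^\star) = 1 + \ln a$; in particular $f$ is strictly decreasing on $(0,1/a)$ and strictly increasing on $(1/a,\infty)$.

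I would then split on whether the minimum value $1+\ln a$ is already non-negative. If $a \geq 1/\e$, then $f(t^\star) \geq 0$, so $f(t) \geq 0$ for every $t>0$ and the hypothesis on $t$ is not needed. Otherwise $a < 1/\e$, so $\ln(1/a) > 1$, and
\[
t_0 \;=\; \frac{1}{a}\cdot\frac{\e}{\e-1}\ln\frac{1}{a} \;>\; \frac{1}{a} \;=\; t^\star,
\]
because both factors $\frac{\e}{\e-1}$ and $\ln(1/a)$ exceed $1$. Hence $f$ is increasing on $[t_0,\infty)$, and it is enough to verify $f(t_0) \geq 0$.

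To evaluate $f(t_0)$, set $c := \ln(1/a) = -\ln a$. Then $at_0 = \frac{\e c}{\e-1}$ and $\ln t_0 = 1 + \ln c - \ln(\e-1) + c$, which simplifies to
\[
f(t_0) \;=\; \frac{c}{\e-1} - \ln c + \ln(\e-1) - 1.
\]
Introducing $v := c/(\e-1)$ and using $\ln c = \ln(\e-1) + \ln v$ collapses this to $f(t_0) = v - \ln v - 1$, which is non-negative by the classical inequality $v - \ln v \geq 1$ (sharp at $v=1$, as one sees by differentiating $g(v)=v-\ln v$).

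The only real obstacle is spotting the substitution: once one notices that $v = c/(\e-1)$ reduces $f(t_0)$ to the familiar $v - \ln v - 1$, the rest is elementary single-variable calculus. Without this reformulation, the nested logarithms in $f(t_0)$ look awkward to bound directly.
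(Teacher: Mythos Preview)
Your proof is correct and follows essentially the same approach as the paper: both split on $a$ versus $1/\e$, observe that $f$ is increasing past $t_0$ when $a<1/\e$, and reduce to checking $f(t_0)\geq 0$. The only cosmetic difference is that the paper phrases the verification of $f(t_0)\geq 0$ as the tangent-line bound for $\ln x$ at $x=\e-1$, which is exactly your inequality $v-\ln v\geq 1$ after the substitution $v=c/(\e-1)$.
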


\begin{proof} For $0<a<1$, let $f(t) = a t - \ln{t}$. 
  When $a > \frac{1}{\e}$, $f(t)$ is always positive for any $t > 0$ since $f(t)$ takes minimum value $1 - \ln{\frac{1}{a}}$ at $t = \frac{1}{a}$. 
  
  When $a \le \frac{1}{\e}$, if $t = \frac{\e}{(\e-1)a} \ln\frac{1}{a}$, 
  \begin{align*}
    a t -\ln t = \left(\frac{1}{\e - 1} \ln{\frac{1}{a}} - \ln{\frac{\e}{\e-1}}\right) - \ln{\ln{\frac{1}{a}}} \ge 0
    \end{align*}
holds because $y=\frac{1}{\e - 1}x - \ln\frac{\e}{\e-1}$ is a tangential line of $y=\ln x$ at $x=\e-1$.
  If $t > \frac{\e}{(\e-1)a} \ln\frac{1}{a} \left(\ge \frac{\e}{(\e-1)a} > \frac{1}{a}\right)$, $\frac{d f(t)}{d t} = a - \frac{1}{t}$ is positive. 
  Therefore, for $t \ge \frac{\e}{(\e-1)a} \ln\frac{1}{a}$, $a t -\ln t \ge 0$.
\hspace*{\fill}$\Box$\end{proof}

\begin{proposition}\label{prop:2}
  $\Td{\Delta}\leq \Ndelta$.
\end{proposition}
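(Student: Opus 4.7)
The plan is to read Proposition~\ref{prop:2} as an almost immediate corollary of Proposition~\ref{ineqZNBcoVb1}, which the authors just proved. The inequality to establish, $\Ndelta\geq \frac{2}{\Delta^2}\ln\frac{\sqrt{K}\Ndelta}{\delta}$, has the variable $\Ndelta$ appearing both inside and outside a logarithm. That is exactly the fixed-point shape that Proposition~\ref{ineqZNBcoVb1} is designed to untangle, so the task reduces to choosing the right substitution.

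Concretely, I would apply Proposition~\ref{ineqZNBcoVb1} with
\[
a=\frac{\Delta^2\delta}{2\sqrt{K}}, \qquad t=\frac{\sqrt{K}\,\Ndelta}{\delta}.
\]
A quick check shows $a\in(0,1)$ under the standing assumptions $K\geq 2$, $0<\Delta<1$, $\delta\in(0,1/2)$. With these choices, $at=\frac{\Delta^2\Ndelta}{2}$ and $\ln t=\ln\frac{\sqrt{K}\Ndelta}{\delta}$, so the conclusion $at\geq \ln t$ of Proposition~\ref{ineqZNBcoVb1} is exactly the inequality we want, namely $\Ndelta\geq \frac{2}{\Delta^2}\ln\frac{\sqrt{K}\Ndelta}{\delta}$. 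It therefore suffices to verify the hypothesis $t\geq \frac{\e}{(\e-1)a}\ln\frac{1}{a}$.

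Plugging the substitution into that hypothesis and clearing factors, the required condition simplifies to
\[
\Ndelta \geq \frac{2\e}{(\e-1)\Delta^2}\ln\frac{2\sqrt{K}}{\Delta^2\delta},
\]
which is precisely the defining inequality of $\Ndelta$ (the ceiling in the definition only makes the true value larger). So the hypothesis of Proposition~\ref{ineqZNBcoVb1} holds for free.

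To close, I note that $\Ndelta$ is an integer (a ceiling), so the real-valued inequality $\Ndelta\geq \frac{2}{\Delta^2}\ln\frac{\sqrt{K}\Ndelta}{\delta}$ upgrades to $\Ndelta\geq \bigl\lceil \frac{2}{\Delta^2}\ln\frac{\sqrt{K}\Ndelta}{\delta}\bigr\rceil=\Td{\Delta}$, giving the claim. There is no real obstacle here; the only creative step is spotting the substitution, and that is essentially forced by matching $at$ with $\frac{\Delta^2\Ndelta}{2}$ and $\ln t$ with $\ln\frac{\sqrt{K}\Ndelta}{\delta}$. This also explains the otherwise mysterious factor $\frac{2\e}{(\e-1)}$ baked into the definition of $\Ndelta$: it is precisely the factor produced by Proposition~\ref{ineqZNBcoVb1}.
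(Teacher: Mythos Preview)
Your proposal is correct and follows essentially the same approach as the paper: apply Proposition~\ref{ineqZNBcoVb1} with $a=\frac{\Delta^2\delta}{2\sqrt{K}}$ and $t=\frac{\sqrt{K}\Ndelta}{\delta}$, verify the hypothesis via the definition of $\Ndelta$, and then use integrality of $\Ndelta$ to pass to the ceiling defining $\Td{\Delta}$. If anything, you are slightly more explicit than the paper in checking $a\in(0,1)$ and in spelling out how the hypothesis of Proposition~\ref{ineqZNBcoVb1} reduces to the defining inequality for $\Ndelta$.
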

\begin{proof}
The following inequality is derived from Proposition~\ref{ineqZNBcoVb1} by setting $a$ to $\frac{\Delta^2\delta}{2\sqrt{K}}$ that means $t = \frac{\sqrt{K}\Ndelta}{\delta} \ge \frac{2\e\sqrt{K}}{(\e-1)\Delta^2\delta} \ln\frac{2\sqrt{K}}{\Delta^2\delta}$, 

\begin{align*}
  \ln\frac{\sqrt{K}\Ndelta}{\delta} \le \frac{\Delta^2 \delta}{2\sqrt{K}} \cdot\frac{\sqrt{K}\Ndelta}{\delta}=\frac{\Delta^2\Ndelta}{2}.
\end{align*}
Thus,
\begin{align*}
\Ndelta\geq \frac{2}{\Delta^2}\ln\frac{\sqrt{K}\Ndelta}{\delta}
 \end{align*}
holds, and so
\begin{align*}
\Ndelta\geq \left\lceil\frac{2}{\Delta^2}\ln\frac{\sqrt{K}\Ndelta}{\delta}\right\rceil=\Td{\Delta}
 \end{align*}
holds.
  
\hspace*{\fill}$\Box$\end{proof}

\begin{lemma}\label{lem:E*}
  For the complementary events $\overline{\Ev^+}$, $\overline{\Ev^-}$ of events $\Ev^+$, $\Ev^-$,
  inequality $\P\{\overline{\Ev^+}\}\leq \delta$ holds when $\mu_1\geq\theta_U$ and inequality $\P\{\overline{\Ev^-}\}\leq \delta$ holds when $\mu_1<\theta_L$.
\end{lemma}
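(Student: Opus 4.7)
\medskip

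\noindent\textbf{Proof plan for Lemma~\ref{lem:E*}.}
The plan is to bound each complementary event by a union bound over $(i,n)$ using Hoeffding's inequality, and then use Proposition~\ref{prop:2} ($\Td{\Delta}\le \Ndelta$) to turn the resulting sums into quantities at most $\delta$. The asymmetry of the two bounds $\LB{i}{n}$ and $\UB{i}{n}$ is precisely tuned so the union-bound calculation closes in each case.

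\medskip

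First, I would record the two single-$(i,n)$ Hoeffding bounds. Since $\hat\mu_i(n)$ is an average of $n$ i.i.d.\ samples in $[0,1]$, Hoeffding's inequality yields
\begin{align*}
\P\bigl[\UB{i}{n}<\mu_i\bigr]
&= \P\Bigl[\hat\mu_i(n)-\mu_i<-\sqrt{\tfrac{1}{2n}\ln\tfrac{\Ndelta}{\delta}}\Bigr]
\le \frac{\delta}{\Ndelta},\\
\P\bigl[\LB{i}{n}\ge \mu_i\bigr]
&= \P\Bigl[\hat\mu_i(n)-\mu_i\ge \sqrt{\tfrac{1}{2n}\ln\tfrac{K\Ndelta}{\delta}}\Bigr]
\le \frac{\delta}{K\Ndelta}.
\end{align*}
These are the two building blocks; the numerators $\delta/\Ndelta$ and $\delta/(K\Ndelta)$ come directly from the constants used inside the square roots defining $\UB{i}{n}$ and $\LB{i}{n}$.

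\medskip

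Next, for $\overline{\Ev^+}$ when $\mu_1\ge \theta_U$: since arm $1$ is one of the positive arms, the intersection over positive arms in $\overline{\Ev^+}=\bigcap_{i:\mu_i\ge\theta_U}\bigcup_{n=1}^{\Td{\Delta}}\{\UB{i}{n}<\mu_i\}$ is contained in just the arm-$1$ union, so
\[
\P[\overline{\Ev^+}]
\le \sum_{n=1}^{\Td{\Delta}}\P[\UB{1}{n}<\mu_1]
\le \Td{\Delta}\cdot\frac{\delta}{\Ndelta}
\le \delta,
\]
where the last step uses Proposition~\ref{prop:2}. For $\overline{\Ev^-}$, regardless of the hypothesis on $\mu_1$, the union bound over all $(i,n)$ with $i\in\{1,\dots,K\}$ and $n\in\{1,\dots,\Td{\Delta}\}$ gives
\[
\P[\overline{\Ev^-}]
\le \sum_{i=1}^K\sum_{n=1}^{\Td{\Delta}}\P[\LB{i}{n}\ge \mu_i]
\le K\Td{\Delta}\cdot\frac{\delta}{K\Ndelta}
=\frac{\Td{\Delta}}{\Ndelta}\,\delta
\le \delta,
\]
again by Proposition~\ref{prop:2}.

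\medskip

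There is no real obstacle here beyond correctly exploiting the asymmetric design: for $\overline{\Ev^+}$ one can drop to a single positive arm (hence the weaker constant $\delta/\Ndelta$ without the $K$ factor suffices), while for $\overline{\Ev^-}$ one must union-bound over all $K$ arms (hence the stronger constant $\delta/(K\Ndelta)$ is needed). The role of Proposition~\ref{prop:2} is to ensure that the $\Td{\Delta}$-fold union bound over time does not overshoot, which is exactly why $\Ndelta$ was defined as it was. The only mildly delicate point is noticing that the bound for $\overline{\Ev^-}$ does not actually use $\mu_1<\theta_L$; the hypothesis is simply the regime where this bound will later be applied in Theorem~\ref{thwelldef}.
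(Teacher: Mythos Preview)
Your proof is correct and follows essentially the same approach as the paper's: both arguments take complements via De Morgan, apply Hoeffding's inequality to each $(i,n)$ pair, and close with Proposition~\ref{prop:2}. The only cosmetic difference is that for $\overline{\Ev^+}$ the paper bounds the intersection by a $\max$ over positive arms rather than picking arm~$1$ explicitly, but this yields the identical estimate.
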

\begin{proof}
Assume that $\mu_1\geq \theta_U$.
  Using De Morgan's laws, $\overline{\Ev^+}$ can be expressed as
  \begin{align*}
    \overline{\Ev^+}=& \bigcap_{i:\mu_i\geq \theta_U}\bigcup_{n=1}^{\Td{\Delta}}\Bigl\{\UB{i}{n}< \mu_i\Bigr\}\\
    = & \bigcap_{i:\mu_i\geq \theta_U}\bigcup_{n=1}^{\Td{\Delta}}\left\{\hat{\mu}_i(n)<\mu_i-\sqrt{\frac{1}{2n}\ln\frac{\Ndelta}{\delta}}\right\}.
    \end{align*}
  So, the probability that event $\overline{\Ev^+}$ occurs is bounded by $\delta$ using Hoeffding's Inequality:
  \begin{align*}
    \P\{\overline{\Ev^+}\}\leq&\max_{i:\mu_i\geq \theta_U}\sum_{n=1}^{\Td{\Delta}}\P\left\{\hat{\mu}_i(n)<\mu_i-\sqrt{\frac{1}{2n}\ln\frac{\Ndelta}{\delta}}\right\}\\
\leq &\sum_{n=1}^{\Td{\Delta}}\frac{\delta}{\Ndelta}=\frac{\Td{\Delta}}{\Ndelta}\delta\leq \delta.
\ \ ( \text{by Proposition~\ref{prop:2}})
  \end{align*}

Assume that $\mu_1< \theta_L$.
  Using De Morgan's laws, $\overline{\Ev^-}$ can be expressed as
  \begin{align*}
    \overline{\Ev^-}=&  \bigcup_{i=1}^K\bigcup_{n=1}^{\Td{\Delta}}\Bigl\{\LB{i}{n}\geq \mu_i\Bigr\}\\
    = & \bigcup_{i=1}^K\bigcup_{n=1}^{\Td{\Delta}}\left\{\hat{\mu}_i(n)\geq \mu_i+\sqrt{\frac{1}{2n}\ln\frac{K\Ndelta}{\delta}}\right\}.
    \end{align*}
  So, the probability that event $\overline{\Ev^-}$ occurs is bounded by $\delta$ using the union bound and Hoeffding's Inequality:
  \begin{align*}
    \P\{\overline{\Ev^-}\}\leq&\sum_{i=1}^K\sum_{n=1}^{\Td{\Delta}}\P\left\{\hat{\mu}_i(n)\geq\mu_i+\sqrt{\frac{1}{2n}\ln\frac{K\Ndelta}{\delta}}\right\}\\
\leq &\sum_{i=1}^K\sum_{n=1}^{\Td{\Delta}}\frac{\delta}{K\Ndelta}=\frac{\Td{\Delta}}{\Ndelta}\delta\leq \delta.
\ \ ( \text{by Proposition~\ref{prop:2}})
  \end{align*}

\hspace*{\fill}$\Box$\end{proof}  

\noindent
{\it Proof of Theorem~\ref{thwelldef}.}\ 
By the definition of $\tau_i$, algorithm~$\baec{\ast}{\underline{\mu}}{\overline{\mu}}$ draws arm $i$ at most $\tau_i$ times,
which is upper-bounded by $\Td{\Delta}$ due to Lemma~\ref{lemUtBjbW9v}.
So, algorithm~$\baec{\ast}{\underline{\mu}}{\overline{\mu}}$ stops after at most $K\Td{\Delta}$ arm draws.

When at least one arm is positive, that is, in the case with $\mu_1\geq \theta_U$,
algorithm~$\baec{\ast}{\underline{\mu}}{\overline{\mu}}$ returns ``positive'' if event $\Ev^+$ occurs.
Thus, algorithm~$\baec{\ast}{\underline{\mu}}{\overline{\mu}}$ returns ``positive'' with probability $\P\{\Ev^+\}=1-\P\{\overline{\Ev^+}\}\geq 1-\delta$ by Lemma~\ref{lem:E*}.
When all the arms are negative, that is, in the case with $\mu_1< \theta_L$,
algorithm~$\baec{\ast}{\underline{\mu}}{\overline{\mu}}$ returns ``negative'' if event $\Ev^-$ occurs.
Thus, algorithm~$\baec{\ast}{\underline{\mu}}{\overline{\mu}}$ returns ``negative'' with probability $\P\{\Ev^-\}=1-\P\{\overline{\Ev^-}\}\geq 1-\delta$ by Lemma~\ref{lem:E*}.
\hspace*{\fill}$\Box$

\section{Proof of Theorem~\ref{theorem3}}\label{appendix:theorem3}

Consider the case that $\mu_1\geq \theta_U$ and event $\Ev^+$ occurs.
In this case,  $\bigcap_{n=1}^{\Td{\Delta}}\{\UB{i}{n}\geq \mu_i\}$ holds for some $i$ with $\mu_i\geq \theta_U$.
Assume $\Td{\Delta_i}<\tau_i$ for this $i$. Then, $\UB{i}{\Td{\Delta_i}}\geq \mu_i\geq \theta_U$ and $\LB{i}{\Td{\Delta_i}}< \theta_L$ hold. However,
  \begin{align*}
    \LB{i}{\Td{\Delta_i}}
    =& \hat{\mu}_i-\sqrt{\frac{1}{2\Td{\Delta_i}}\ln\frac{K\Ndelta}{\delta}}\\
    \geq & \mu_i -\sqrt{\frac{1}{2\Td{\Delta_i}}\ln\frac{\Ndelta}{\delta}} -\sqrt{\frac{1}{2\Td{\Delta_i}}\ln\frac{K\Ndelta}{\delta}} \ \ \text{ (by $\UB{i}{\Td{\Delta_i}}\geq \mu_i$)}\\
    = &\mu_i -\sqrt{\frac{1}{2\Td{\Delta_i}}\ln\frac{\Ndelta}{\delta}}\left(\sqrt{1+\frac{\ln K}{\ln\frac{\Ndelta}{\delta}}} +1\right)\\
    \geq & \mu_i- \sqrt{\frac{1}{2\Td{\Delta_i}}\ln\frac{\Ndelta}{\delta}}\sqrt{4+\frac{2\ln K}{\ln\frac{\Ndelta}{\delta}}}\ \ \ \text{ (by Proposition~\ref{prop:1})}\\
    =& \mu_i -\sqrt{\frac{4}{2\Td{\Delta_i}}\ln \frac{\sqrt{K}\Ndelta}{\delta}}
    \geq  \mu_i -\Delta_i = \theta_L\ \ \  \left(\text{by } \Td{\Delta_i}\geq \frac{2}{\Delta_i^2}\ln \frac{\sqrt{K}\Ndelta}{\delta}\right)\\
  \end{align*}
  holds, which contradicts the fact that $\LB{i}{\Td{\Delta_i}}< \theta_L$.
   Thus, $\tau_i\leq \Td{\Delta_i}$ holds for at least one positive arm $i$ with probability $\P\{\Ev^+\}$ which is at least $1-\delta$ by Lemma~\ref{lem:E*}. 

   Consider the case that $\mu_1< \theta_L$ holds and event $\Ev^-$ occurs. 
Assume $\Td{\Delta_i}<\tau_i$ for $i=1,\dots,K$. Then, $\UB{i}{\Td{\Delta_i}}\geq \theta_U$ and $\LB{i}{\Td{\Delta_i}}< \mu_i <\theta_L$ hold. However,
  \begin{align*}
    \UB{i}{\Td{\Delta_i}}
    =& \hat{\mu}_i+\sqrt{\frac{1}{2\Td{\Delta_i}}\ln\frac{\Ndelta}{\delta}}\\
    < & \mu_i +\sqrt{\frac{1}{2\Td{\Delta_i}}\ln\frac{\Ndelta}{\delta}} +\sqrt{\frac{1}{2\Td{\Delta_i}}\ln\frac{K\Ndelta}{\delta}} \ \ \text{ (by $\LB{i}{\Td{\Delta_i}}< \mu_i$)}\\
    = &\mu_i +\sqrt{\frac{1}{2\Td{\Delta_i}}\ln\frac{\Ndelta}{\delta}}\left(\sqrt{1+\frac{\ln K}{\ln\frac{\Ndelta}{\delta}}} +1\right)\\
    \leq & \mu_i+ \sqrt{\frac{1}{2\Td{\Delta_i}}\ln\frac{\Ndelta}{\delta}}\sqrt{4+\frac{2\ln K}{\ln\frac{\Ndelta}{\delta}}}\ \ \ \text{ (by Proposition~\ref{prop:1})}\\
    =& \mu_i +\sqrt{\frac{4}{2\Td{\Delta_i}}\ln \frac{\sqrt{K}\Ndelta}{\delta}}
    \leq  \mu_i +\Delta_i = \theta_U\ \ \  \left(\text{by } \Td{\Delta_i}\geq \frac{2}{\Delta_i^2}\ln \frac{\sqrt{K}\Ndelta}{\delta}\right)\\
  \end{align*}
    holds, which contradicts the fact that $\UB{i}{\Td{\Delta_i}}\geq \theta_U$.
 Thus, $\tau_i\leq \Td{\Delta_i}$ holds for all arms $i$ with probability $\P\{\Ev^-\}$ which is at least $1-\delta$ by Lemma~\ref{lem:E*}. 
  
\hspace*{\fill}$\Box$

\section{Proof of Lemma~\ref{lem:expectedtau}}\label{app:lem:expectedtau}

  Let $\epsilon$ be an arbitrary real that satisfies $0<\epsilon<\Delta/2(1+\alpha)$. 

  Consider the case with $\mu_i\geq \theta$. Define $n_i$ as $n_i=\frac{1}{2(\Delta_i-\epsilon)^2}\ln\frac{K\Ndelta}{\delta}$.
  Then,
  \begin{align*}
    \E[\tau_i\mathbb{1}\{\Ev\}] = & \sum_{n=1}^{\infty}n\P[\tau_i=n,\Ev]=\sum_{n=1}^{\infty}\P[\tau_i\geq n,\Ev]\\
    \leq & \sum_{n=2}^{\infty}\P[\LB{i}{n-1}<\theta_L,\Ev]+1\\
    = & \sum_{n=1}^{\infty}\P[\LB{i}{n}<\theta_L,\Ev]+1\\
    \leq  & \sum_{n=1}^{\lfloor n_i\rfloor}\P[\Ev]+\sum_{n=\lfloor n_i\rfloor+1}^{\infty}\P[\LB{i}{n}<\theta_L]+1\\
    & \Bigl(\text{because}
           \P[\LB{i}{n}<\theta_L,\Ev]\leq \min\{\P[\LB{i}{n}<\theta_L],\P[\Ev]\}\Bigr)\\
    \leq  & \P[\Ev]n_i+\sum_{n=\lfloor n_i\rfloor+1}^{\infty}\P\left[\hat{\mu}_i(n)-\sqrt{\frac{1}{2n_i}\ln\frac{K\Ndelta}{\delta}}<\theta_L\right]+1\\
    &  \left(\text{because $\sqrt{\frac{1}{2n_i}\ln\frac{K\Ndelta}{\delta}}\geq \sqrt{\frac{1}{2n}\ln\frac{K\Ndelta}{\delta}}$ for $n\geq n_i$}\right)\\    
    =  & \P[\Ev]n_i+\sum_{n=\lfloor n_i\rfloor+1}^{\infty}\P[\hat{\mu}_i(n)<\mu_i-\epsilon]+1\ \ \  \left(\text{because $\sqrt{\frac{1}{2n_i}\ln\frac{K\Ndelta}{\delta}}=\Delta_i-\epsilon$}\right)\\
    \leq  & \P[\Ev]n_i+\sum_{n=\lfloor n_i\rfloor+1}^{\infty}\e^{-2n\epsilon^2}+1  \ \ \ (\text{by Hoeffding's Inequality})\\
    \leq & \P[\Ev]n_i+\frac{1}{\e^{2\epsilon^2}-1}+1\leq\P[\Ev]n_i+\frac{1}{2\epsilon^2}+1 \ \ \ (\text{because $\e^x-1\geq x$ for any real $x$})\\
    = & \frac{\P[\Ev]}{2(\Delta_i-\epsilon)^2}\ln\frac{K\Ndelta}{\delta}+\frac{1}{2\epsilon^2}+1
  \end{align*}
  holds. Since $\frac{1}{\Delta_i^2}+\frac{6\epsilon}{\Delta_i^3}- \frac{1}{(\Delta_i-\epsilon)^2}=\frac{\epsilon(\Delta_i-2\epsilon)(4\Delta_i-3\epsilon)}{\Delta_i^3(\Delta_i-\epsilon)^2}\geq 0$ holds for $0<\epsilon\leq \frac{\Delta_i}{2}$,
$\frac{1}{(\Delta_i-\epsilon)^2}\leq \frac{1}{\Delta_i^2}+\frac{6\epsilon}{\Delta_i^3}$ holds for $0<\epsilon<\Delta/2(1+\alpha)\leq \Delta_i/2$. Thus,  Ineq.~(\ref{eq:lem1}) can be obtained by setting $\epsilon$ to  $O((\ln\frac{K\Ndelta}{\delta})^{-1/3})$.

  Next, consider the case with $\mu_i<\theta$. Define $n_i$ as $n_i=\frac{1}{2(\Delta_i-\epsilon)^2}\ln\frac{\Ndelta}{\delta}$.
  Then,
  \begin{align*}
    \E[\tau_i\mathbb{1}\{\Ev\}] = & \sum_{n=1}^{\infty}n\P[\tau_i=n,\Ev]=\sum_{n=1}^{\infty}\P[\tau_i\geq n,\Ev]\\
    \leq & \sum_{n=2}^{\infty}\P[\UB{i}{n-1}\geq\theta_U,\Ev]+1\\
    = & \sum_{n=1}^{\infty}\P[\UB{i}{n}\geq\theta_U,\Ev]+1
  \end{align*}
  holds. Similar calculation leads to Inequality~(\ref{eq:lem1(2)}).
\hspace*{\fill}$\Box$

\section{Proof of Theorem~\ref{th:apt-bc}}\label{app:th:apt-bc}

Define $\mathrm{apt}_\mathrm{P}(n,i)$ as $\mathrm{apt}_\mathrm{P}(n,i)=\sqrt{n}(\hat{\mu}_i(n)-\theta)$ for convenience.
Note that $\mathrm{APT}_\mathrm{P}(t,i)=\mathrm{apt}_\mathrm{P}(n_i(t),i)$.
Random variables $Y_i$ and $N_i(a)$ are defined as
\begin{align*}
  Y_i=& \min_{n\in\{1,\dots,\tau_i\}}\mathrm{apt}_\mathrm{P}(n,i) \text{ and }\\
  N_i(a) = & \min \left(\{n|n\in \{1,\dots,\tau_i-1\},\mathrm{apt}_\mathrm{P}(n,i)<a\}\cup\{\tau_i\}\right).
\end{align*}

To obtain an upper bound of the expected stopping time $\E[T]$ for algorithm $\baec{\mathrm{APT}_\mathrm{P}}{\underline{\mu}}{\overline{\mu}}$,
we consider the case that, for some arm $i$ with $\mu_i\geq \theta$, arm $i$ is the first arm that satisfies stopping condition and $\underline{\mu}_i(\tau_i)\geq \theta_L$, that is, the case that event $\{\hat{i}_1=i, \Ev_i^\mathrm{POS}\}$ occurs.
In the case with no such arm $i$, stopping time $T$ is upper bounded by the worst case bound $K\Td{\Delta}$ (Theorem~\ref{thwelldef}) and the decreasing order of the occurrence probability of this case as $\delta\rightarrow +0$ can be proved to be small compared to the increasing order of $K\Td{\Delta}$ (Lemma~\ref{lem:m1im+1E*} and \ref{lem:CompEventProb}), so it can be ignored asymptotically as $\delta\rightarrow +0$.
An upper bound of $\E[T\mathbb{1}\{\hat{i}_1=i, \Ev_i^\mathrm{POS}\}]$ for arm $i$ with $\mu_i\geq \theta$ is proved in Lemma~\ref{lem:i1iE*}.
When event $\{\hat{i}_1=i, \Ev_i^\mathrm{POS}\}$ occurs for arm $i$ with $\mu_i\geq \theta$,
the number of arm draws is $\tau_i$ for arm $i$, at most $N_j(Y_i)$ for arm $j\neq i$ if $Y_i\leq 0$ and at most $N_j(0)$ for arm $j\neq i$ if $Y_i>0$.
So, to prove the upper bound in Lemma~\ref{lem:i1iE*}, we upper bound $\E[\tau_i\mathbb{1}\{\hat{i}_1=i, \Ev_i^\mathrm{POS}\}]$ by Lemma~\ref{lem:expectedtau},
$\E[N_j(Y_i)\mathbb{1}\{Y_i\leq 0,\hat{i}_1=i, \Ev_i^\mathrm{POS}\}]$ for $j\neq i$ by Lemma~\ref{lem:imjm} and \ref{lem:imjm+1}
and $\E[N_j(0)\mathbb{1}\{Y_i>0,\hat{i}_1=i, \Ev_i^\mathrm{POS}\}]$ for $j\neq i$ by Lemma~\ref{lem:imj}.

\begin{lemma}\label{lem:posnja} $\baec{\mathrm{APT}_\mathrm{P}}{\underline{\mu}}{\overline{\mu}}$ satisfies
  \begin{align*}
    \sum_{n=1}^{\infty}\P[N_j(a)\geq n]< \frac{2}{\UD_j^4} \text{\ \ for } j\leq m \text{ and } a\leq 0.
  \end{align*}
\end{lemma}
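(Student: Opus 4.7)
The sum equals $\E[N_j(a)]$ by the layer-cake identity for a non-negative integer random variable. I introduce the auxiliary stopping time $n_j^\star:=\inf\{k\ge 1:\mathrm{apt}_\mathrm{P}(k,j)<a\}$ (with $\inf\emptyset=\infty$), so that by definition of $N_j(a)$ one has $N_j(a)=\min(n_j^\star,\tau_j)\le n_j^\star$ pointwise. The Hoeffding step is immediate: for $j\le m$ (so $\mu_j\ge\theta$ and $\UD_j=\mu_j-\theta$) and $a\le 0$, the event $\{\mathrm{apt}_\mathrm{P}(k,j)<a\}=\{\hat\mu_j(k)<\theta+a/\sqrt{k}\}$ forces $\hat\mu_j(k)-\mu_j<-\UD_j+a/\sqrt{k}\le -\UD_j$, because $a/\sqrt{k}\le 0$. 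Hoeffding's Inequality then yields
\[
\P[\mathrm{apt}_\mathrm{P}(k,j)<a]\le e^{-2k\UD_j^2}.
\]

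The core estimate chains this Hoeffding bound with a standard computation of a geometric series:
\[
\sum_{k\ge 1}k\cdot\P[n_j^\star=k]\le \sum_{k\ge 1}k\cdot\P[\mathrm{apt}_\mathrm{P}(k,j)<a]\le \sum_{k\ge 1}k\,e^{-2k\UD_j^2}=\frac{e^{-2\UD_j^2}}{(1-e^{-2\UD_j^2})^2}.
\]
An elementary inequality such as $1-e^{-2x}\ge \sqrt{2}\,x\,e^{-x}$ for $x\ge 0$ (verified by comparing derivatives at $x=0$ and asymptotics at $x\to\infty$), squared, gives $(1-e^{-2x})^2\ge 2x^2e^{-2x}$. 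Substituting $x=\UD_j^2$ converts the right-hand side above into $\le 1/(2\UD_j^4)<2/\UD_j^4$, which is the claimed bound.

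\textbf{Main obstacle.} The step $\E[N_j(a)]\le \sum_{k\ge 1}k\,\P[n_j^\star=k]$ is clean when $\P[n_j^\star<\infty]=1$, but on $\{n_j^\star=\infty\}$ one has $N_j(a)=\tau_j$ rather than $n_j^\star$, and the naive inequality $\E[N_j(a)]\le \E[n_j^\star]$ is vacuous since the latter can be $\infty$. The resolution is to argue that on $\{n_j^\star=\infty\}$, $\hat\mu_j$ concentrates around $\mu_j>\theta$ throughout its trajectory, so the positive stopping condition $\LB{j}{n}\ge\theta_L$ is triggered within a number of draws comparable to $1/\UD_j^4$, and the resulting contribution is absorbed into the $2/\UD_j^4$ bound. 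Making this absorption precise so that the total remains strictly below $2/\UD_j^4$ is the technically delicate part of the proof.
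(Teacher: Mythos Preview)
Your computation is essentially the paper's: both arrive at
\[
\sum_{t\ge 1}t\,\P[\mathrm{apt}_\mathrm{P}(t,j)<a]\le\sum_{t\ge 1}t\,e^{-2t\UD_j^2}=\frac{e^{-2\UD_j^2}}{(1-e^{-2\UD_j^2})^2}<\frac{2}{\UD_j^4}
\]
via Hoeffding and elementary estimates on the geometric series (the paper uses $e^{2\UD_j^2}-1\ge 2\UD_j^2$ and $e^{2\UD_j^2}<e^2$ rather than your inequality, but the endpoint is the same). The paper gets from $\sum_n\P[N_j(a)\ge n]$ to the left-hand side by writing $\P[N_j(a)\ge n]=\sum_{t\ge n}\P[N_j(a)=t]$ and then silently replacing $\P[N_j(a)=t]$ by $\P[\mathrm{apt}_\mathrm{P}(t,j)<a]$. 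That replacement fails exactly when $N_j(a)=\tau_j$ is forced by the clause $\{\tau_j\}$ in the definition rather than by $\mathrm{apt}_\mathrm{P}(\tau_j,j)<a$ --- which is precisely the obstacle you flag. So you have correctly located a gap that the paper's own proof shares.

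However, your proposed resolution cannot succeed, because the statement as written appears to be false. On the event that $\mathrm{apt}_\mathrm{P}(n,j)\ge a$ for all $n\le\tau_j-1$ one has $N_j(a)=\tau_j$, and Lemma~\ref{lemma:taulb} gives $\tau_j\ge\ndelta=\Theta(\ln(\Ndelta/\delta))$ deterministically; this diverges as $\delta\to 0$ and is not controlled by any function of $\UD_j$ alone. For $j\le m$ with $\UD_j$ large enough that $\sum_{k\ge 1}e^{-2k\UD_j^2}<1$ (e.g.\ $\UD_j>0.6$), a union bound shows $\P[\hat\mu_j(k)\ge\theta\text{ for all }k\ge 1]>0$, so with positive probability $N_j(0)=\tau_j\ge\ndelta$, and hence $\E[N_j(0)]\to\infty$ as $\delta\to 0$ while $2/\UD_j^4$ stays fixed. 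Your heuristic that the positive stopping condition fires within $O(1/\UD_j^4)$ draws is off: it needs roughly $\frac{1}{2\Delta_j^2}\ln\frac{K\Ndelta}{\delta}$ draws, and that $\ln(1/\delta)$ factor is the whole problem.

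Where the lemma is \emph{used} (Lemmas~\ref{lem:imjm} and \ref{lem:imj}, feeding Lemma~\ref{lem:i1iE*}), one is implicitly on the event $\{\hat{i}_1=i,\Ev_i^{\mathrm{POS}}\}$ with $i\neq j$, so the algorithm halts before arm $j$ reaches $\tau_j$ draws and the offending $\{\tau_j\}$ clause never binds; the actual draw count of arm $j$ is then bounded by $n_j^\star(Y_i)$ directly. A corrected argument would need to carry that conditioning through rather than state an unconditional bound on $\E[N_j(a)]$.
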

\begin{proof}
  \begin{align*}
     \sum_{n=1}^{\infty}\P[N_j(a)\geq n]
    = & \sum_{n=1}^{\infty}\sum_{t=n}^{\infty}\P[N_j(a)=t]\\
    \leq & \sum_{n=1}^{\infty}\sum_{t=n}^{\infty}\P[\mathrm{apt}_\mathrm{P}(t,j)<a]\leq \sum_{n=1}^{\infty}\sum_{t=n}^{\infty}\P[\mathrm{apt}_\mathrm{P}(t,j)<0]\\
    =& \sum_{n=1}^{\infty}\sum_{t=n}^{\infty}\P[\sqrt{t}(\hat{\mu}_j(t)-\theta)<0]\\
    =& \sum_{n=1}^{\infty}\sum_{t=n}^{\infty}\P[\hat{\mu}_j(t)<\mu_j-\UD_j]\\
    \leq & \sum_{n=1}^{\infty}\sum_{t=n}^{\infty}\e^{-2t\UD_j^2} \ \ (\text{by Hoeffding's Inequality})\\
    =& \sum_{n=1}^{\infty}\frac{\e^{-2n\UD_j^2}}{1-\e^{-2\UD_j^2}}=\frac{\e^{-2\UD_j^2}}{(1-\e^{-2\UD_j^2})^2}=\frac{\e^{2\UD_j^2}}{(\e^{2\UD_j^2}-1)^2}\\
    <& \frac{\e^2}{4\UD_j^4}< \frac{2}{\UD_j^4} \ \ (\text{because $\UD_j<1$})
  \end{align*}
  
\hspace*{\fill}$\Box$\end{proof}

\begin{lemma}\label{lem:imjm}
$\baec{\mathrm{APT}_\mathrm{P}}{\underline{\mu}}{\overline{\mu}}$ satisfies
  \begin{align*}
    \E[N_j(Y_i)\mathbb{1}\{Y_i\leq 0\}]\leq \frac{2}{\UD_j^4}\P[Y_i\leq 0] 
    \end{align*}
for $i=1,\dots,K$ and $j\leq m \ (i\neq j)$.
\end{lemma}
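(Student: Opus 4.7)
The plan is to leverage the independence of the loss sequences across arms to decouple $Y_i$ from the random function $N_j(\cdot)$. Since $Y_i$ is a measurable function of $\{X_i(n)\}_{n\geq 1}$ only (through $\tau_i$ and $\hat{\mu}_i$), whereas the entire random family $\{N_j(a)\}_{a\in\mathbb{R}}$ together with $\tau_j$ is a measurable function of $\{X_j(n)\}_{n\geq 1}$ only, the across-arm independence assumption of the problem setup guarantees that $Y_i$ is independent of the process $a\mapsto N_j(a)$.

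With this independence in hand, I would condition on $Y_i$ (equivalently, apply Fubini to the product structure) to write
\begin{align*}
\E[N_j(Y_i)\mathbb{1}\{Y_i\leq 0\}] \;=\; \int_{-\infty}^{0} \E[N_j(a)]\, dP_{Y_i}(a),
\end{align*}
where $P_{Y_i}$ denotes the law of $Y_i$ and the inner expectation now treats $a$ as a deterministic parameter. For each fixed $a\leq 0$, Lemma~\ref{lem:posnja} already supplies the uniform bound
\begin{align*}
\E[N_j(a)] \;=\; \sum_{n=1}^{\infty}\P[N_j(a)\geq n] \;<\; \frac{2}{\UD_j^4},
\end{align*}
since $j\leq m$. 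Pulling this constant outside the integral yields $\E[N_j(Y_i)\mathbb{1}\{Y_i\leq 0\}] \leq \frac{2}{\UD_j^4}\P[Y_i\leq 0]$, as claimed.

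The only non-routine point is justifying that $N_j(\cdot)$ may be treated as a deterministic function of $a$ when integrating against the law of $Y_i$; this reduces entirely to the across-arm independence of the samples, which is explicitly assumed in the problem setup. Apart from invoking Lemma~\ref{lem:posnja}, no further probabilistic estimate is needed, so I expect the proof to be brief and the main obstacle to be purely the careful bookkeeping of which randomness sits on which side of the integral.
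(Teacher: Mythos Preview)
Your proposal is correct and follows essentially the same route as the paper: condition on $Y_i$, use the across-arm independence to replace the conditional expectation by $\sum_{n\geq 1}\P[N_j(a)\geq n]$, apply Lemma~\ref{lem:posnja} uniformly in $a\leq 0$, and integrate the constant bound against the law of $Y_i$ on $(-\infty,0]$. The paper carries out exactly this computation (writing $\d\mathbb{F}_i(a)$ for your $dP_{Y_i}(a)$), though it leaves the independence step implicit, whereas you spell out why $Y_i$ and $\{N_j(a)\}_a$ live on disjoint arms.
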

\begin{proof}
  Define $\mathbb{F}_i(a)$ as $\mathbb{F}_i(a)=\P[Y_i\leq a]$. Then, 
  \begin{align*} 
    \E[N_j(Y_i)\mathbb{1}\{Y_i\leq 0\}] =& \sum_{n=1}^{\infty}n\P[N_j(Y_i)=n,Y_i\leq 0]\\
    =& \sum_{n=1}^{\infty}\P[N_j(Y_i)\geq n,Y_i\leq 0]\\
    =& \int_{-\infty}^0\sum_{n=1}^{\infty}\P[N_j(Y_i)\geq n \mid Y_i=a]\d\mathbb{F}_i(a)\\
    = & \int_{-\infty}^0\sum_{n=1}^{\infty}\P[N_j(a)\geq n]\d\mathbb{F}_i(a)\\
    \leq &\frac{2}{\UD_j^4} \int_{-\infty}^0\d\mathbb{F}_i(a) \ \ (\text{by Lemma~\ref{lem:posnja}})\\
    = &\frac{2}{\UD_j^4} [\P[Y_i\leq a]]_{-\infty}^0=\frac{2}{\UD_j^4}\P[Y_i\leq 0]
  \end{align*}
  holds.
  
\hspace*{\fill}$\Box$\end{proof}

\begin{lemma}\label{lem:apybcjp} $\baec{\mathrm{APT}_\mathrm{P}}{\underline{\mu}}{\overline{\mu}}$ satisfies
  \begin{align*}
    \sum_{n=1}^{\infty}\P[N_j(a)\geq n] < \frac{4a^2}{\UD_j^2}+\frac{4}{\UD_j^2}+1
    \end{align*}
for  $j\geq m+1$  and  $a\leq 0$. 
\end{lemma}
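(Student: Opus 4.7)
The plan is to follow the template of Lemma~\ref{lem:posnja} but modified for the case $\mu_j<\theta$, where large positive values of $\mathrm{apt}_\mathrm{P}(t,j)$ are no longer automatically rare: one can only apply Hoeffding's inequality once the deterministic shift $\sqrt{t}\,\UD_j$ overcomes $|a|$. First, as in the earlier lemma, $\{N_j(a)\ge n\}$ for $n\ge 2$ implies $\mathrm{apt}_\mathrm{P}(n-1,j)\ge a$, so I obtain
\[
\sum_{n=1}^{\infty}\P[N_j(a)\ge n]\ \le\ 1 + \sum_{t=1}^{\infty}\P\!\left[\hat{\mu}_j(t)-\mu_j\ge \UD_j + \tfrac{a}{\sqrt{t}}\right],
\]
recalling that for $j\ge m+1$ we have $\mu_j=\theta-\UD_j$.

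I would then split the remaining sum at $t_0=\lceil 4a^2/\UD_j^2\rceil$. For $t<t_0$ the probability is bounded trivially by $1$, contributing at most $t_0-1\le 4a^2/\UD_j^2$; this is where the $O(a^2)$ cost is absorbed. For $t\ge t_0$, using $a\le 0$ and $\sqrt{t}\,\UD_j\ge 2|a|$ gives $\UD_j+a/\sqrt{t}\ge \UD_j/2\ge 0$, so Hoeffding's inequality applies and yields $\P\le\exp(-2(\sqrt{t}\,\UD_j+a)^{2})\le\exp(-t\,\UD_j^{2}/2)$, where the last step uses $\sqrt{t}\,\UD_j+a\ge \sqrt{t}\,\UD_j/2$ on this range (so $(\sqrt{t}\,\UD_j+a)^{2}\ge t\,\UD_j^{2}/4$). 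The resulting geometric-type tail is bounded by $\int_{t_0-1}^{\infty}\exp(-t\,\UD_j^{2}/2)\,dt\le 2/\UD_j^{2}$, since $t_0\ge 1$ makes the exponent at the lower endpoint nonpositive.

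Putting the pieces together gives $\sum_{n=1}^{\infty}\P[N_j(a)\ge n]\le 1+4a^2/\UD_j^{2}+2/\UD_j^{2}$, which is strictly smaller than $4a^2/\UD_j^{2}+4/\UD_j^{2}+1$, so there is even a factor of two of slack in the $1/\UD_j^2$ term. The only subtle aspect is the choice of splitting point: the smaller natural threshold $\lceil a^2/\UD_j^2\rceil$ is where Hoeffding first becomes applicable, but summing $\exp(-2(\sqrt{t}\,\UD_j+a)^{2})$ directly does not telescope cleanly, whereas the factor-$4$ threshold decouples the shift from $t$ and reduces the tail to a clean geometric series at the cost of a slightly larger ``trivially bounded'' initial chunk. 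Once that choice is fixed, the remaining obstacle is just tracking constants so that everything fits under the stated right-hand side.
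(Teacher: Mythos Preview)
Your proof is correct and follows essentially the same route as the paper: both reduce $\{N_j(a)\ge n\}$ to $\{\mathrm{apt}_\mathrm{P}(n-1,j)\ge a\}$, split the resulting sum at $4a^2/\UD_j^2$, trivially bound the initial block, and apply Hoeffding with the lower bound $\UD_j+a/\sqrt{t}\ge\UD_j/2$ on the tail. The only cosmetic differences are that the paper sums the tail as a geometric series (getting $2\e^{1/2}/\UD_j^2<4/\UD_j^2$) whereas you use an integral (getting $2/\UD_j^2$), and your assertion ``$t_0\ge 1$'' fails in the edge case $a=0$, where $t_0=0$; this is harmless since the tail then starts at $t=1$ and $\int_0^\infty\exp(-t\UD_j^2/2)\,dt=2/\UD_j^2$ still works.
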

\begin{proof}
  Define $n_0$ as $n_0=\frac{4a^2}{\UD_j^2}$. Note that $\UD_j+\frac{a}{\sqrt{n}}> \frac{\UD_j}{2}$ for $n> n_0$.
  Then, 
   \begin{align*}
    \sum_{n=1}^{\infty}\P[N_j(a)\geq n] \leq & \sum_{n=1}^{\infty}\P[\mathrm{apt}_\mathrm{P}(n-1,j)\geq a]
    \leq \sum_{n=1}^{\infty}\P[\mathrm{apt}_\mathrm{P}(n,j)\geq a]+1\\=&\sum_{n=1}^{\infty}\P[\sqrt{n}(\hat{\mu}_j(n)-\theta)\geq a]+1\\
    = & \sum_{n=1}^{\infty}\P\left[\hat{\mu}_j(n)\geq \theta+\frac{a}{\sqrt{n}}\right]+1\\
    = &\sum_{n=1}^{\infty}\P\left[\hat{\mu}_j(n)\geq \mu_j+\UD_j+\frac{a}{\sqrt{n}}\right]+1\\
    \leq& \sum_{n=1}^{\lfloor n_0\rfloor}1+\sum_{n=\lfloor n_0\rfloor+1}^{\infty}\P\left[\hat{\mu}_j(n)\geq \mu_j+\UD_j+\frac{a}{\sqrt{n}}\right]+1\\
    \leq&n_0+\sum_{n=\lfloor n_0\rfloor+1}^{\infty}\e^{-2n\left(\frac{\UD_j}{2}\right)^2}+1\\
    &\!\!\!\!\!\!\!\!\! \Bigl(\text{by Hoeffding's Inequality and the fact that $\UD_j+\frac{a}{\sqrt{n}}>\frac{\UD_j}{2}$ for $n>n_0$}\Bigr)\\
    \leq &\frac{4a^2}{\UD_j^2}+\frac{\e^{-n_0\frac{\UD_j^2}{2}}}{1-\e^{-\frac{\UD_j^2}{2}}}+1=\frac{4a^2}{\UD_j^2}+\frac{\e^{\frac{\UD_j^2}{2}}}{\e^{\frac{\UD_j^2}{2}}-1}\e^{-2a^2}+1\\
    \leq &\frac{4a^2}{\UD_j^2}+\frac{2\e^{\frac{\UD_j^2}{2}}}{\UD_j^2}+1\leq \frac{4a^2}{\UD_j^2}+\frac{2\e^{\frac{1}{2}}}{\UD_j^2}+1< \frac{4a^2}{\UD_j^2}+\frac{4}{\UD_j^2}+1
   \end{align*}
   
\hspace*{\fill}$\Box$\end{proof}

\begin{lemma}\label{lem:yia}
$\baec{\mathrm{APT}_\mathrm{P}}{\underline{\mu}}{\overline{\mu}}$ satisfies
  \begin{align*}
    \P[Y_i\leq a]\leq \frac{\e^{-2a^2}}{2\UD_i^2} \text{\ \ for } i\leq m \text{ and } a\leq 0.
    \end{align*}
\end{lemma}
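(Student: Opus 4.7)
The plan is to apply a union bound over $n$ combined with Hoeffding's inequality, exploiting the sign of $a$ to simplify the resulting expression into a geometric series.

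First, I would bound $\P[Y_i \le a]$ by the union bound over the values of $n$ that realize the minimum. Since $Y_i = \min_{n \in \{1,\dots,\tau_i\}}\mathrm{apt}_\mathrm{P}(n,i)$, the event $\{Y_i \le a\}$ is contained in $\bigcup_{n=1}^{\infty}\{\sqrt{n}(\hat{\mu}_i(n)-\theta) \le a\}$, so
\[
\P[Y_i \le a] \le \sum_{n=1}^{\infty} \P\!\left[\hat{\mu}_i(n) \le \mu_i - \UD_i + \tfrac{a}{\sqrt{n}}\right],
\]
using $\theta = \mu_i - \UD_i$ for $i \le m$ (since $\mu_i \ge \theta$ means $\UD_i = \mu_i - \theta$).

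Next, since $a \le 0$ and $\UD_i > 0$, the gap $\UD_i - a/\sqrt{n}$ is strictly positive, so Hoeffding's inequality applies and gives $\P[\hat{\mu}_i(n) \le \mu_i - (\UD_i - a/\sqrt{n})] \le \exp(-2(\sqrt{n}\UD_i - a)^2)$. Expanding the square yields
\[
-2(\sqrt{n}\UD_i - a)^2 = -2n\UD_i^2 + 4\sqrt{n}\UD_i a - 2a^2 \le -2n\UD_i^2 - 2a^2,
\]
where the inequality uses $4\sqrt{n}\UD_i a \le 0$ (since $a \le 0$ and $\UD_i \ge 0$). This is the key sign observation that makes the bound factor.

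Finally, I would sum the geometric series:
\[
\P[Y_i \le a] \le e^{-2a^2}\sum_{n=1}^{\infty} e^{-2n\UD_i^2} = \frac{e^{-2a^2}}{e^{2\UD_i^2}-1} \le \frac{e^{-2a^2}}{2\UD_i^2},
\]
where the last step uses $e^{x} - 1 \ge x$ with $x = 2\UD_i^2 > 0$. There is no real obstacle; the only subtle point is tracking the sign of $a$ carefully so that the cross term $4\sqrt{n}\UD_i a$ is correctly seen to help (not hurt) the bound, and that Hoeffding's inequality is legitimately applicable for every $n \ge 1$.
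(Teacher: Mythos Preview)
Your proof is correct and follows essentially the same approach as the paper's own proof: union bound over $n$, Hoeffding's inequality, dropping the nonpositive cross term $4\sqrt{n}\,\UD_i\,a$, summing the geometric series, and finishing with $\e^{x}-1\ge x$. The only cosmetic difference is that you write the Hoeffding exponent as $-2(\sqrt{n}\,\UD_i-a)^2$ while the paper writes the equivalent $-2n(\UD_i-a/\sqrt{n})^2$.
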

\begin{proof}
  \begin{align*}
    \P[Y_i\leq a] \leq & \P\left[\bigcup_{n=1}^{\infty} \{\mathrm{apt}_\mathrm{P}(n,i)\leq a\}\right]\\
    \leq & \sum_{n=1}^{\infty}\P[\mathrm{apt}_\mathrm{P}(n,i)\leq a]\\
    = & \sum_{n=1}^{\infty}\P[\sqrt{n}(\hat{\mu}_i(n)-\theta)\leq a]\\
    = & \sum_{n=1}^{\infty}\P\left[\hat{\mu}_i(n)\leq \theta+\frac{a}{\sqrt{n}}\right]\\
    = & \sum_{n=1}^{\infty}\P\left[\hat{\mu}_i(n)\leq \mu_i-\UD_i+\frac{a}{\sqrt{n}}\right]\\
    \leq & \sum_{n=1}^{\infty} \e^{-2n\left(\UD_i-\frac{a}{\sqrt{n}}\right)^2}\\
    \leq & \e^{-2a^2}\sum_{n=1}^{\infty} \e^{-2n\UD_i^2}=\e^{-2a^2}\frac{1}{\e^{2\UD_i^2}-1}\leq \frac{\e^{-2a^2}}{2\UD_i^2}
  \end{align*}
  
\hspace*{\fill}$\Box$\end{proof}

\begin{lemma}\label{lem:imjm+1}
  For $i\leq m$ and $j\geq m+1$, $\baec{\mathrm{APT}_\mathrm{P}}{\underline{\mu}}{\overline{\mu}}$ satisfies
  \begin{align*}
    \E[N_j(Y_i)\mathbb{1}\{Y_i\leq 0\}]\leq \frac{1}{\UD_i^2\UD_j^2}+\left(\frac{4}{\UD_j^2}+1\right)\P[Y_i\leq 0].
    \end{align*}
\end{lemma}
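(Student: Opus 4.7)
The plan is to mirror the structure of the proof of Lemma~\ref{lem:imjm}, but now using Lemma~\ref{lem:apybcjp} (which applies for $j \geq m+1$) in place of Lemma~\ref{lem:posnja}, and to control the extra $a^2$ term that appears in that bound by combining it with the Gaussian-like tail estimate for $Y_i$ from Lemma~\ref{lem:yia}.

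First I would introduce $\mathbb{F}_i(a) = \P[Y_i \leq a]$ and condition on the value of $Y_i$ exactly as in Lemma~\ref{lem:imjm}, writing
\begin{align*}
\E[N_j(Y_i)\mathbb{1}\{Y_i \leq 0\}]
&= \int_{-\infty}^0 \sum_{n=1}^\infty \P[N_j(a) \geq n] \, \d\mathbb{F}_i(a).
\end{align*}
Applying Lemma~\ref{lem:apybcjp} (valid since $j \geq m+1$ and $a \leq 0$) yields
\begin{align*}
\E[N_j(Y_i)\mathbb{1}\{Y_i \leq 0\}]
&\leq \frac{4}{\UD_j^2}\,\E\!\left[Y_i^2 \mathbb{1}\{Y_i \leq 0\}\right] + \left(\frac{4}{\UD_j^2} + 1\right)\P[Y_i \leq 0].
\end{align*}
So the problem reduces to showing $\E[Y_i^2 \mathbb{1}\{Y_i \leq 0\}] \leq \frac{1}{4\UD_i^2}$.

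This is the key quantitative step and the only nontrivial piece. I would use the layer-cake / tail representation: since $\{Y_i^2 \geq t\} \cap \{Y_i \leq 0\} = \{Y_i \leq -\sqrt{t}\}$, we get
\begin{align*}
\E[Y_i^2 \mathbb{1}\{Y_i \leq 0\}] = \int_0^\infty \P[Y_i \leq -\sqrt{t}]\, \d t.
\end{align*}
Now apply Lemma~\ref{lem:yia} with $a = -\sqrt{t} \leq 0$ (which uses the hypothesis $i \leq m$) to obtain $\P[Y_i \leq -\sqrt{t}] \leq \e^{-2t}/(2\UD_i^2)$, so
\begin{align*}
\E[Y_i^2 \mathbb{1}\{Y_i \leq 0\}] \leq \frac{1}{2\UD_i^2}\int_0^\infty \e^{-2t}\, \d t = \frac{1}{4\UD_i^2}.
\end{align*}
Plugging this bound back gives $\frac{4}{\UD_j^2} \cdot \frac{1}{4\UD_i^2} = \frac{1}{\UD_i^2 \UD_j^2}$, which yields the claimed inequality.

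The main obstacle is the $a^2$ term that distinguishes this lemma from Lemma~\ref{lem:imjm}; the rest is bookkeeping identical to that lemma's proof. The tail-integral identity handles this cleanly because Lemma~\ref{lem:yia} furnishes a Gaussian-type decay in $a^2$, which is precisely fast enough to make $\int_0^\infty \e^{-2t}\,\d t$ finite and equal to $1/2$, producing the constant $1/(4\UD_i^2)$ that matches the target bound exactly.
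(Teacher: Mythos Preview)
Your proof is correct and follows essentially the same approach as the paper: condition on $Y_i$, apply Lemma~\ref{lem:apybcjp}, and then bound $\E[Y_i^2\mathbb{1}\{Y_i\leq 0\}]$ via the tail estimate of Lemma~\ref{lem:yia}. The only cosmetic difference is that the paper converts the second moment into a tail integral by integration by parts on the Stieltjes integral $\int_{-\infty}^0 a^2\,\d\mathbb{F}_i(a)$, whereas you use the layer-cake formula directly; both devices yield the same bound $\E[Y_i^2\mathbb{1}\{Y_i\leq 0\}]\leq \frac{1}{4\UD_i^2}$.
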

\begin{proof}
    Define $\mathbb{F}_i(a)$ as $\mathbb{F}_i(a)=\P[Y_i\leq a]$. Then, 
  \begin{align*}
    &\E[N_j(Y_i)\mathbb{1}\{Y_i\leq 0\}] = \sum_{n=1}^{\infty}\P[N_j(Y_i)\geq n,Y_i\leq 0]\\
    =& \int_{-\infty}^0\sum_{n=1}^{\infty}\P[N_j(Y_i)\geq n\mid Y_i=a]\d\mathbb{F}_i(a)\\
    = & \int_{-\infty}^0\sum_{n=1}^{\infty}\P[N_j(a)\geq n]\d\mathbb{F}_i(a)\\
    \leq &\int_{-\infty}^0\left(\frac{4a^2}{\UD_j^2}+\frac{4}{\UD_j^2}+1\right)\d\mathbb{F}_i(a) \ \ (\text{by Lemma~\ref{lem:apybcjp}})\\
    = &\frac{4}{\UD_j^2}\int_{-\infty}^0a^2\d\mathbb{F}_i(a)+\left(\frac{4}{\UD_j^2}+1\right)\int_{-\infty}^0\d\mathbb{F}_i(a)\\
    = &\frac{4}{\UD_j^2}\left(\left[a^2\P[Y_i\leq a]\right]_{-\infty}^0-\int_{-\infty}^02a\P[Y_i\leq a]\d a \right) +\left(\frac{4}{\UD_j^2}+1\right)\left[\P[Y_i\leq a]\right]_{-\infty}^0\\
    & \hspace{5cm} (\text{using integration by parts})\\
    = &-\frac{4}{\UD_j^2}\int_{-\infty}^02a\P[Y_i\leq a]\d a +\left(\frac{4}{\UD_j^2}+1\right)\P[Y_i\leq 0]\\
    \leq &-\frac{2}{\UD_i^2\UD_j^2}\int_{-\infty}^02a\e^{-2a^2}\d a +\left(\frac{4}{\UD_j^2}+1\right)\P[Y_i\leq 0]\ \ \ (\text{by Lemma~\ref{lem:yia}})\\
    = &\frac{2}{\UD_i^2\UD_j^2}\left[\frac{e^{-2a^2}}{2}\right]_{-\infty}^0 +\left(\frac{4}{\UD_j^2}+1\right)\P[Y_i\leq 0]\\
    = & \frac{1}{\UD_i^2\UD_j^2}+\left(\frac{4}{\UD_j^2}+1\right)\P[Y_i\leq 0]
  \end{align*}  
  
\hspace*{\fill}$\Box$\end{proof}

\begin{lemma}\label{lem:imj}
  For $i\leq m$, $\baec{\mathrm{APT}_\mathrm{P}}{\underline{\mu}}{\overline{\mu}}$ satisfies
  \begin{align*}
    \E[N_j(0)\mathbb{1}\{Y_i> 0\}]    \leq  \begin{cases}
      \frac{2}{\UD_j^4}\P[Y_i>0] & (j\leq m)\\
      \left(\frac{4}{\UD_j^2}+1\right)\P[Y_i>0] & (j\geq m+1).
      \end{cases}
    \end{align*}
\end{lemma}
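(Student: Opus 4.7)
The plan is to exploit the fact that, for $i\neq j$, the quantities $N_j(0)$ and $Y_i$ depend on completely disjoint pieces of the underlying randomness, so that $\E[N_j(0)\mathbb{1}\{Y_i>0\}]$ factors as $\E[N_j(0)]\,\P[Y_i>0]$. Once this factorization is in hand, each of the two cases follows by specializing an already-established sum to $a=0$.

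First, I would argue independence. The random variable $\mathrm{apt}_\mathrm{P}(n,j)=\sqrt{n}\bigl(\hat{\mu}_j(n)-\theta\bigr)$ depends only on the i.i.d.\ sequence $X_j(1),X_j(2),\ldots$, and the stopping time $\tau_j$ is determined by the condition $\LB{j}{n}\geq\theta_L$ or $\UB{j}{n}<\theta_U$, hence again only on arm $j$'s samples. Consequently $N_j(0)$ is measurable with respect to $\sigma(X_j(1),X_j(2),\ldots)$. Analogously, $Y_i=\min_{n\leq\tau_i}\mathrm{apt}_\mathrm{P}(n,i)$ is measurable with respect to $\sigma(X_i(1),X_i(2),\ldots)$. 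Since the arm-wise sample sequences are mutually independent by assumption, $N_j(0)$ and $\mathbb{1}\{Y_i>0\}$ are independent for $i\neq j$, giving
\[
\E[N_j(0)\mathbb{1}\{Y_i>0\}]=\P[Y_i>0]\sum_{n=1}^{\infty}\P[N_j(0)\geq n].
\]

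The two cases then reduce to evaluating the tail sum at $a=0$. For $j\leq m$, Lemma~\ref{lem:posnja} applied with $a=0$ yields $\sum_{n=1}^{\infty}\P[N_j(0)\geq n]<\tfrac{2}{\UD_j^4}$, and plugging this in gives the first inequality. For $j\geq m+1$, Lemma~\ref{lem:apybcjp} with $a=0$ gives $\sum_{n=1}^{\infty}\P[N_j(0)\geq n]<\tfrac{4}{\UD_j^2}+1$, which likewise yields the second inequality.

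The only real obstacle is the independence step: while it feels obvious given the i.i.d.\ structure, one must verify carefully that $\tau_j$ really is determined by arm $j$'s sample stream alone (not by which arms the policy happens to pull), which is what allows $N_j(0)$ to be viewed as a pure function of $X_j(1),X_j(2),\ldots$ rather than as an adapted quantity entangled with the rest of the algorithm's history. The statement is written for $i\leq m$ without explicitly restricting to $i\neq j$, but in the use within Theorem~\ref{th:apt-bc} the term indexed by $j=i$ corresponds to the stopping-time of arm $i$ itself and is already accounted for through Lemma~\ref{lem:expectedtau}, so this case need not be absorbed into the present lemma.
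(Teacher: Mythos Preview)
Your argument is correct and matches the paper's own proof essentially line for line: factor $\E[N_j(0)\mathbb{1}\{Y_i>0\}]$ via independence of $N_j(0)$ and $Y_i$, then apply Lemma~\ref{lem:posnja} for $j\leq m$ and Lemma~\ref{lem:apybcjp} for $j\geq m+1$ with $a=0$. Your additional care in justifying independence (that $\tau_j$ and hence $N_j(0)$ depend only on arm $j$'s sample stream) and in flagging the implicit $i\neq j$ restriction is more explicit than the paper, which simply asserts independence without comment.
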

\begin{proof}
  \begin{align*}
    \E[N_j(0)\mathbb{1}\{Y_i> 0\}] = & \sum_{n=1}^{\infty}\P[N_j(0)\geq n,Y_i> 0]\\
    = & \sum_{n=1}^{\infty}\P[N_j(0)\geq n]\P[Y_i> 0]\\
    & \ \ \ (\text{because $N_j(0)$ and $Y_i$ are independent})\\
    \leq & \begin{cases}
      \frac{2}{\UD_j^4}\P[Y_i>0] & (j\leq m) \ \ (\text{by Lemma~\ref{lem:posnja}})\\
      \left(\frac{4}{\UD_j^2}+1\right)\P[Y_i>0] & (j\geq m+1) \ \ (\text{by Lemma~\ref{lem:apybcjp}})
      \end{cases}
  \end{align*}
  
\hspace*{\fill}$\Box$\end{proof}

\begin{lemma}\label{lem:i1iE*}
  For $i\leq m$ and any event $\Ev$, $\baec{\mathrm{APT}_\mathrm{P}}{\underline{\mu}}{\overline{\mu}}$ satisfies
  \begin{align*}
    \E[T\mathbb{1}\{\hat{i}_1=i,\Ev\}]
  \leq & 
\begin{aligned}[t]
    & \frac{\P[\hat{i}_1=i,\Ev]}{2\Delta_i^2}\ln\frac{K\Ndelta}{\delta}+O\left(\left(\ln\frac{K\Ndelta}{\delta}\right)^{\frac{2}{3}}\right) \\
    & +\sum_{j\leq m,j\neq i}\frac{2}{\UD_j^4} +\sum_{j=m+1}^K \left\{\frac{1}{\UD_i^2\UD_j^2}+\left(\frac{4}{\UD_j^2}+1\right)\right\}.
\end{aligned}
\end{align*}
\end{lemma}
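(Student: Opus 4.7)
The plan is to decompose the stopping time on $\{\hat{i}_1=i\}$ into arm $i$'s contribution plus the contributions of every other arm, and then bound each piece with the four already-proved lemmas. Let $t_1$ denote the first time at which arm $i$ has been drawn $\tau_i$ times, so $n_i(t_1+1) = \tau_i$. On the natural subevent where arm $i$ is then judged positive the algorithm halts at $t_1$ and
\begin{align*}
T\,\mathbb{1}\{\hat{i}_1=i,\Ev\}\;\leq\;\tau_i\,\mathbb{1}\{\hat{i}_1=i,\Ev\}+\sum_{j\ne i} n_j(t_1+1)\,\mathbb{1}\{\hat{i}_1=i,\Ev\}.
\end{align*}
The first summand, after taking expectation, is dispatched by Lemma~\ref{lem:expectedtau} applied in the $\mu_i\ge\theta$ branch (valid since $i\le m$), producing the leading $\frac{\P[\hat{i}_1=i,\Ev]}{2\Delta_i^2}\ln\frac{K\Ndelta}{\delta}+O\bigl((\ln\frac{K\Ndelta}{\delta})^{2/3}\bigr)$ contribution.

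The remaining $n_j(t_1+1)$ terms are controlled via the APT$_\mathrm{P}$ selection rule. Whenever arm $j\ne i$ is selected at a time $t\le t_1$, the rule forces $\mathrm{apt}_\mathrm{P}(n_j(t),j)\ge\mathrm{apt}_\mathrm{P}(n_i(t),i)$, and by the definition of $Y_i$ the right-hand side is at least $Y_i$. Hence the counter $n_j$ cannot be advanced past the first index at which $\mathrm{apt}_\mathrm{P}(\cdot,j)$ falls below $Y_i$; this yields $n_j(t_1+1)\le N_j(Y_i)$ on $\{Y_i\le 0\}$. When instead $Y_i>0$, arm $i$'s index is strictly positive throughout, and running the same comparison with threshold $0$ in place of $Y_i$ gives $n_j(t_1+1)\le N_j(0)$. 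Taking expectations, splitting on the sign of $Y_i$, and invoking Lemma~\ref{lem:imjm} (for $j\le m,\,j\ne i$), Lemma~\ref{lem:imjm+1} (for $j\ge m+1$), and Lemma~\ref{lem:imj} (for the $Y_i>0$ pieces in both regimes), the factors $\P[Y_i\le 0]$ and $\P[Y_i>0]$ add pairwise to $1$ and collapse to the declared $\frac{2}{\UD_j^4}$ for $j\le m$, $j\ne i$ and $\frac{1}{\UD_i^2\UD_j^2}+\bigl(\frac{4}{\UD_j^2}+1\bigr)$ for $j\ge m+1$. Summing the four pieces yields the claimed inequality.

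The main obstacle is the pointwise bound $n_j(t_1+1)\le N_j(Y_i)$ (resp.\ $N_j(0)$): it has to be unwound inductively through the earliest rounds, where $n_i(t)=0$ and arm $i$'s current index equals the initialisation value $\sqrt{0}\cdot 0 = 0$ coming from $\hat{\mu}_i(0)=\theta$, so that any $\arg\max$ ties must be resolved consistently in favor of arm $i$ for the comparison to go through on $\{Y_i\le 0\}$. A secondary delicacy is the ``any event $\Ev$'' phrasing of the lemma: the clean identity $T=\tau_i+\sum_{j\ne i}n_j(t_1+1)$ is only valid when the run actually terminates at $t_1$, so the argument above establishes the claim on $\{\hat{i}_1=i,\Ev\}\cap\Ev_i^{\mathrm{POS}}$; the complementary contribution (where arm $i$ is wrongly judged negative) is absorbed through the residual worst-case term $K\Td{\Delta}(\cdots)$ that appears in the final statement of Theorem~\ref{th:apt-bc}.
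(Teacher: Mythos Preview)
Your decomposition and choice of lemmas match the paper's proof exactly: split $T$ on $\{\hat{i}_1=i\}$ into $\tau_i$ plus the other arms' counts, bound the latter by $N_j(Y_i)$ or $N_j(0)$ according to the sign of $Y_i$, and apply Lemmas~\ref{lem:expectedtau}, \ref{lem:imjm}, \ref{lem:imjm+1}, \ref{lem:imj} so that the $\P[Y_i\le 0]$ and $\P[Y_i>0]$ factors recombine.

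Your caveat about needing the run to actually terminate at $t_1$ (i.e.\ $\Ev\subseteq\Ev_i^{\mathrm{POS}}$) for the pointwise inequality $T\le\tau_i+\sum_{j\ne i}N_j(Y_i)$ is in fact more careful than the paper, which simply asserts that bound on all of $\{\hat{i}_1=i,Y_i\le 0\}$ without comment; since the lemma is only ever invoked with $\Ev=\Ev_i^{\mathrm{POS}}$ in Theorem~\ref{th:apt-bc}, nothing downstream is affected. The tie-breaking worry is unnecessary: on $\{Y_i\le 0\}$ the initialisation $\mathrm{apt}_\mathrm{P}(0,i)=0\ge Y_i$ already covers the $n_i(t)=0$ rounds, and the \emph{strict} inequality in the definition of $N_j(a)$ means that once $\mathrm{apt}_\mathrm{P}(n_j(t),j)<Y_i\le\mathrm{apt}_\mathrm{P}(n_i(t),i)$ arm $j$ cannot be selected regardless of how the $\arg\max$ is resolved.
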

\begin{proof}
  In the case that
  the stopping condition is satisfied first by one of arms $i$ with $\mu_i\geq \theta$ ($i\leq m$), that is, $\hat{i}_1=i$,
  the stopping time $T$ is at most $\tau_i+\sum_{j\neq i}N_j(Y_i)$ if $Y_i\leq 0$
  and at most $\tau_i+\sum_{j\neq i}N_j(0)$ if $Y_i>0$.
  Thus, for $i\leq m$,
\begin{align*}
  &\E[T\mathbb{1}\{\hat{i}_1=i,\Ev\}]\\
  \leq &
    \E\left[\left(\tau_i+\sum_{j\neq i}N_j(Y_i)\right)\mathbb{1}\{Y_i\leq 0,\hat{i}_1=i,\Ev\}\right]
    +\E\left[\left(\tau_i+\sum_{j\neq i}N_j(0)\right)\mathbb{1}\{Y_i> 0,\hat{i}_1=i,\Ev\}\right]\\
  =& 
  \E[\tau_i\mathbb{1}\{\hat{i}_1=i,\Ev\}]
    +\sum_{j\neq i}\E[N_j(Y_i)\mathbb{1}\{Y_i\leq 0, \hat{i}_1=i,\Ev\}]
  +\sum_{j\neq i}\E[N_j(0)\mathbb{1}\{Y_i> 0, \hat{i}_1=i,\Ev\}]\\
  \leq &
  \E[\tau_i\mathbb{1}\{\hat{i}_1=i,\Ev\}]
    +\sum_{j\neq i}\E[N_j(Y_i)\mathbb{1}\{Y_i\leq 0\}]+\sum_{j\neq i}\E[N_j(0)\mathbb{1}\{Y_i> 0\}]\\
  \leq & \begin{aligned}[t]
    & \frac{\P[\hat{i}_1=i,\Ev]}{2\Delta_i^2}\ln\frac{K\Ndelta}{\delta}+O\left(\left(\ln\frac{K\Ndelta}{\delta}\right)^{\frac{2}{3}}\right)+\sum_{j\leq m,j\neq i}\frac{2}{\UD_j^4}\P[Y_i\leq 0]\ \ \ ( \text{by Lemma~\ref{lem:expectedtau} \& \ref{lem:imjm}})\\
    & +\sum_{j=m+1}^K \left\{\frac{1}{\UD_i^2\UD_j^2}+\left(\frac{4}{\UD_j^2}+1\right)\P[Y_i\leq 0]\right\}\ \ \  ( \text{by Lemma~\ref{lem:imjm+1}})\\
    &+ \sum_{j\leq m,j\neq i}\frac{2}{\UD_j^4}\P[Y_i> 0] +\sum_{j=m+1}^K \left(\frac{4}{\UD_j^2}+1\right)\P[Y_i>0]\ \ \  ( \text{by Lemma~\ref{lem:imj}})
  \end{aligned}\\
  \leq & 
     \frac{\P[\hat{i}_1=i,\Ev]}{2\Delta_i^2}\ln\frac{K\Ndelta}{\delta}+O\left(\left(\ln\frac{K\Ndelta}{\delta}\right)^{\frac{2}{3}}\right) \\
   & +\sum_{j\leq m,j\neq i}\frac{2}{\UD_j^4} +\sum_{j=m+1}^K \left\{\frac{1}{\UD_i^2\UD_j^2}+\left(\frac{4}{\UD_j^2}+1\right)\right\}
\end{align*}
holds.
  
\hspace*{\fill}$\Box$\end{proof}  

Define $\ndelta$ as $\ndelta=\left\lceil\frac{1}{2(\max\{\theta_U,1-\theta_L\})^2}\ln \frac{\Ndelta}{\delta}\right\rceil$.
Then, $\tau_i$ for any arm $i=1,\dots,K$ is bounded by $\ndelta$ from below.

\begin{lemma}\label{lemma:taulb} In algorithm~$\baec{\ast}{\underline{\mu}}{\overline{\mu}}$, 
     $\tau_i\geq \ndelta$ holds for any arm $i=1,\dots,K$.
   \end{lemma}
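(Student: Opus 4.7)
The plan is to prove the lower bound by contradiction (or equivalently by contrapositive): I will show that for any $n < \ndelta$, neither half of the stopping condition $\LB{i}{n}\geq\theta_L$ nor $\UB{i}{n}<\theta_U$ can hold, regardless of the observed sample mean $\hat{\mu}_i(n) \in [0,1]$.

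The key observation is that $n<\ndelta$ forces the confidence half-widths to be \emph{larger} than both $\theta_U$ and $1-\theta_L$. Concretely, by the definition of $\ndelta$ as a ceiling, $n \leq \ndelta-1$ implies $n < \frac{1}{2(\max\{\theta_U,1-\theta_L\})^2}\ln\frac{\Ndelta}{\delta}$, so
\[
\sqrt{\frac{1}{2n}\ln\frac{\Ndelta}{\delta}} > \max\{\theta_U,\,1-\theta_L\}.
\]
First I would use this together with $\hat{\mu}_i(n)\geq 0$ to conclude $\UB{i}{n} \geq \sqrt{\tfrac{1}{2n}\ln\tfrac{\Ndelta}{\delta}} > \theta_U$, which rules out $\UB{i}{n}<\theta_U$. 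Second I would use $\hat{\mu}_i(n)\leq 1$ and monotonicity (note $\ln\tfrac{K\Ndelta}{\delta}\geq \ln\tfrac{\Ndelta}{\delta}$) to get
\[
\LB{i}{n} \leq 1 - \sqrt{\tfrac{1}{2n}\ln\tfrac{K\Ndelta}{\delta}} \leq 1 - \sqrt{\tfrac{1}{2n}\ln\tfrac{\Ndelta}{\delta}} < 1 - (1-\theta_L) = \theta_L,
\]
which rules out $\LB{i}{n}\geq \theta_L$. Hence the stopping condition fails at every $n<\ndelta$, so $\tau_i\geq \ndelta$.

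The argument is essentially a one-step sandwich that does not require any probabilistic reasoning, only the range $\hat{\mu}_i(n)\in[0,1]$ and the monotonicity of the square-root term in $n$. There is no real obstacle; the only minor care needed is the ceiling/strict-inequality bookkeeping when passing from $n<\ndelta$ to the strict inequality $\sqrt{\frac{1}{2n}\ln\frac{\Ndelta}{\delta}}>\max\{\theta_U,1-\theta_L\}$, and making sure to invoke $\ln\frac{K\Ndelta}{\delta}\geq \ln\frac{\Ndelta}{\delta}$ (which holds since $K\geq 2$) so that the same threshold on $n$ controls both confidence bounds simultaneously.
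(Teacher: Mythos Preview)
Your proposal is correct and is essentially the contrapositive of the paper's own argument: the paper works forward at $n=\tau_i$, using $\hat{\mu}_i(\tau_i)\ge 0$ in the $\UB{i}{\tau_i}<\theta_U$ branch and $\hat{\mu}_i(\tau_i)\le 1$ in the $\LB{i}{\tau_i}\ge\theta_L$ branch to extract $\tau_i\ge \frac{1}{2(\max\{\theta_U,1-\theta_L\})^2}\ln\frac{\Ndelta}{\delta}$, then rounds up. The ingredients (the range $[0,1]$ of $\hat{\mu}_i$, the inequality $\ln\frac{K\Ndelta}{\delta}\ge\ln\frac{\Ndelta}{\delta}$, and the ceiling bookkeeping) are identical.
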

\begin{proof}
  By the definition of $\tau_i$,
  $\UB{i}{\tau_i}<\theta_U$ or $\LB{i}{\tau_i}\geq \theta_L$ must be satisfied for any arm $i$.
  In the case with $\UB{i}{\tau_i}<\theta_U$,
  \begin{align*}
    \hat{\mu}_i(\tau_i)+\sqrt{\frac{1}{2\tau_i}\ln\frac{\Ndelta}{\delta}} <\theta_U
  \end{align*}
  holds. Since $\hat{\mu}_i(\tau_i)\geq 0$,
    \begin{align*}
    \sqrt{\frac{1}{2\tau_i}\ln\frac{\Ndelta}{\delta}} <\theta_U
  \end{align*}
    holds. So, we obtain
    \begin{align*}
      \tau_i> \frac{1}{2\theta_U^2}\ln\frac{\Ndelta}{\delta}.
    \end{align*}
  In the case with $\LB{i}{\tau_i}\geq \theta_L$,
  \begin{align*}
    \hat{\mu}_i(\tau_i)-\sqrt{\frac{1}{2\tau_i}\ln\frac{K\Ndelta}{\delta}} \geq \theta_L
  \end{align*}
  holds. Since $\hat{\mu}_i(\tau_i)\leq 1$,
    \begin{align*}
    1-\sqrt{\frac{1}{2\tau_i}\ln\frac{K\Ndelta}{\delta}} \geq \theta_L
  \end{align*}
    holds. So, we obtain
    \begin{align*}
      \tau_i\geq \frac{1}{2(1-\theta_L)^2}\ln\frac{K\Ndelta}{\delta}.
    \end{align*}
Therefore,
\begin{align*}
\tau_i\geq & \min\left\{\frac{1}{2\theta_U^2}\ln\frac{\Ndelta}{\delta},\frac{1}{2(1-\theta_L)^2}\ln\frac{K\Ndelta}{\delta}\right\}
\geq  \frac{1}{2(\max\{\theta_U,1-\theta_L\})^2}\ln \frac{\Ndelta}{\delta}
\end{align*}
holds.
Since $\tau_i$ is a natural number,
\[
\tau_i\geq \left\lceil\frac{1}{2(\max\{\theta_U,1-\theta\})^2}\ln\frac{\Ndelta}{\delta}\right\rceil
\]
holds.
\hspace*{\fill}$\Box$\end{proof}

\begin{lemma}\label{lem:yiaE*}
  $\baec{\mathrm{APT}_\mathrm{P}}{\underline{\mu}}{\overline{\mu}}$ satisfies
  \begin{align*}
    \P\left[Y_i\geq -\frac{\UD_i}{2}\sqrt{\ndelta}\right] \leq \e^{-\ndelta\frac{\UD_i^2}{2}}
    \leq \left(\frac{\delta}{\Ndelta}\right)^{\frac{1}{4}\left(\frac{\UD_i}{\max\{\theta_U,1-\theta_L\}}\right)^2} 
    \end{align*}
    for $i\geq m+1$.
  \end{lemma}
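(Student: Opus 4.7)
The plan is to leverage Lemma~\ref{lemma:taulb}, which guarantees the deterministic lower bound $\tau_i \geq \ndelta$. Because the minimum defining $Y_i$ ranges over $n \in \{1, \dots, \tau_i\}$, this lower bound ensures that $\ndelta$ always lies in the range of minimization, so that $Y_i \leq \mathrm{apt}_\mathrm{P}(\ndelta, i) = \sqrt{\ndelta}\,(\hat{\mu}_i(\ndelta) - \theta)$. This single deterministic index lets me reduce the sup-type event $\{Y_i \geq -\tfrac{\UD_i}{2}\sqrt{\ndelta}\}$ to a one-shot concentration statement, avoiding any union bound over $n$.

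Specifically, I would argue: the event $\{Y_i \geq -\tfrac{\UD_i}{2}\sqrt{\ndelta}\}$ implies $\sqrt{\ndelta}(\hat{\mu}_i(\ndelta) - \theta) \geq -\tfrac{\UD_i}{2}\sqrt{\ndelta}$, i.e.\ $\hat{\mu}_i(\ndelta) \geq \theta - \tfrac{\UD_i}{2}$. For $i \geq m+1$ we have $\mu_i < \theta$ so $\UD_i = \theta - \mu_i$, and thus $\theta - \tfrac{\UD_i}{2} = \mu_i + \tfrac{\UD_i}{2}$. Hoeffding's inequality applied to the $\ndelta$ i.i.d.\ bounded losses $X_i(1), \dots, X_i(\ndelta)$ then yields
\[
\P\left[\hat{\mu}_i(\ndelta) \geq \mu_i + \tfrac{\UD_i}{2}\right] \leq \exp\!\left(-2\ndelta\left(\tfrac{\UD_i}{2}\right)^{\!2}\right) = \e^{-\ndelta \UD_i^2/2},
\]
which establishes the first inequality.

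For the second inequality, I would simply plug in the definition $\ndelta \geq \frac{1}{2(\max\{\theta_U, 1-\theta_L\})^2}\ln\frac{\Ndelta}{\delta}$ into the exponent:
\[
-\ndelta\frac{\UD_i^2}{2} \leq -\frac{\UD_i^2}{4(\max\{\theta_U,1-\theta_L\})^2}\ln\frac{\Ndelta}{\delta} = \frac{1}{4}\!\left(\frac{\UD_i}{\max\{\theta_U,1-\theta_L\}}\right)^{\!2}\ln\frac{\delta}{\Ndelta},
\]
and exponentiate to obtain the stated power of $\delta/\Ndelta$.

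No step appears to be a serious obstacle; the only mild subtlety is justifying the first reduction despite $\tau_i$ being random. The key observation — that $\tau_i \geq \ndelta$ \emph{surely} (not just with high probability), by Lemma~\ref{lemma:taulb} — makes the inclusion $\ndelta \in \{1, \dots, \tau_i\}$ automatic on every sample path, so $Y_i \leq \mathrm{apt}_\mathrm{P}(\ndelta, i)$ pointwise, and no conditioning or measurability gymnastics are needed.
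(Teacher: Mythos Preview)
Your proposal is correct and follows essentially the same approach as the paper: use Lemma~\ref{lemma:taulb} to ensure $\ndelta\in\{1,\dots,\tau_i\}$ deterministically, deduce $Y_i\le \mathrm{apt}_\mathrm{P}(\ndelta,i)$ pointwise (the paper writes this as the inclusion $\bigcap_{n=1}^{\tau_i}\{\mathrm{apt}_\mathrm{P}(n,i)\ge a\}\subseteq\{\mathrm{apt}_\mathrm{P}(\ndelta,i)\ge a\}$), apply Hoeffding at the fixed sample size $\ndelta$, and then substitute the definition of $\ndelta$ for the second inequality.
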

\begin{proof}
  \begin{align*}
    \P\left[Y_i\geq -\frac{\UD_i}{2}\sqrt{\ndelta}\right]
    =& \P\left[\bigcap_{n=1}^{\tau_i}\left\{\mathrm{apt}_\mathrm{P}(n,i)\geq -\frac{\UD_i}{2}\sqrt{\ndelta}\right\}\right]\\
    \leq  &\P\left[\mathrm{apt}_\mathrm{P}(\ndelta,i)\geq -\frac{\UD_i}{2}\sqrt{\ndelta}\right]\ \ \text{ (by Lemma~\ref{lemma:taulb})}\\
    =&\P\left[\sqrt{\ndelta}(\hat{\mu}_i(\ndelta)-\theta)\geq -\frac{\UD_i}{2}\sqrt{\ndelta}\right]\\
    = &\P\left[\hat{\mu}_i(\ndelta)\geq \mu_i+\frac{\UD_i}{2}\right]\\
    \leq & \e^{-2\ndelta\left(\frac{\UD_i}{2}\right)^2}=\e^{-\ndelta\frac{\UD_i^2}{2}}\\
    \leq & \e^{-\frac{1}{4}\left(\frac{\UD_i}{\max\{\theta_U,1-\theta_L\}}\right)^2\ln\frac{\Ndelta}{\delta}}=\left(\frac{\delta}{\Ndelta}\right)^{\frac{1}{4}\left(\frac{\UD_i}{\max\{\theta_U,1-\theta_L\}}\right)^2}
  \end{align*}
  
\hspace*{\fill}$\Box$\end{proof}

\begin{lemma}\label{lem:m1im+1E*}
  For $m\geq 1$ and $i\geq m+1$, $\baec{\mathrm{APT}_\mathrm{P}}{\underline{\mu}}{\overline{\mu}}$ satisfies
  \begin{align*}
    \P[\hat{i}_1=i]\leq \left(1+\frac{1}{2\UD_1^2}\right)\left(\frac{\delta}{\Ndelta}\right)^{\frac{1}{4}\left(\frac{\UD_i}{\max\{\theta_U,1-\theta_L\}}\right)^2}.
    \end{align*}
\end{lemma}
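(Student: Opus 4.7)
First I would decompose $\P[\hat{i}_1=i]$ according to the value of $\mathrm{apt}_\mathrm{P}(\ndelta,i)$ and then transfer the resulting ``low index'' event for arm~$i$ into a ``low index'' event for arm~$1$ via the $\mathrm{APT}_\mathrm{P}$ selection rule; the latter event is then bounded by Lemma~\ref{lem:yia}, which applies because the hypothesis $m\ge 1$ puts arm~$1$ in the regime $\mu_1\ge\theta_U>\theta$. Specifically, writing
\[
\P[\hat{i}_1=i]\le \P\!\left[\mathrm{apt}_\mathrm{P}(\ndelta,i)\ge -\tfrac{\UD_i}{2}\sqrt{\ndelta}\right] + \P\!\left[\hat{i}_1=i,\ \mathrm{apt}_\mathrm{P}(\ndelta,i)<-\tfrac{\UD_i}{2}\sqrt{\ndelta}\right],
\]
the first summand equals $\P[\hat{\mu}_i(\ndelta)\ge \mu_i+\UD_i/2]\le \e^{-\ndelta\UD_i^2/2}\le (\delta/\Ndelta)^{\frac{1}{4}(\UD_i/\max\{\theta_U,1-\theta_L\})^2}$ by the same Hoeffding computation used in the proof of Lemma~\ref{lem:yiaE*}, accounting for the leading ``$1$'' in the prefactor $1+\tfrac{1}{2\UD_1^2}$.

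For the second summand I would exploit that under $\hat{i}_1=i$ arm~$1$ stays in $A_t$ throughout (arm~$i$ is by definition the first to satisfy its stopping condition) and that $\tau_i\ge\ndelta$ always by Lemma~\ref{lemma:taulb}. When $\tau_i\ge\ndelta+1$, let $t$ be the time of arm~$i$'s $(\ndelta{+}1)$-th pull: then $\mathrm{APT}_\mathrm{P}(t,i)=\mathrm{apt}_\mathrm{P}(\ndelta,i)<-\tfrac{\UD_i}{2}\sqrt{\ndelta}<0$, and the selection rule gives $\mathrm{APT}_\mathrm{P}(t,1)\le \mathrm{APT}_\mathrm{P}(t,i)<0$; this forces $n_1(t)\ge 1$ (else $\mathrm{APT}_\mathrm{P}(t,1)=0$), so $\mathrm{apt}_\mathrm{P}(n_1(t),1)<-\tfrac{\UD_i}{2}\sqrt{\ndelta}$ and hence $\inf_{n\ge 1}\mathrm{apt}_\mathrm{P}(n,1)<-\tfrac{\UD_i}{2}\sqrt{\ndelta}$. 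Applying Lemma~\ref{lem:yia} to arm~$1$ with $a=-\tfrac{\UD_i}{2}\sqrt{\ndelta}$ (its proof is a union bound over all $n\ge 1$ and thus bounds the infimum form equally well) then yields $\tfrac{1}{2\UD_1^2}\e^{-\ndelta\UD_i^2/2}$, which is exactly the $\tfrac{1}{2\UD_1^2}$ contribution required. Summing the two bounds produces the claim.

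The hard part is the boundary case $\tau_i=\ndelta$, in which arm~$i$ is never pulled again after its $\ndelta$-th draw and the index-comparison step cannot be applied. I would resolve it by the following structural observation: under $\hat{\mu}_i(\ndelta)<\theta-\UD_i/2$, the choice $\theta=\theta_U-\tfrac{\Delta}{1+\alpha}$ together with the definition of $\ndelta$ rules out the $\LB{i}{\ndelta}\ge\theta_L$ branch of the stopping rule (the required gap $\theta-\theta_L-\UD_i/2$ is smaller than the half-width $\sqrt{\ln(K\Ndelta/\delta)/(2\ndelta)}$ of the lower confidence interval), so $\tau_i=\ndelta$ can only arise via $\UB{i}{\ndelta}<\theta_U$; combining this extra constraint with the tail of $\hat{\mu}_i(\ndelta)$ gives a Hoeffding-type bound of the same order $\e^{-\Theta(\ndelta\UD_i^2)}$ that is absorbable into the constants of the main estimate. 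The principal remaining technical task is to carry out this absorption carefully so that the final prefactor stays exactly $1+\tfrac{1}{2\UD_1^2}$ rather than a slightly larger constant.
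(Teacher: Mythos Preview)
Your overall strategy—split on an ``index of arm $i$ is already very negative'' event, bound the ``not very negative'' piece via Hoeffding, and transfer the other piece to arm $1$ through the $\mathrm{APT}_\mathrm{P}$ selection rule and Lemma~\ref{lem:yia}—matches the paper's. The difference is the random variable you split on: you use the single value $\mathrm{apt}_\mathrm{P}(\ndelta,i)$, while the paper uses the running minimum $Y_i=\min_{1\le n\le\tau_i}\mathrm{apt}_\mathrm{P}(n,i)$. This is precisely what creates your boundary case. With $Y_i$ the decomposition is
\[
\P[\hat{i}_1=i]\le \P\!\left[Y_i\ge -\tfrac{\UD_i}{2}\sqrt{\ndelta}\right]+\P\!\left[\hat{i}_1=i,\;Y_i< -\tfrac{\UD_i}{2}\sqrt{\ndelta}\right],
\]
and the first term is still controlled by Lemma~\ref{lem:yiaE*} because $Y_i\le \mathrm{apt}_\mathrm{P}(\ndelta,i)$. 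For the second term the paper uses the inclusion $\{\hat{i}_1=i\}\subseteq\{Y_1\le Y_i\}$ and then integrates $\P[Y_1\le a]$ (bounded by Lemma~\ref{lem:yia}) against the distribution of $Y_i$ over $(-\infty,-\tfrac{\UD_i}{2}\sqrt{\ndelta})$ by parts, which collapses exactly to $\frac{1}{2\UD_1^2}\e^{-\ndelta\UD_i^2/2}$. No $\tau_i=\ndelta$ case ever appears, and the constant $1+\tfrac{1}{2\UD_1^2}$ falls out with no absorption needed.

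Your proposed resolution of the boundary case does not close the gap. Knowing that under $\hat{\mu}_i(\ndelta)<\theta-\UD_i/2$ the stop at $n=\ndelta$ must come from $\UB{i}{\ndelta}<\theta_U$ only gives a further upper bound on $\hat{\mu}_i(\ndelta)$; it does not by itself produce a $\e^{-\Theta(\ndelta\UD_i^2)}$ tail (the relevant deviation is from $\mu_i$, not from $\theta_U$), and in particular it gives no handle on the event $\hat{i}_1=i$ through arm~$1$. A quick fix in your framework is to pivot on $\mathrm{apt}_\mathrm{P}(\ndelta-1,i)$ instead (the $\ndelta$-th pull always occurs, so the comparison step goes through), but this shifts the exponent by one sample and you will not recover the exact prefactor $1+\tfrac{1}{2\UD_1^2}$. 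The $Y_i$-based argument with integration by parts is the clean route.
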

\begin{proof}
Define $\mathbb{F}_i(a)$ as $\mathbb{F}_i(a)=\P[Y_i\geq a]$. Then,
  \begin{align}
   \P[\hat{i}_1=i]
   =&\P\left[\hat{i}_1=i,Y_i\geq -\frac{\UD_i}{2}\sqrt{\ndelta}\right]
   +\P\left[\hat{i}_1=i,Y_i< -\frac{\UD_i}{2}\sqrt{\ndelta}\right]\nonumber\\
   \leq & \P\left[Y_i\geq -\frac{\UD_i}{2}\sqrt{\ndelta}\right]+\P\left[Y_1\leq Y_i,Y_i< -\frac{\UD_i}{2}\sqrt{\ndelta}\right].\label{ineq:14-1}
\end{align}
The second term is bounded as
\begin{align}
    & \P\left[Y_1\leq Y_i,Y_i< -\frac{\UD_i}{2}\sqrt{\ndelta}\right]\nonumber\\
= & \int_{-\frac{\UD_i}{2}\sqrt{\ndelta}}^{-\infty}\P[Y_1\leq Y_i\mid Y_i=a]\d \mathbb{F}_i(a)\nonumber\\
     \leq& \int_{-\frac{\UD_i}{2}\sqrt{\ndelta}}^{-\infty}\P[Y_1\leq a]\d \mathbb{F}_i(a)\nonumber\\
     \leq& \int_{-\frac{\UD_i}{2}\sqrt{\ndelta}}^{-\infty}\frac{e^{-2a^2}}{2\UD_1^2}\d \mathbb{F}_i(a) \ \ ( \text{by Lemma~\ref{lem:yia}})\nonumber\\
     =&\frac{1}{2\UD_1^2}\Biggl(\left[e^{-2a^2}\P[Y_i\geq a]\right]_{-\frac{\UD_i}{2}\sqrt{\ndelta}}^{-\infty}
     +\int_{-\frac{\UD_i}{2}\sqrt{\ndelta}}^{-\infty}4ae^{-2a^2}\P[Y_i\geq a]\d a\Biggr)\nonumber\\
    & \hspace*{6cm} (\text{using integration by parts})\nonumber\\
     \leq &\frac{1}{2\UD_1^2}\Biggl(-\e^{-\ndelta\frac{\Delta_i^2}{2}}\P\left[Y_i\geq -\frac{\UD_i}{2}\sqrt{\ndelta}\right]
     +\int_{-\frac{\UD_i}{2}\sqrt{\ndelta}}^{-\infty}4ae^{-2a^2}\d a \Biggr)\nonumber\\
     \leq &\frac{1}{2\UD_1^2}\int_{-\frac{\UD_i}{2}\sqrt{\ndelta}}^{-\infty}4ae^{-2a^2}\d a\nonumber\\     
     =&-\frac{1}{2\UD_1^2}\left[\e^{-2a^2}\right]_{-\frac{\UD_i}{2}\sqrt{\ndelta}}^{-\infty}
     =\frac{1}{2\UD_1^2}\e^{-\ndelta\frac{\UD_i^2}{2}}
     \leq\frac{1}{2\UD_1^2}\left(\frac{\delta}{\Ndelta}\right)^{\frac{1}{4}\left(\frac{\UD_i}{\max\{\theta_U,1-\theta_L\}}\right)^2}.\label{ineq:14-2}
  \end{align}
Thus, by Ineq. (\ref{ineq:14-1}), (\ref{ineq:14-2}) and Lemma~\ref{lem:yiaE*},
\[
\P[\hat{i}_1=i]\leq \left(1+\frac{1}{2\UD_1^2}\right)\left(\frac{\delta}{\Ndelta}\right)^{\frac{1}{4}\left(\frac{\UD_i}{\max\{\theta_U,1-\theta_L\}}\right)^2}
\]
holds.
\hspace*{\fill}$\Box$\end{proof}

\begin{lemma}\label{lem:CompEventProb}
  For the complementary events $\overline{\Ev_i^\mathrm{POS}}$ of event $\Ev_i^\mathrm{POS}$,
  inequality
\[
  \P\left[\overline{\Ev_i^\mathrm{POS}}\right]\leq \frac{\e^{2\UD_i^2}}{2\UD_i^2}\left(\frac{\delta}{\Ndelta}\right)^{\left(\frac{\UD_i}{\max\{\theta_U,1-\theta_L\}}\right)^2}
  \]
  holds when $i\leq m$.
\end{lemma}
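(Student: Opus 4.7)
The plan is to reduce the event $\overline{\Ev_i^\mathrm{POS}}$ to a one-sided deviation event on arm $i$'s sample mean at its stopping time $\tau_i$, and then control that deviation with Hoeffding's inequality combined with a union bound and a geometric series sum.

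First I would observe that $\overline{\Ev_i^\mathrm{POS}}$ is the event $\{\LB{i}{\tau_i}<\theta_L\}$. By the definition of $\tau_i$ as the smallest $n$ for which either $\LB{i}{n}\geq\theta_L$ or $\UB{i}{n}<\theta_U$ holds, the failure of the first condition at $n=\tau_i$ forces the second, so $\UB{i}{\tau_i}<\theta_U$ as well. Writing both bounds in terms of $U_{\tau_i}:=\sqrt{(1/(2\tau_i))\ln(\Ndelta/\delta)}$ and its scaled companion $L_{\tau_i}=\alpha U_{\tau_i}$, the two inequalities become $\hat{\mu}_i(\tau_i)<\theta_U-U_{\tau_i}$ and $\hat{\mu}_i(\tau_i)<\theta_L+\alpha U_{\tau_i}$. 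A short case split on whether $U_{\tau_i}\geq\Delta/(1+\alpha)$ or not, using the identities $\theta=\theta_U-\Delta/(1+\alpha)=\theta_L+\alpha\Delta/(1+\alpha)$, then shows that $\hat{\mu}_i(\tau_i)<\theta$ in either case. I expect this step to be the main obstacle, since it is exactly where the asymmetric calibration of $\theta$ as the ``balanced center'' between the two thresholds pays off.

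Next I combine $\tau_i\geq\ndelta$ (Lemma~\ref{lemma:taulb}) with $\tau_i\leq\Td{\Delta}$ (Theorem~\ref{lemUtBjbW9v}), and, for $i\leq m$, use that $\UD_i=\mu_i-\theta\geq 0$. A union bound over the possible values of $\tau_i$, together with Hoeffding's inequality applied for each fixed $n$, yields
\[
\P[\overline{\Ev_i^\mathrm{POS}}]\leq \P[\hat{\mu}_i(\tau_i)<\theta]\leq \sum_{n=\ndelta}^{\Td{\Delta}}\P[\hat{\mu}_i(n)<\mu_i-\UD_i]\leq \sum_{n=\ndelta}^{\infty}\e^{-2n\UD_i^2}.
\]

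Finally, I would evaluate this geometric series as $\e^{-2\ndelta\UD_i^2}/(1-\e^{-2\UD_i^2})$ and upper-bound the denominator via the elementary inequality $1-\e^{-x}\geq x\e^{-x}$, which gives $1/(1-\e^{-2\UD_i^2})\leq \e^{2\UD_i^2}/(2\UD_i^2)$. The definition of $\ndelta$ turns $\e^{-2\ndelta\UD_i^2}$ into at most $(\delta/\Ndelta)^{(\UD_i/\max\{\theta_U,1-\theta_L\})^2}$, and multiplying the two factors reproduces exactly the bound claimed in the lemma.
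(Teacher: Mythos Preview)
Your proposal is correct and follows essentially the same route as the paper's proof: reduce $\overline{\Ev_i^\mathrm{POS}}$ to $\{\hat{\mu}_i(\tau_i)<\theta\}$, invoke the deterministic bounds $\ndelta\leq\tau_i\leq\Td{\Delta}$, apply a union bound with Hoeffding, and sum the geometric series. The only cosmetic difference is in the first reduction: the paper argues the contrapositive via the ratio identity $\theta_U-\theta:\theta-\theta_L=1:\alpha$ (showing that $\hat{\mu}_i(\tau_i)\geq\theta$ together with $\UB{i}{\tau_i}<\theta_U$ forces $\LB{i}{\tau_i}\geq\theta_L$), whereas your explicit case split on $U_{\tau_i}\gtrless\Delta/(1+\alpha)$ is just an unpacked version of that same ratio argument.
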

\begin{proof}
  In the case with $\hat{\mu}_i(\tau_i)\geq\theta$, arm $i$ is judged as positive because
  $\LB{i}{\tau_i}\geq \theta_L$ holds whenever $\UB{i}{\tau_i}<\theta_U$ holds\footnote{An arm is judged as positive when both the stopping conditions are satisfied.}. This is because
  $\theta_U-\theta:\theta-\theta_L=\UB{i}{\tau_i}-\hat{\mu}_i(\tau_i):\hat{\mu}_i(\tau_i)-\LB{i}{\tau_i}=1:\alpha$ holds. Thus,
  \begin{align*}
    \P\left[\overline{\Ev_i^\mathrm{POS}}\right]\leq& \P\left[\bigcup_{n=\ndelta}^{\Td{\Delta}}\{\hat{\mu}_i(n)<\theta\}\right]\\
    =& \P\left[\bigcup_{n=\ndelta}^{\Td{\Delta}}\{\hat{\mu}_i(n)<\mu_i-\UD_i\}\right]\\
    \leq& \sum_{n=\ndelta}^{\Td{\Delta}}\P[\hat{\mu}_i(n)<\mu_i-\UD_i]\\
    \leq& \sum_{n=\ndelta}^{\infty}\e^{-2n\UD_i^2}
    = \frac{\e^{2\UD_i^2}\e^{-2\ndelta\UD_i^2}}{\e^{2\UD_i^2}-1}
    \leq \frac{\e^{2\UD_i^2}}{2\UD_i^2}\left(\frac{\delta}{\Ndelta}\right)^{\left(\frac{\UD_i}{\max\{\theta_U,1-\theta_L\}}\right)^2}
  \end{align*}  
holds.
\hspace*{\fill}$\Box$\end{proof}

\noindent
    {\it Proof of Theorem~\ref{th:apt-bc}}\ 
  \begin{align*}
    \E[T] =&\sum_{i=1}^m \E\left[T\mathbb{1}\left\{\hat{i}_1=i,\Ev_i^\mathrm{POS}\right\}\right]
     +\sum_{i=1}^m \E\left[T\mathbb{1}\left\{\hat{i}_1=i,\overline{\Ev_i^\mathrm{POS}}\right\}\right]
    + \sum_{i=m+1}^K \E[T\mathbb{1}\{\hat{i}_1=i\}]\\
  \leq &\sum_{i=1}^m\begin{aligned}[t]
    \Biggl( \frac{\P\left[\hat{i}_1=i,\Ev_i^\mathrm{POS}\right]}{2\Delta_i^2}\ln\frac{K\Ndelta}{\delta}
    &+O\left(\left(\ln\frac{K\Ndelta}{\delta}\right)^{\frac{2}{3}}\right) +\sum_{j\leq m,j\neq i}\frac{2}{\UD_j^4}\\
    &\ \ \ +\sum_{j=m+1}^K \left\{\frac{1}{\UD_i^2\UD_j^2}+\left(\frac{4}{\UD_j^2}+1\right)\right\}\Biggr)\\
    & \hspace*{3cm} ( \text{by Lemma~\ref{lem:i1iE*}})
  \end{aligned}\\
   &+K\Td{\Delta}\sum_{i=1}^m\mathbb{P}\left[\overline{\Ev_i^\mathrm{POS}}\right]
 +K\Td{\Delta}\sum_{i=m+1}^K\mathbb{P}[\hat{i}_1=i]\\
  \leq &\sum_{i=1}^m
    \Biggl(\frac{\P\left[\hat{i}_1=i,\Ev_i^\mathrm{POS}\right]}{2\Delta_i^2}\ln\frac{K\Ndelta}{\delta}\begin{aligned}[t]
    & +O\left(\left(\ln\frac{K\Ndelta}{\delta}\right)^{\frac{2}{3}}\right)+\frac{2(m-1)}{\UD_i^4}\\
    &+\left(\frac{1}{\UD_i^2}+4\right)\sum_{j=m+1}^K \frac{1}{\UD_j^2} +(K-m)\Biggr)
  \end{aligned} \\
  & \begin{aligned}[t]&+K\Td{\Delta}\Biggl(
    \frac{\e^{2\UD_i^2}}{2\UD_i^2}\sum_{i=1}^m\left(\frac{\delta}{\Ndelta}\right)^{\left(\frac{\UD_i}{\max\{\theta_U,1-\theta_L\}}\right)^2} \ \ \ \ ( \text{by Lemma~\ref{lem:CompEventProb}})\\
    &+\left(1+\frac{1}{2\UD_1^2}\right)\sum_{i=m+1}^K\left(\frac{\delta}{\Ndelta}\right)^{\frac{1}{4}\left(\frac{\UD_i}{\max\{\theta_U,1-\theta_L\}}\right)^2}\Biggr) \ \ \ \ ( \text{by Lemma~\ref{lem:m1im+1E*}})
    \end{aligned}\\
  = &\sum_{i=1}^m
    \Biggl( \frac{\P\left[\hat{i}_1=i,\Ev_i^\mathrm{POS}\right]}{2\Delta_i^2}\ln\frac{K\Ndelta}{\delta}
    +\frac{2(m-1)}{\UD_i^4}+\left(\frac{1}{\UD_i^2}+4\right)\sum_{j=m+1}^K \frac{1}{\UD_j^2}\Biggr)\\
  & +m(K-m)+O\left(m\left(\ln\frac{K\Ndelta}{\delta}\right)^{\frac{2}{3}}\right)\\
  & \begin{aligned}[t]&+K\Td{\Delta}\Biggl(
    \frac{\e^{2\UD_i^2}}{2\UD_i^2}\sum_{i=1}^m\left(\frac{\delta}{\Ndelta}\right)^{\left(\frac{\UD_i}{\max\{\theta_U,1-\theta_L\}}\right)^2}\\
    &+\left(1+\frac{1}{2\UD_1^2}\right)\sum_{i=m+1}^K\left(\frac{\delta}{\Ndelta}\right)^{\frac{1}{4}\left(\frac{\UD_i}{\max\{\theta_U,1-\theta_L\}}\right)^2}\Biggr)
    \end{aligned}\\
\end{align*}
  \hspace*{\fill}$\Box$

  \section{Proof of Theorem~\ref{th:ucb-bc}}\label{proof:ucb-bc}

  We consider event $\bigcup_{i:\mu_i=\mu_1}\Ev_i^\mathrm{POS}$, that is, the event that one of the best arm $i$ is judged as positive.
  In the case that event $\bigcup_{i:\mu_i=\mu_1}\Ev_i^\mathrm{POS}$ does not occur, 
stopping time $T$ is upper bounded by the worst case bound $K\Td{\Delta}$ (Theorem~\ref{thwelldef}) and the decreasing order of the occurrence probability of this case as $\delta\rightarrow +0$ can be proved to be small compared to the increasing order of $K\Td{\Delta}$ (Lemma~\ref{lem:CompEventProb}), so it can be ignored asymptotically as $\delta\rightarrow +0$.
When event $\bigcup_{i:\mu_i=\mu_1}\Ev_i^\mathrm{POS}$ occurs,
non-optimal arms $i$ with $\mu_i<\mu_1$ is drawn in the case of $\mu_i$'s overestimation ($\mathrm{UCB}(t,i)\geq \mu_1-\epsilon$)
or in the case of $\mu_1$'s underestimation ($\mathrm{UCB}(t,1)< \mu_1-\epsilon$).
So, $\E[T\mathbb{1}\{\bigcup_{i:\mu_i=\mu_1}\Ev_i^\mathrm{POS}]$ is upper bounded by
upper bounding $\E[\tau_i\mathbb{1}\{\bigcup_{i:\mu_i=\mu_1}\Ev_i^\mathrm{POS}\}]$ for optimal arms $i$ with $\mu_i=\mu_1$ by Lemma~\ref{lem:expectedtau}, the expected number of overestimations $\E\left[\sum_{t=1}^{K\Td{\Delta}}\mathbb{1}[\mathrm{UCB}(t,i)\geq \mu_1-\epsilon, i_t=i]\right]$
for non-optimal arms $i$ with $\mu_i<\mu_1$ by Lemma~\ref{lem:UcbOverEst},
and the expected number of underestimations $\E\left[\sum_{t=1}^{K\Td{\Delta}}\mathbb{1}[\mathrm{UCB}(t,1)< \mu_1-\epsilon]\right]$ for
the optimal arm $1$ by Lemma~\ref{lem:UcbUnderEst}.
  
  \begin{lemma}\label{lem:UcbOverEst}
    For an arbitrary $\epsilon>0$, $\baec{\mathrm{UCB}}{\underline{\mu}}{\overline{\mu}}$ satisfies
    \begin{align*}
\E\left[\sum_{t=1}^{K\Td{\Delta}}\mathbb{1}[\mathrm{UCB}(t,i)\geq \mu_1-\epsilon, i_t=i]\right]
\leq \frac{\ln K\Td{\Delta}}{2(\Delta_{1i}-2\epsilon)^2}+\frac{1}{2\epsilon^2}+1 
\end{align*}
for $i=2,...,K$ with $\mu_i<\mu_1$.
    \end{lemma}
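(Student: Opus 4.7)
The plan is the classical UCB analysis in the style of Auer, Cesa-Bianchi \& Fischer: turn the event $\{\mathrm{UCB}(t,i)\geq \mu_1-\epsilon,\,i_t=i\}$ into an event controllable by Hoeffding's inequality, using the shrinking confidence width to absorb the gap $\Delta_{1i}$ once arm $i$ has been drawn enough times.

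First I would reindex the sum by the draw count of arm $i$. Let $t_k$ denote the time at which arm $i$ is pulled for the $k$-th time, so that $n_i(t_k)=k-1$ and, since we only sum for $t\leq K\Td{\Delta}$ (and the algorithm stops within that horizon by Theorem~\ref{thwelldef}), $t_k\leq K\Td{\Delta}$. The sum on the left-hand side rewrites as $\sum_{k=1}^{\tau_i}\mathbb{1}[\mathrm{UCB}(t_k,i)\geq \mu_1-\epsilon]$. Next I would set
\[
n_0=\frac{\ln K\Td{\Delta}}{2(\Delta_{1i}-2\epsilon)^2}
\]
and split the sum according to whether $k-1\leq n_0$ or $k-1>n_0$. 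The first regime contributes at most $n_0+1$ deterministically; this count also absorbs the $k=1$ term, where $\mathrm{UCB}(t_1,i)=\infty$ by definition.

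For the second regime I would exploit $t_k\leq K\Td{\Delta}$ and $k-1>n_0$ to bound the confidence width:
\[
\sqrt{\frac{\ln t_k}{2(k-1)}}\leq \sqrt{\frac{\ln K\Td{\Delta}}{2 n_0}}=\Delta_{1i}-2\epsilon,
\]
so $\mathrm{UCB}(t_k,i)\geq \mu_1-\epsilon$ forces the empirical overestimation $\hat{\mu}_i(k-1)\geq \mu_1-\epsilon-(\Delta_{1i}-2\epsilon)=\mu_i+\epsilon$. Taking expectations and applying Hoeffding's inequality,
\[
\sum_{k-1>n_0}\P[\hat{\mu}_i(k-1)\geq \mu_i+\epsilon]\leq \sum_{m=1}^{\infty}\e^{-2m\epsilon^2}=\frac{1}{\e^{2\epsilon^2}-1}\leq \frac{1}{2\epsilon^2},
\]
using $\e^x-1\geq x$ exactly as in the proof of Lemma~\ref{lem:expectedtau}. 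Summing the two contributions yields $n_0+1+\frac{1}{2\epsilon^2}$, which is the stated bound.

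This is a standard calculation rather than a hard proof; the only small subtlety is the boundary case $n_i(t)=0$ with $\mathrm{UCB}=\infty$, which I handle by packing its contribution into the deterministic $n_0+1$ bookkeeping for small $k$. The choice of $n_0$ is forced by the identity $\sqrt{\ln K\Td{\Delta}/(2n_0)}=\Delta_{1i}-2\epsilon$, which is precisely what makes the reduction to empirical overestimation possible.
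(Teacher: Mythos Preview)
Your proposal is correct and follows essentially the same route as the paper: set the threshold $n_0=\frac{\ln K\Td{\Delta}}{2(\Delta_{1i}-2\epsilon)^2}$, bound the small-count regime trivially by $n_0+1$, and for larger counts use $t\leq K\Td{\Delta}$ to shrink the confidence width to $\Delta_{1i}-2\epsilon$, reducing to $\P[\hat{\mu}_i(n)\geq\mu_i+\epsilon]\leq\e^{-2n\epsilon^2}$ and summing to $\frac{1}{2\epsilon^2}$. The only cosmetic difference is that the paper decomposes the double sum over $t$ and $n=n_i(t)$ and collapses via the disjointness of $\{n_i(t)=n,\,i_t=i\}$ in $t$, whereas you reindex directly by the pull count $k$ with $n_i(t_k)=k-1$; both arrive at the same per-draw-count bound.
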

  \begin{proof}
    Let $n'_i=\frac{\ln K\Td{\Delta}}{2(\Delta_{1i}-2\epsilon)^2}$. Then,
    \begin{align*}
    \sum_{t=1}^{K\Td{\Delta}}\mathbb{1}[\mathrm{UCB}(t,i)\geq \mu_1-\epsilon, i_t=i]
     = &
      \sum_{t=1}^{K\Td{\Delta}}\sum_{n=0}^{K\Td{\Delta}-1}\mathbb{1}\left[\hat{\mu}_i(n) +\sqrt{\frac{\ln t}{2n}}\geq \mu_1-\epsilon, n_i(t)=n, i_t=i\right]\\
= & \sum_{n=0}^{K\Td{\Delta}-1}\mathbb{1}\left[\bigcup_{t=1}^{K\Td{\Delta}}\left\{\hat{\mu}_i(n) +\sqrt{\frac{\ln t}{2n}}\geq \mu_1-\epsilon, n_i(t)=n, i_t=i\right\}\right]\\
      \leq & \sum_{n=0}^{K\Td{\Delta}-1}\mathbb{1}\left[\hat{\mu}_i(n) +\sqrt{\frac{\ln K\Td{\Delta}}{2n}}\geq \mu_1-\epsilon\right]\\
      \leq & \sum_{n=0}^{\lfloor n'_i\rfloor}1
      + \sum_{n=\lfloor n'_i\rfloor+1}^{\infty}\mathbb{1}\left[\hat{\mu}_i(n)+\sqrt{\frac{\ln K\Td{\Delta}}{2\cdot \frac{\ln K\Td{\Delta}}{2(\Delta_{1i}-2\epsilon)^2}}}\geq \mu_1-\epsilon\right]\\
      \leq & \frac{\ln K\Td{\Delta}}{2(\Delta_{1i}-2\epsilon)^2} +1+ \sum_{n=1}^{\infty}\mathbb{1}\left[\hat{\mu}_i(n)\geq \mu_i+\epsilon\right]
    \end{align*}
    Therefore,
    \begin{align*}
    \E\left[\sum_{t=1}^{K\Td{\Delta}}\mathbb{1}[\mathrm{UCB}(t,i)\geq \mu_1-\epsilon, i_t=i]\right]
    \leq & \frac{\ln K\Td{\Delta}}{2(\Delta_{1i}-2\epsilon)^2} +1+ \sum_{n=1}^{\infty}\P\left[\hat{\mu}_i(n)\geq \mu_i+\epsilon\right]\\
    = & \frac{\ln K\Td{\Delta}}{2(\Delta_{1i}-2\epsilon)^2} +1+ \sum_{n=1}^{\infty}\e^{-2n\epsilon^2}\\
    =& \frac{\ln K\Td{\Delta}}{2(\Delta_{1i}-2\epsilon)^2} +1+ \frac{1}{\e^{2\epsilon^2}-1}\\
    \leq & \frac{\ln K\Td{\Delta}}{2(\Delta_{1i}-2\epsilon)^2} + \frac{1}{2\epsilon^2} +1.
    \end{align*}   
    
  \hspace*{\fill}$\Box$\end{proof}

  \begin{lemma}\label{lem:UcbUnderEst}
    For $\baec{\mathrm{UCB}}{\underline{\mu}}{\overline{\mu}}$, the following inequality holds.
    \[
\E\left[\sum_{t=1}^{K\Td{\Delta}}\mathbb{1}[\mathrm{UCB}(t,1)< \mu_1-\epsilon]\right]\leq \frac{1}{\epsilon^2}+\frac{1}{4\epsilon^2}\ln\frac{1}{2\epsilon^2}
    \]
    for $0<\epsilon\leq 1$.
    \end{lemma}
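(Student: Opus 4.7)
The approach is to apply Hoeffding's inequality with a union over the (random) value $n_1(t)$, and then evaluate the resulting double sum carefully so that the bound stays independent of the horizon $T=K\Td{\Delta}$.

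First, since $\mathrm{UCB}(t,1)=\infty$ whenever $n_1(t)=0$, a union bound over the possible values of $n_1(t)\in\{1,\dots,t\}$ gives
\[
\P[\mathrm{UCB}(t,1)<\mu_1-\epsilon]\le\sum_{n=1}^{t}\P\!\left[\hat{\mu}_1(n)+\sqrt{\tfrac{\ln t}{2n}}<\mu_1-\epsilon\right].
\]
Hoeffding's inequality then produces $\e^{-2n(\epsilon+\sqrt{\ln t/(2n)})^{2}}$, and expanding the square yields the crucial factorization $\frac{\e^{-2n\epsilon^{2}-2\epsilon\sqrt{2n\ln t}}}{t}$. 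The point is that the $1/t$ factor alone would lead to a $\ln T$ contribution when summed over $t$; the extra $\e^{-2\epsilon\sqrt{2n\ln t}}$ factor, produced by the $\epsilon$-gap, is what makes a horizon-free bound possible.

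Second, I extend the outer sum to $t=\infty$ (still an upper bound) and swap the order of summation. For fixed $n$, I separate the $t=1$ term (contributing $1$) and bound the rest by the integral $\int_{1}^{\infty}\frac{\e^{-2\epsilon\sqrt{2n\ln t}}}{t}\,dt$, since the integrand is decreasing in $t$. The substitutions $u=\ln t$ and then $v=2\epsilon\sqrt{2nu}$ reduce this to $\frac{1}{4n\epsilon^{2}}\int_{0}^{\infty}v\e^{-v}\,dv=\frac{1}{4n\epsilon^{2}}$, so the inner sum is at most $1+\frac{1}{4n\epsilon^{2}}$.

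Finally, summing $\sum_{n=1}^{\infty}\e^{-2n\epsilon^{2}}\!\bigl(1+\tfrac{1}{4n\epsilon^{2}}\bigr)$ splits into a geometric part, bounded by $\tfrac{1}{\e^{2\epsilon^{2}}-1}\le\tfrac{1}{2\epsilon^{2}}$ via $\e^{x}\ge 1+x$, and a harmonic-weighted part equal to $-\tfrac{1}{4\epsilon^{2}}\ln(1-\e^{-2\epsilon^{2}})$. Using the elementary inequality $1-\e^{-x}\ge x\e^{-x}$, the latter is bounded by $\tfrac{1}{2}+\tfrac{1}{4\epsilon^{2}}\ln\tfrac{1}{2\epsilon^{2}}$. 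The hypothesis $\epsilon\le 1$ lets me absorb the stray $\tfrac{1}{2}$ into $\tfrac{1}{2\epsilon^{2}}$, yielding the claimed bound $\tfrac{1}{\epsilon^{2}}+\tfrac{1}{4\epsilon^{2}}\ln\tfrac{1}{2\epsilon^{2}}$. The main obstacle is the inner-sum estimate: one has to choose the substitutions so that the integral collapses cleanly to $\int_{0}^{\infty}v\e^{-v}\,dv=1$, because it is this $\Theta(1/(n\epsilon^{2}))$ gain — together with the geometric decay $\e^{-2n\epsilon^{2}}$ — that replaces what would otherwise be a divergent $\ln T$ with the logarithmic-in-$1/\epsilon$ term.
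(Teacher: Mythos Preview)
Your proof is correct. Both your argument and the paper's reach the same intermediate expression
\[
\sum_{n=1}^{\infty}\Bigl(\e^{-2n\epsilon^{2}}+\tfrac{1}{4n\epsilon^{2}}\e^{-2n\epsilon^{2}}\Bigr)
\]
and then finish identically (geometric sum plus the power series $\sum_{n}x^{n}/n=-\ln(1-x)$, followed by $1-\e^{-x}\ge x\e^{-x}$ and absorbing the constant $\tfrac{1}{2}$ via $\epsilon\le 1$). The difference lies in how this intermediate sum is obtained. The paper keeps the exact indicator $\mathbb{1}[n_{1}(t)=n]$, rewrites the UCB condition as $t<\e^{2n(\mu_{1}-\hat{\mu}_{1}(n)-\epsilon)^{2}}$, bounds the count of such $t$ by $\e^{2n(\mu_{1}-\hat{\mu}_{1}(n)-\epsilon)^{2}}$, and then takes the expectation of this random quantity by integration by parts against the CDF $\mathbb{F}_{n}(x)=\P[\hat{\mu}_{1}(n)\le x]$, using the Hoeffding tail bound $\mathbb{F}_{n}(x)\le\e^{-2n(\mu_{1}-x)^{2}}$ inside the integral. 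You instead union-bound over $n$ immediately, apply Hoeffding to obtain the closed form $\e^{-2n\epsilon^{2}}\e^{-2\epsilon\sqrt{2n\ln t}}/t$, and then integrate in $t$ via the substitutions $u=\ln t$, $v=2\epsilon\sqrt{2nu}$ to reduce the inner sum to $\tfrac{1}{4n\epsilon^{2}}\int_{0}^{\infty}v\e^{-v}\,dv$. Your route is more elementary in that it avoids the integration-by-parts machinery and works entirely with explicit Hoeffding bounds; the paper's route is the more standard UCB regret-style computation and makes the ``count the bad $t$'s'' structure explicit. Either way the $1/(4n\epsilon^{2})$ term is the key gain that converts what would be a $\ln T$ contribution into $\ln(1/\epsilon^{2})$.
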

  \begin{proof}
    \begin{align*}
     \sum_{t=1}^{K\Td{\Delta}}\mathbb{1}[\mathrm{UCB}(t,1)<\mu_1-\epsilon] = &\sum_{t=1}^{K\Td{\Delta}}\sum_{n=0}^{K\Td{\Delta}-1}\mathbb{1}\left[\hat{\mu}_1(n)+\sqrt{\frac{\ln t}{2n}}<\mu_1-\epsilon, n_1(t)=n\right]\\
      = &\sum_{n=0}^{K\Td{\Delta}-1}\sum_{t=1}^{K\Td{\Delta}}\mathbb{1}\Bigl[t<\e^{2n(\mu_1-\hat{\mu}_1(n)-\epsilon)^2},\hat{\mu}_1(n)<\mu_1-\epsilon, n_1(t)=n\Bigr]\\
      \leq & \sum_{n=1}^{K\Td{\Delta}-1}\e^{2n(\mu_1-\hat{\mu}_1(n)-\epsilon)^2}\mathbb{1}\left[\hat{\mu}_1(n)\leq\mu_1-\epsilon\right]\\
    \end{align*}
    Define $\mathbb{F}_n(x)$ as $\mathbb{F}_n(x)=\P\{\hat{\mu}_1(n)\leq x\}$. Note that $\mathbb{F}_n(x)\leq \e^{-2n(\mu_1-x)^2}$ for $x<\mu_1$ by Hoeffding's Inequality. Then,
\begin{align*}
  &  \E\left[\sum_{t=1}^{K\Td{\Delta}}\mathbb{1}[\mathrm{UCB}(t,1)< \mu_1-\epsilon]\right]\\
  \leq & \sum_{n=1}^{K\Td{\Delta}-1}\E\left[\e^{2n(\mu_1-\hat{\mu}_1(n)-\epsilon)^2}\mathbb{1}\left[\hat{\mu}_1(n)\leq \mu_1-\epsilon\right]\right]\\
  = & \sum_{n=1}^{K\Td{\Delta}-1}\int_{-\infty}^{\mu_1-\epsilon}\e^{2n(\mu_1-x-\epsilon)^2}\d\mathbb{F}_n(x)\\
  = & \sum_{n=1}^{K\Td{\Delta}-1} \biggl(\left[\e^{2n(\mu_1-x-\epsilon)^2}\mathbb{F}_n(x)\right]_{-\infty}^{\mu_1-\epsilon}
    + \int_{-\infty}^{\mu_1-\epsilon}\!\!\!\!\!\!\!\!\!4n(\mu_1-x-\epsilon)\e^{2n(\mu_1-x-\epsilon)^2}\mathbb{F}_n(x)\d x\biggr)\\
    \leq & \sum_{n=1}^{K\Td{\Delta}-1} \biggl(\mathbb{F}_n(\mu_1-\epsilon)+ \int_{-\infty}^{\mu_1-\epsilon}\!\!\!\!\!\!4n(\mu_1-x-\epsilon)\e^{2n(\mu_1-x-\epsilon)^2}\e^{-2n(\mu_1-x)^2}\d x\biggr)\\
    \leq & \sum_{n=1}^{K\Td{\Delta}-1} \biggl(\e^{-2n\epsilon^2}+ \int_{-\infty}^{\mu_1-\epsilon}\!\!\!\!\!\!4n(\mu_1-x-\epsilon)\e^{-2n\epsilon(2\mu_1-2x-\epsilon)}\d x\biggr)\\
    = & \sum_{n=1}^{K\Td{\Delta}-1} \biggl(\e^{-2n\epsilon^2} + \frac{1}{4n\epsilon^2}\left[\{4n\epsilon(\mu_1-x-\epsilon)+1\}\e^{-2n\epsilon(2\mu_1-2x-\epsilon)}\right]_{-\infty}^{\mu_1-\epsilon}\biggr)\\
    = & \sum_{n=1}^{K\Td{\Delta}-1} \biggl(\e^{-2n\epsilon^2} + \frac{1}{4n\epsilon^2}\e^{-2n\epsilon^2}\biggr)\\
    \leq & \frac{1}{e^{2\epsilon^2}-1}+\frac{-\ln(1-e^{-2\epsilon^2})}{4\epsilon^2}\ \ \ \left(\text{because} \sum_{n=1}^{\infty}\frac{\left(\e^{-2\epsilon^2}\right)^n}{n}=-\ln\left(1-\e^{-2\epsilon^2}\right)\right)\\
    \leq &\frac{1}{2\epsilon^2}+\frac{2\epsilon^2+\ln\frac{1}{e^{2\epsilon^2}-1}}{4\epsilon^2}\\
    \leq &\frac{1}{2\epsilon^2}+\frac{1}{2}+\frac{1}{4\epsilon^2}\ln\frac{1}{2\epsilon^2}
    \leq \frac{1}{\epsilon^2}+\frac{1}{4\epsilon^2}\ln\frac{1}{2\epsilon^2}
\end{align*}
    
    \hspace*{\fill}$\Box$\end{proof}

  {\it Proof of Theorem~\ref{th:ucb-bc}}\\
Let $\epsilon$ be $0<\epsilon\leq \min_{i:\Delta_{1i}>0} \Delta_{1i}/4$.
  \begin{align*}
   \E[T] =& \E\left[T\mathbb{1}\left\{\bigcup_{i:\mu_i=\mu_1}\Ev_i^\mathrm{POS}\right\}\right]+\E\left[T\mathbb{1}\left\{\bigcap_{i:\mu_i=\mu_1}\overline{\Ev_i^\mathrm{POS}}\right\}\right]\\
    \leq&\E\left[\sum_{i:\mu_i=\mu_1}\tau_i\mathbb{1}\left\{\bigcup_{i:\mu_i=\mu_1}\Ev_i^\mathrm{POS}\right\}\right]+\E\left[\sum_{t=1}^{K\Td{\Delta}}\mathbb{1}\left\{\mu_{i_t}<\mu_1,\bigcup_{i:\mu_i=\mu_1}\Ev_i^\mathrm{POS}\right\}\right]\\&+\E\left[T\mathbb{1}\left[\bigcap_{i:\mu_i=\mu_1}\overline{\Ev_i^\mathrm{POS}}\right]\right]\\
    \leq &\sum_{i:\mu_i=\mu_1}\E\left[\tau_i\mathbb{1}\left\{\bigcup_{i:\mu_i=\mu_1}\Ev_i^\mathrm{POS}\right\}\right]\\
    &+\E\Biggl[\sum_{t=1}^{K\Td{\Delta}}\mathbb{1}\Bigl[\{\mathrm{UCB}(t,i_t)\geq \mu_1-\epsilon,\mu_{i_t}<\mu_1\}\cup\{\mathrm{UCB}(t,1)<\mu_1-\epsilon\}\Bigr]\Biggr]+\E\left[T\mathbb{1}\left\{\overline{\Ev_1^\mathrm{POS}}\right\}\right]\\
    \leq &\sum_{i:\mu_i=\mu_1}\E\left[\tau_i\mathbb{1}\left\{\bigcup_{i:\mu_i=\mu_1}\Ev_i^\mathrm{POS}\right\}\right]
    +\sum_{i:\mu_i<\mu_1}\E\left[\sum_{t=1}^{K\Td{\Delta}}\mathbb{1}\left[\mathrm{UCB}(t,i)\geq \mu_1-\epsilon,i_t=i\}\right]\right]\\
    &+\E\left[\sum_{t=1}^{K\Td{\Delta}}\mathbb{1}\left[\mathrm{UCB}(t,1)<\mu_1-\epsilon\right]\right]+\E\left[T\mathbb{1}\left\{\overline{\Ev_1^\mathrm{POS}}\right\}\right]\\
  \leq &\sum_{i:\mu_i=\mu_1}
    \left(\frac{\P\left[\bigcup_{i:\mu_i=\mu_1}\Ev_i^\mathrm{POS}\right]}{2\Delta_i^2}\ln\frac{K\Ndelta}{\delta}+O\left(\left(\ln\frac{K\Ndelta}{\delta}\right)^{\frac{2}{3}}\right) \right)\ \ \ (\text{by Lemma~\ref{lem:expectedtau}})\\
    & + \sum_{i:\mu_i<\mu_1}\left(\frac{\ln K\Td{\Delta}}{2(\Delta_{1i}-2\epsilon)^2}+\frac{1}{2\epsilon^2}+1 \right)+\frac{1}{\epsilon^2}+\frac{1}{4\epsilon^2}\ln\frac{1}{2\epsilon^2}+K\Td{\Delta}\P\left[\overline{\Ev_1^\mathrm{POS}}\right].\\
    & \hspace*{5cm} (\text{by Lemma~\ref{lem:UcbOverEst} and \ref{lem:UcbUnderEst}, and Theorem~\ref{thwelldef}})
  \end{align*}
Since $\frac{1}{\Delta_{1i}^2}+\frac{12\epsilon}{\Delta_{1i}^3}- \frac{1}{(\Delta_{1i}-2\epsilon)^2}=\frac{4\epsilon(\Delta_{1i}-4\epsilon)(2\Delta_{1i}-3\epsilon)}{\Delta_{1i}^3(\Delta_{1i}-2\epsilon)^2}\geq 0$ holds for $0<\epsilon\leq \frac{\Delta_{1i}}{4}$,
$\frac{1}{(\Delta_{1i}-2\epsilon)^2}\leq \frac{1}{\Delta_{1i}^2}+\frac{12\epsilon}{\Delta_{1i}^3}$ holds.
Thus, by setting $\epsilon$ to $O((\ln K\Td{\Delta})^{-1/3})$, we have
  \begin{align*}
    \E[T]\leq &
    \sum_{i:\mu_i=\mu_1}
    \Biggl(\frac{1}{2\Delta_i^2}\ln\frac{K\Ndelta}{\delta}+O\left(\left(\ln\frac{K\Ndelta}{\delta}\right)^{\frac{2}{3}}\right) \Biggr)+ \sum_{i:\mu_i<\mu_1}\left(\frac{\ln K\Td{\Delta}}{2\Delta_{1i}^2}+O((\ln K\Td{\Delta})^{\frac{2}{3}}) \right)\\
    &+O((\ln K\Td{\Delta})^{\frac{2}{3}}\ln\ln K\Td{\Delta})+\frac{\e^{2\UD_1^2}K\Td{\Delta}}{2\UD_1^2}\left(\frac{\delta}{\Ndelta}\right)^{\left(\frac{\UD_1}{\max\{\theta_U,1-\theta_L\}}\right)^2}.\ \ \ (\text{by Lemma~\ref{lem:CompEventProb}})
    \end{align*}
  \hspace*{\fill}$\Box$

\end{document}